\title{An Efficient Tester-Learner for Halfspaces}
\author{Aravind Gollakota\thanks{\texttt{aravindg@cs.utexas.edu}. Supported by NSF award AF-1909204 and the NSF AI Institute for Foundations of Machine Learning (IFML).} \\ UT Austin
	\and Adam R. Klivans\thanks{\texttt{klivans@cs.utexas.edu}. Supported by NSF award AF-1909204 and the NSF AI Institute for Foundations of Machine Learning (IFML).} \\
	UT Austin
	\and Konstantinos Stavropoulos\thanks{\texttt{kstavrop@cs.utexas.edu}. Supported by NSF award AF-1909204, the NSF AI Institute for Foundations of Machine Learning (IFML) and by a scholarship from Bodossaki Foundation.} \\
	UT Austin
    \and
    Arsen Vasilyan\thanks{\texttt{vasilyan@mit.edu}. Supported in part by NSF awards CCF-2006664, DMS-2022448, CCF-1565235, CCF-1955217, Big George Fellowship and Fintech@CSAIL. Work done in part while visiting UT Austin.} \\
	MIT
}
\date{March 12, 2023}
\theoremstyle{plain}
\newtheorem{theorem}{Theorem}[section]
\newtheorem{lemma}[theorem]{Lemma}
\newtheorem{proposition}[theorem]{Proposition}
\newtheorem{fact}[theorem]{Fact}
\theoremstyle{definition}
\newtheorem{definition}[theorem]{Definition}
\theoremstyle{remark}
\numberwithin{equation}{section}
\def\B{\mathcal{B}}
\def\C{\mathcal{C}}
\def\F{\mathcal{F}}
\def\G{\mathcal{G}}
\def\L{\mathcal{L}}
\def\X{\mathcal{X}}
\def\Y{\mathcal{Y}}
\def\U{\mathcal{U}}
\def\S{\mathbb{S}}
\newcommand*{\N}{{\mathbb{N}}}
\newcommand*{\Z}{{\mathbb{Z}}}
\newcommand*{\R}{{\mathbb{R}}}
\let\eps\epsilon
\let\phi\varphi
\DeclareMathOperator*{\pr}{\mathbb{P}}
\DeclareMathOperator*{\ex}{\mathbb{E}}
\DeclarePairedDelimiter{\inn}{\langle}{\rangle}
\DeclarePairedDelimiter{\norm}{\|}{\|}
\DeclareMathOperator{\poly}{poly}
\DeclareMathOperator{\polylog}{polylog}
\DeclareMathOperator{\sign}{sign}
\DeclareMathOperator{\spn}{span}
\DeclareMathOperator{\proj}{proj}
\DeclareMathOperator{\ind}{\mathbbm{1}}
\newcommand{\opt}{\mathsf{opt}}
\newcommand{\optemp}{\mathsf{o{p}t}}
\newcommand{\cube}[1]{\{\pm 1\}^{#1}}
\newcommand{\wt}[1]{\widetilde{#1}}
\newcommand{\ignore}[1]{} 
\newcommand*{\w}{\mathbf{w}}
\newcommand*{\vv}{\mathbf{v}}
\newcommand*{\wopt}{\mathbf{w}^{*}}
\newcommand*{\x}{\mathbf{x}}
\newcommand*{\z}{\mathbf{z}}
\newcommand*{\woptemp}{{\mathbf{w}}^{*}}
\newcommand*{\Djoint}{D_{\X\Y}}
\newcommand*{\Dtrue}{D_{\X}}
\newcommand*{\Dtgt}{D^*}
\newcommand*{\Djointemp}{\Djoint}
\newcommand*{\Dtrueemp}{\Dtrue}
\newcommand*{\Dgeneric}{D}
\newcommand*{\Dgauss}{\mathcal{N}}
\newcommand*{\testerone}{T_1}
\newcommand*{\testertwo}{T_2}
\newcommand*{\testerthree}{T_3}
\newcommand*{\loss}{\ell}
\begin{document}

\maketitle

\begin{abstract}%
  
We give the first efficient algorithm for learning halfspaces in the testable learning model recently defined by Rubinfeld and Vasilyan~\cite{rubinfeld2022testing}.  In this model, a learner certifies that the accuracy of its output hypothesis is near optimal whenever the training set passes an associated test, and training sets drawn from some target distribution --- e.g., the Gaussian --- must pass the test.  This model is more challenging than distribution-specific agnostic or Massart noise models where the learner is allowed to fail arbitrarily if the distributional assumption does not hold.

We consider the setting where the target distribution is Gaussian (or more generally any strongly log-concave distribution) in $d$ dimensions and the noise model is either Massart or adversarial (agnostic).  For Massart noise, our tester-learner runs in polynomial time and outputs a hypothesis with (information-theoretically optimal) error $\opt + \epsilon$ for any strongly log-concave target distribution. For adversarial noise, our tester-learner obtains error $O(\opt) + \eps$ in polynomial time when the target distribution is Gaussian; for strongly log-concave distributions, we obtain $\wt{O}(\opt) + \epsilon$ in quasipolynomial time.

Prior work on testable learning ignores the labels in the training set and checks that the empirical moments of the covariates are close to the moments of the base distribution.  Here we develop new tests of independent interest that make critical use of the labels and combine them with the moment-matching approach of \cite{gollakota2022moment}.  This enables us to  simulate a variant of the algorithm of \cite{diakonikolas2020learning, diakonikolas2020non} for learning noisy halfspaces using nonconvex SGD but in the testable learning setting.
\end{abstract}

\newpage

\section{Introduction}

Learning halfspaces in the presence of noise is one of the most basic and well-studied problems in computational learning theory. A large body of work has obtained results for this problem under a variety of different noise models and distributional assumptions (see e.g.\ \cite{balcan2021noise} for a survey). A major issue with common distributional assumptions such as Gaussianity, however, is that they can be hard or impossible to verify in the absence of any prior information. 


The recently defined model of testable learning \cite{rubinfeld2022testing} addresses this issue by replacing such assumptions with efficiently testable ones. In this model, the learner is required to work with an arbitrary input distribution $\Djoint$ and verify any assumptions it needs to succeed.
It may choose to reject a given training set, but if it accepts, it is required to output a hypothesis with error close to $\opt(\C, \Djoint)$, the optimal error achievable over $\Djoint$ by any function in a concept class $\C$. Further, whenever the training set is drawn from a distribution $\Djoint$ whose marginal is truly a well-behaved target distribution $\Dtgt$ (such as the standard Gaussian), the algorithm is required to accept with high probability. Such an algorithm, or tester-learner, is then said to testably learn $\C$ with respect to target marginal $\Dtgt$. (See Definition \ref{definition:testable_learning} for a formal definition.)  Note that unlike ordinary distribution-specific agnostic learners, a tester-learner must take some nontrivial action {\em regardless} of the input distribution.

The work of \cite{rubinfeld2022testing,gollakota2022moment} established foundational algorithmic and statistical results for this model and showed that testable learning is in general provably harder than ordinary distribution-specific agnostic learning. As one of their main algorithmic results, they showed tester-learners for the class of halfspaces over $\R^d$ that succeed whenever the target marginal is Gaussian (or one of a more general class of distributions), achieving error $\opt + \eps$ in time and sample complexity $d^{\wt{O}(1/\eps^2)}$. This matches the running time of ordinary distribution-specific agnostic learning of halfspaces over the Gaussian using the standard approach of \cite{kalai2008agnostically}. Their testers are simple and label-oblivious, and are based on checking whether the low-degree empirical moments of the unknown marginal match those of the target $\Dtgt$.

These works essentially resolve the question of designing tester-learners achieving error $\opt + \eps$ for halfspaces, matching known hardness results for (ordinary) agnostic learning \cite{goel2020statistical,diakonikolas2020near,diakonikolas2021optimality}. Their running time, however, 
necessarily scales exponentially in $1/\eps$.

A long line of research has sought to obtain more efficient algorithms at the cost of relaxing the optimality guarantee \cite{awasthi2017power,diakonikolas2018learning,diakonikolas2020learning,diakonikolas2020non}.  These works give polynomial-time algorithms achieving bounds of the form $\opt + \epsilon$ and $O(\opt) + \epsilon$ for the Massart and agnostic setting respectively under structured distributions (see \cref{subsec:related} for more discussion).
The main question we consider here is whether such guarantees can be obtained in the testable learning 
framework.

\paragraph{Our contributions}


In this work we design the first tester-learners for halfspaces that run in fully polynomial time in all parameters. We match the optimality guarantees of fully polynomial-time learning algorithms under Gaussian marginals for the Massart noise model (where the labels arise from a halfspace but are flipped by an adversary with probability at most $\eta$) as well as for the agnostic model (where the labels can be completely arbitrary). In fact, for the Massart setting our guarantee holds with respect to any  chosen target marginal $\Dtgt$ that is isotropic and strongly log-concave, and the same is true of the agnostic setting albeit with a slightly weaker guarantee.

\begin{theorem}[Formally stated as \cref{theorem:massart}]
    Let $\C$ be the class of origin-centered halfspaces over $\R^d$, and let $\Dtgt$ be any isotropic strongly log-concave distribution. In the setting where the labels are corrupted with Massart noise at rate at most $\eta < \frac{1}{2}$, $\C$ can be testably learned w.r.t.\ $\Dtgt$ up to error $\opt + \eps$  using $\poly(d, \frac{1}{\epsilon}, \frac{1}{1 - 2\eta})$ time and sample complexity.
\end{theorem}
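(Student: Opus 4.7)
The plan is to simulate the nonconvex SGD algorithm of \cite{diakonikolas2020learning} for Massart halfspaces in the testable framework, combining the moment-matching tests of \cite{gollakota2022moment} with new tests that crucially use the labels. Under a strongly log-concave $\Dtgt$ with Massart noise at rate $\eta$, the DKTZ algorithm minimizes a non-convex surrogate loss $L(w)$ whose population gradient at any iterate $w$ misaligned with the optimal halfspace normal $\wopt$ has a component in the direction of $\wopt$ of magnitude $\Omega((1-2\eta)\,\theta(w,\wopt))$, where $\theta$ denotes the angle. The task of the tester-learner is to certify, via efficiently checkable conditions, that this descent property persists on the unknown input distribution $\Djoint$.

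First, I would apply a moment-matching test to the covariate marginal $\Dtrue$, verifying that low-degree empirical moments of $\Dtrue$ match those of $\Dtgt$ to within a small error. Combined with the fact that strongly log-concave densities are well approximated by the restriction to a ball of polynomial radius, this certifies (as in \cite{gollakota2022moment}) that $\ex_{\Dtrue}[p(\x)] \approx \ex_{\Dtgt}[p(\x)]$ for every polynomial $p$ in a relevant family. Second, I would design a label-aware test: for each iterate $w_t$ visited by SGD, check that label-weighted statistics of the form $\ex_{\Djoint}[y\cdot \phi(w_t\cdot \x)\,\x]$ lie in the range consistent with Massart-corrupted halfspace data, where $\phi$ is a low-degree polynomial approximation of the relevant component of the loss derivative (obtained by Jackson-style approximation of a smoothed sign / LeakyReLU derivative).

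Given a sample that passes both tests, the algorithm runs projected SGD on $L$ for $\poly(d,1/\eps,1/(1-2\eta))$ steps. The convergence analysis proceeds by (i) replacing each component of $\nabla L(w)$ by its low-degree polynomial approximation, (ii) using moment matching to transfer $\Dtrue$-expectations to $\Dtgt$-expectations for these polynomials, (iii) using the label-aware test to control the label-weighted parts, and (iv) concluding that the DKTZ descent inequality continues to hold on $\Djoint$, so SGD reaches a point within angle $O(\eps)$ of $\wopt$, which translates to error at most $\opt+\eps$ under Massart noise via the standard relation between angular distance and disagreement under a strongly log-concave marginal.

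The hard part will be designing the label-aware test so that it is simultaneously \emph{complete} (satisfied with high probability by Massart-corrupted samples over $\Dtgt$) and \emph{sound} (passing it together with the moment test implies the DKTZ descent property on $\Djoint$). Completeness should reduce to standard concentration of low-degree polynomial statistics under $\Dtgt$. Soundness is the real obstacle: the polynomial family and thresholds must be chosen so that control on $\ex_{\Djoint}[y\cdot p(\x)]$ for every $p$ in the family, combined with the moment-matching guarantee on $\Dtrue$, forces the label-weighted portion of the gradient at \emph{every} SGD iterate to behave as if generated by a genuine Massart-corrupted halfspace over $\Dtgt$. I expect this to require an $\eps$-net over candidate directions together with a union bound, and a careful choice of the smoothing of the loss derivative so that the polynomial approximation error is negligible relative to the $\Omega((1-2\eta)\,\theta(w,\wopt))$ descent margin.
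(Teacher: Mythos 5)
Your high-level plan---simulate DKTZ-style nonconvex SGD, use moment-matching tests, and certify that stationary points are near-optimal---is in the right spirit, but the specific mechanism you propose diverges from what works, and the divergence creates real gaps.

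First, the paper does \emph{not} test label-weighted statistics, and I think your proposed label-aware test cannot be made simultaneously sound and complete. You propose to check that $\ex_{\Djoint}[y\cdot\phi(w_t\cdot\x)\,\x]$ "lies in the range consistent with Massart-corrupted halfspace data." But the learner does not know $\wopt$, and the set of label-weighted statistics achievable by \emph{some} Massart-corrupted halfspace over \emph{some} marginal close to $\Dtgt$ is large; a test permissive enough to pass on genuine Massart data will not pin down the descent inequality. The paper sidesteps this entirely: the Massart condition $\ex_{y\mid\x}[1-2\ind\{y\ne\sign\inn{\wopt,\x}\}]\ge 1-2\eta>0$ is a \emph{pointwise} statement about the conditional label distribution that holds for any covariate marginal, so no label statistic needs to be verified. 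What needs to be verified is a purely geometric property of the marginal, and the role of the labels is only indirect---they shape the SGD trajectory, and the tests are applied to bands around the resulting candidate vectors. In Lemma~\ref{lemma:stationary_points_suffice_massart} the quantities $A_1, A_2$ controlling $\|\nabla\L_\sigma(\w)\|$ are lower- and upper-bounded using only $\pr[\x\in\U_1]$, $\pr[\x\in\U_2]$, and the conditional probabilities $\pr[|\x_\vv|\in[\cdot,\cdot]\mid\x\in\U_i]$, all of which are label-free.

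Second, the polynomial-approximation route for the loss derivative runs into the efficiency wall that motivates this paper. Jackson-type approximation of a smoothed sign to accuracy $\tau$ requires degree $\mathrm{poly}(1/\tau)$, and matching empirical moments of $\Dtrue$ to $\Dtgt$ up to degree $k$ costs $d^{\Theta(k)}$ time. Since the gradient margin is $\Omega((1-2\eta)\sigma)$ with $\sigma$ going to zero like $\eps$, the accuracy $\tau$ you need is polynomially small, so this approach is exactly the $d^{\mathrm{poly}(1/\eps)}$-time regime of \cite{rubinfeld2022testing,gollakota2022moment}, not $\mathrm{poly}(d,1/\eps)$. The key idea you are missing is \emph{localization}: the critical regions in $\spn(\w,\wopt)$ are rectangles axis-aligned with the known candidate $\w$, of the form $\{|\inn{\w,\x}|\le\sigma\}\cap\{\inn{\vv,\x}\in[\alpha,\beta]\}$. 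The band probability $\pr[|\inn{\w,\x}|\le\sigma]$ can be estimated multiplicatively by direct counting (Proposition~\ref{tester2_band}), and then one only needs $\pr[\inn{\vv,\x}\in[\alpha,\beta]\mid |\inn{\w,\x}|\le\sigma]$ to within an additive \emph{constant}---which is achievable by matching $O(1)$-degree moments of the \emph{conditional} distribution on the band against $\Dtgt$ restricted to the band (Propositions~\ref{prop: moments in band close means we good}, \ref{tester3_truncated}). That conversion of weak additive guarantees into effective multiplicative ones is what makes the whole thing polynomial time, and it is absent from your plan. Relatedly, the $\eps$-net over directions you mention would also blow up the running time; it suffices to test only the $\mathrm{poly}(d,1/\eps,1/(1-2\eta))$-many SGD iterates. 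Finally, the paper uses a piecewise-cubic ramp whose derivative vanishes outside $[-\sigma/2,\sigma/2]$ (Proposition~\ref{proposition:activation}) precisely so that the gradient is supported on a band and this localization analysis goes through cleanly---a logistic/LeakyReLU smoothing would not give you that.
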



\begin{theorem}[Formally stated as \cref{theorem:agnostic}]
    Let $\C$ be as above. In the adversarial noise or agnostic setting where the labels are completely arbitrary, $\C$ can be testably learned w.r.t.\ $\Dgauss(0, I_d)$ up to error $O(\opt) + \eps$  using $\poly(d, \frac{1}{\epsilon})$ time and sample complexity.

    Moreover, if $\Dtgt$ is a general strongly log-concave distribution, we can obtain error $\wt{O}(\opt) + \eps$ in quasipolynomial time and sample complexity. 
\end{theorem}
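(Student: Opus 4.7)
The plan is to testably simulate the projected nonconvex SGD algorithm of \cite{diakonikolas2020learning, diakonikolas2020non} for agnostically learning halfspaces. That algorithm minimizes a smooth ramp-like surrogate $\Lramp(\w) = \ex_{\Djoint}[\sigma(-y\inn{\w,\x})]$ over the unit sphere; the crux of its analysis is that under an isotropic Gaussian marginal, at any iterate $\w$ whose zero-one error exceeds $C\cdot\opt + \eps$, the population gradient has nontrivial correlation with $\w - \wopt$, so SGD converges to a hypothesis of error $O(\opt)+\eps$ in $\poly(d, 1/\eps)$ steps. To lift this to the testable setting, I would run the same SGD on the empirical distribution but gate every iterate by two testers, rejecting the training set if either one fails along the trajectory.

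The first tester is the label-oblivious moment-matching test of \cite{gollakota2022moment}: it verifies that the low-degree moments of the empirical marginal match those of $\Dtgt$. As shown there, this alone certifies every aspect of the DKM analysis that touches only the covariates, including sub-Gaussian tails of $\inn{\vv, \x}$ uniformly in $\vv$, sufficient anti-concentration near the current separator to control the ramp's transition, and accuracy of a low-degree polynomial surrogate for $\sigma'$. For Gaussian $\Dtgt$ the required degree is $\poly(1/\eps)$, giving polynomial overall complexity; for isotropic strongly log-concave $\Dtgt$ it is $\polylog(1/\eps)$-degree, which is exactly the source of the quasipolynomial blowup. The second tester is new and label-aware: at every iterate $\w$ it certifies a bound capturing the adversary's contribution to the gradient, of the schematic form
\[
\sup_{\|\vv\|=1}\,\ex_{\Djoint}\bigl[\ind\{y\neq\sign(\inn{\w,\x})\}\cdot f_{\w}(\inn{\w,\x})\cdot\inn{\vv,\x}\bigr]\;\le\;O(\opt)+\eps,
\]
where $f_{\w}$ is the ramp-derivative weight, which is localized in a thin band around the current separator. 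The supremum over $\vv$ can be handled either by a spectral relaxation or by a polynomial-size $\eps$-net on the sphere, both of which run in $\poly(d,1/\eps)$ time.

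For completeness, when $\Djoint$ has marginal $\Dtgt$, the moment test accepts by \cite{gollakota2022moment}, and the label-aware test accepts because $f_{\w}$ is supported in a margin band of $\Dtgt$-measure $O(\eps)$, while the disagreement indicator contributes at most $\opt$ mass on top of the $\Dtgt$-measure of the symmetric difference between $\w$ and $\wopt$; combining these with Cauchy-Schwarz and the isotropy of $\Dtgt$ yields the claimed bound throughout the SGD trajectory. For soundness, on any distribution accepted by both testers, the DKM gradient-correlation lemma transfers verbatim to the empirical distribution step by step: the moment test replaces Gaussianity of the covariates, and the label-aware test replaces the Cauchy-Schwarz step bounding the adversary. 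Consequently the final iterate achieves error $O(\opt)+\eps$, regardless of how far $\Djoint$ is from the idealized target.

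The main obstacle is the design and analysis of the label-aware tester. Its output quantity is intrinsically coupled to the unknown $\wopt$, and the test must simultaneously (i) pass on every genuine $\Dtgt$-marginal no matter how adversarial the labels, (ii) be efficiently checkable over all unit directions $\vv$, and (iii) be strong enough that, combined with moment matching, it forces precisely the inequality that replaces Gaussianity in the DKM gradient-correlation step. A secondary challenge is propagating the guarantees across the SGD trajectory: since $\w$ varies across iterates, we must either invoke the test at each step or prove a uniform guarantee over a polynomial cover of the sphere, and this trade-off must be tuned carefully to preserve the polynomial (respectively, quasipolynomial) complexity in the two halves of the theorem.
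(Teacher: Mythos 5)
Your starting point (DKM nonconvex SGD plus tests that replace the Gaussian structural facts) is the same as the paper's, but there are several genuine gaps that keep this from working as stated.

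The central missing idea is the paper's band-conditioning (localization) trick. You claim that the label-oblivious moment-matching test of \cite{gollakota2022moment} ``alone certifies every aspect of the DKM analysis that touches only the covariates, including... sufficient anti-concentration near the current separator.'' This is precisely where the naive approach breaks: in polynomial time you can only match $O(1)$-degree moments, which by Proposition~\ref{prop:fool-hs} fools functions of halfspaces only up to an \emph{additive constant} $\tau$. The critical regions in the DKM stationary-point analysis (the rectangles $\G \cap \U_1$, $\G^c \cap \U_2$ of Figure~\ref{fig:regions}) have probability mass $\Theta(\sigma)$, which is tiny, so an additive-constant bound on their probabilities tells you nothing. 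The paper's resolution is to observe that each critical region lives inside a known band $T = \{\x: |\inn{\w,\x}| \le \sigma\}$; one can estimate $\pr[T]$ multiplicatively without moment tests (Proposition~\ref{tester2_band}), and then prove that moment matching \emph{conditioned on $T$} still fools functions of halfspaces orthogonal to $\w$ (Proposition~\ref{prop: moments in band close means we good}). The additive-constant bound on the conditional probability then combines with the multiplicative bound on $\pr[T]$ to give the multiplicative control actually needed in Lemma~\ref{lemma:stationary_points_suffice_agnostic}. Without this conversion, the stationary-point argument does not go through.

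Your proposed label-aware tester is also not implementable as described: the threshold $O(\opt) + \eps$ depends on the unknown $\opt$. The paper avoids needing any such label-dependent threshold. It bounds the adversary term $A_3$ purely in the analysis via Cauchy-Schwarz, with the testers supplying the two \emph{observable} factors (the conditional second moment $\ex[\inn{\vv,\x}^2 \mid T]$ from Property~\eqref{equation:property4_fool_variance_truncated} and $\pr[T]$ from Property~\eqref{equation:property2_fool_band_indicators}); the requirement $\opt \lesssim \sigma$ is never tested but is ensured by searching over a polynomially sparse grid of $\sigma$ values and selecting the final candidate by validation error. You also never address the step of converting small angle $\measuredangle(\w, \wopt)$ into small $0$-$1$ disagreement under an untested marginal — this needs its own tester (Proposition~\ref{proposition:angle_to_01_improved} for the polynomial Gaussian case, Proposition~\ref{proposition:angle_to_01} plus high-degree moment tests for the quasipolynomial strongly log-concave case) and is in fact where the paper's polynomial/quasipolynomial gap between the two halves of the theorem comes from. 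Your stated degree scalings (``$\poly(1/\eps)$ for Gaussian, $\polylog(1/\eps)$ for strongly log-concave'') are both misattributed: degree-$k$ moment matching costs $d^{\Theta(k)}$, so $\poly(1/\eps)$ degree would already be superpolynomial, and the paper's Gaussian-specific speedup comes from a dedicated strip-by-strip covariance tester, not from lower-degree moments. Finally, the paper tests only the finitely many candidates output by PSGD rather than gating every iterate, sidestepping the trajectory-propagation issue you flag as a secondary challenge.
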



\paragraph{Our techniques}

The tester-learners we develop are significantly more involved than prior work on testable learning. 
We build on the nonconvex optimization approach to learning noisy halfspaces due to \cite{diakonikolas2020learning, diakonikolas2020non} as well as the structural results on fooling functions of halfspaces using moment matching due to \cite{gollakota2022moment}. Unlike the label-oblivious, global moment tests of \cite{rubinfeld2022testing,gollakota2022moment}, our tests make crucial use of the labels and check \emph{local} properties of the distribution in regions described by certain candidate vectors. These candidates are approximate stationary points of a natural nonconvex surrogate of the 0-1 loss, obtained by running gradient descent. When the distribution is known to be well-behaved,  \cite{diakonikolas2020learning, diakonikolas2020non} showed that any such stationary point is in fact a good solution (for technical reasons we must use a slightly different surrogate loss). Their proof relies crucially on structural geometric properties that hold for these well-behaved distributions, an important one being that the probability mass of any region close to the origin is proportional to its geometric measure.

In the testable learning setting, we must efficiently check this property for candidate solutions.  Since these regions may be described as intersections of halfspaces, we may hope to apply the moment-matching framework of \cite{gollakota2022moment}. Na{\"i}vely, however, they only allow us to check in polynomial time that the probability masses of such regions are within an additive constant of what they should be under the target marginal. But we can view these regions as sub-regions of a known band described by our candidate vector. By running moment tests on the distribution \emph{conditioned} on this band and exploiting the full strength of the moment-matching framework, we are able to effectively convert our weak additive approximations to good multiplicative ones. This allows us to argue that our stationary points are indeed good solutions.


\subsection{Related work}\label{subsec:related}
We provide a partial summary of some of the most relevant prior and related work on efficient algorithms for learning halfspaces in the presence of adversarial label or Massart noise, and refer the reader to \cite{balcan2021noise} for a survey.

In the distribution-specific agnostic setting where the marginal is assumed to be isotropic and log-concave, \cite{klivans2009learning} showed an algorithm achieving error $O(\opt^{1/3}) + \eps$ for the class of origin-centered halfspaces. \cite{awasthi2017power} later obtained $O(\opt) + \eps$ using an approach that introduced the principle of iterative \emph{localization}, where the learner focuses attention on a band around a candidate halfspace in order to produce an improved candidate. \cite{daniely2015ptas} used this principle to obtain a PTAS for agnostically learning halfspaces under the uniform distribution on the sphere, and \cite{balcan2017sample} extended it to more general $s$-concave distributions. Further works in this line include \cite{yan2017revisiting,zhang2018efficient,zhang2020efficient,zhang2021improved}. \cite{diakonikolas2020non} introduced the simplest approach yet, based entirely on nonconvex SGD, and showed that it achieves $O(\opt) + \eps$ for origin-centered halfspaces over a wide class of structured distributions. Other related works include \cite{diakonikolas2018learning,diakonikolas2022learning_online}.

In the Massart noise setting with noise rate bounded by $\eta$, work of \cite{diakonikolas2019distribution} gave the first efficient distribution-free algorithm achieving error $\eta + \eps$; further improvements and followups include \cite{diakonikolas2021forster,diakonikolas2022strongly}. However, the optimal error $\opt$ achievable by a halfspace may be much smaller than $\eta$, and it has been shown that there are distributions where achieving error competitive with $\opt$ as opposed to $\eta$ is computationally hard \cite{diakonikolas2022near,diakonikolas2022cryptographic}. As a result, the distribution-specific setting remains well-motivated for Massart noise. Early distribution-specific algorithms were given by \cite{awasthi2015efficient,awasthi2016learning}, but a key breakthrough was the nonconvex SGD approach introduced by \cite{diakonikolas2020learning}, which achieved error $\opt + \eps$ for origin-centered halfspaces efficiently over a wide range of distributions. This was later generalized by \cite{diakonikolas2022learning_general}.


\subsection{Technical overview}

Our starting point is the nonconvex optimization approach to learning noisy halfspaces due to \cite{diakonikolas2020learning, diakonikolas2020non}. The algorithms in these works consist of running SGD on a natural non-convex surrogate $\L_\sigma$ for the 0-1 loss, namely a smooth version of the ramp loss. The key structural property shown is that if the marginal distribution is structured (e.g.\ log-concave) and the slope of the ramp is picked appropriately, then any $\w$ that has large angle with an optimal $\wopt$ cannot be an approximate stationary point of the surrogate loss $\L_\sigma$, i.e.\ that $\|\nabla \L_\sigma(\w)\|$ must be large. This is proven by carefully analyzing the contributions to the gradient norm from certain critical regions of $\spn(\w, \wopt)$, and crucially using the distributional assumption that the probability masses of these regions are proportional to their geometric measures. (See \cref{fig:regions}.)
In the testable learning setting, the main challenge we face in adapting this approach is checking such a property for the unknown distribution we have access to.




A preliminary observation is that the critical regions of $\spn(\w, \wopt)$ that we need to analyze are rectangles, and are hence functions of a small number of halfspaces. Encouragingly, one of the key structural results of the prior work of \cite{gollakota2022moment} pertains to ``fooling'' such functions. Concretely, they show that whenever the true marginal $\Dtrueemp$ matches moments of degree at most $\wt{O}(1/\tau^2)$ with a target $\Dtgt$ that satisfies suitable concentration and anticoncentration properties, then $|\ex_{\Dtrueemp}[f] - \ex_{\Dtgt}[f]| \leq \tau$ for any $f$ that is a function of a small number of halfspaces. If we could run such a test and ensure that the probabilities of the critical regions over our empirical marginal are also related to their areas, then we would have a similar stationary point property.

However, the difficulty is that since we wish to run in fully polynomial time, we can only hope to fool such functions up to $\tau$ that is a constant. Unfortunately, this is not sufficient to analyze the probability masses of the critical regions we care about as they may be very small.

The chief insight that lets us get around this issue is that each critical region $R$ is in fact of a very specific form, namely a rectangle that is axis-aligned with $\w$: $R = \{ \x : \inn{\w, \x} \in [-\sigma, \sigma] \text{ and } \inn{\vv, \x} \in [\alpha, \beta] \}$ for some values $\alpha, \beta, \sigma$ and some $\vv$ orthogonal to $\w$. Moreover, we \emph{know} $\w$, meaning we can efficiently estimate the probability $\pr_{\Dtrueemp}[\inn{\w, \x} \in [-\sigma, \sigma]]$ up to constant multiplicative factors without needing moment tests. Denoting the band $\{ \x : \inn{\w, \x} \in [-\sigma, \sigma]\}$ by $T$ and writing $\pr_{\Dtrueemp}[R] = \pr_{\Dtrueemp}[\inn{\vv, \x} \in [\alpha, \beta] \mid \x \in T]\pr_{\Dtrueemp}[T]$, it turns out that we should expect $\pr_{\Dtrueemp}[\inn{\vv, \x} \in [\alpha, \beta] \mid \x \in T] = \Theta(1)$, as this is what would occur under the structured target distribution $\Dtgt$. (Such a ``localization'' property is also at the heart of the algorithms for approximately learning halfspaces of, e.g., \cite{awasthi2017power,daniely2015ptas}.) To check this, it suffices to run tests that ensure that $\pr_{\Dtrueemp}[\inn{\vv, \x} \in [\alpha, \beta] \mid \x \in T]$ is within an additive constant of this probability under $D^*$.

We can now describe the core of our algorithm (omitting some details such as the selection of the slope of the ramp). First, we run SGD on the surrogate loss $\L$ to arrive at an approximate stationary point and candidate vector $\w$ (technically a list of such candidates). Then, we define the band $T$ based on $\w$, and run tests on the empirical distribution conditioned on $T$. Specifically, we check that the low-degree empirical moments conditioned on $T$ match those of $\Dtgt$ conditioned on $T$, and then apply the structural result of \cite{gollakota2022moment} to ensure conditional probabilities of the form $\pr_{\Dtrueemp}[\inn{\vv, \x} \in [\alpha, \beta] \mid \x \in T]$ match $\pr_{\Dtgt}[\inn{\vv, \x} \in [\alpha, \beta] \mid \x \in T]$ up to a suitable additive constant. This suffices to ensure that even over our empirical marginal, the particular stationary point $\w$ we have is indeed close in angular distance to an optimal $\wopt$.


A final hurdle that remains, often taken for granted under structured distributions, is that closeness in angular distance $\measuredangle(\w, \wopt)$ does not immediately translate to closeness in terms of agreement, $\pr[\sign(\inn{\w, \x}) \neq \sign(\inn{\wopt, \x})]$, over our unknown marginal. Nevertheless, we show that when the target distribution is Gaussian, we can run polynomial-time tests that ensure that an angle of $\theta=\measuredangle(\w, \wopt)$ translates to disagreement of at most $O(\theta)$. When the target distribution is a general strongly log-concave distribution, we show a slightly weaker relationship: for any $k \in \N$, we can run tests requiring time $d^{\wt{O}(k)}$ that ensure that an angle of $\theta$ translates to disagreement of at most $O(\sqrt{k}\cdot \theta^{1 - 1/k})$. In the Massart noise setting, we can make $\measuredangle(\w, \wopt)$ arbitrarily small, and so obtain our $\opt + \eps$ guarantee for any target strongly log-concave distribution in polynomial time. In the adversarial noise setting, we face a more delicate tradeoff and can only make $\measuredangle(\w, \wopt)$ as small as $\Theta(\opt)$. When the target distribution is Gaussian, this is enough to obtain final error $O(\opt) + \epsilon$ in polynomial time. When the target distribution is a general strongly log-concave distribution, we instead obtain $\wt{O}(\opt) + \eps$ in quasipolynomial time.

\section{Preliminaries}

\paragraph{Notation and setup}
Throughout, the domain will be $\X = \R^d$, and labels will lie in $\Y = \cube{}$. The unknown joint distribution over $\X \times \Y$ that we have access to will be denoted by $\Djoint$, and its marginal on $\X$ will be denoted by $\Dtrue$. The target marginal on $\X$ will be denoted by $\Dtgt$. We use the following convention for monomials: for a multi-index $\alpha = (\alpha_1, \dots, \alpha_d) \in \Z^d_{\geq 0}$, $\x^\alpha$ denotes $\prod_i x_i^{\alpha_i}$, and $|\alpha| = \sum_i \alpha_i$ denotes its total degree. 

We use $\C$ to denote a concept class mapping $\R^d$ to $\cube{}$, which throughout this paper will be the class of halfspaces or functions of halfspaces over $\R^d$. We use $\opt(\C, \Djoint)$ to denote the optimal error $\inf_{f \in \C} \pr_{(\x, y) \sim \Djoint}[f(\x) \neq y]$, or just $\opt$ when $\C$ and $\Djoint$ are clear from context.

We recall the definitions of the noise models we consider. In the Massart noise model, the labels satisfy $\pr_{y \sim \Djoint | \x}[y \neq \sign(\inn{\wopt, \x}) \mid \x] = \eta(\x)$, where $\eta(\x) \leq \eta < \frac{1}{2}$ for all $\x$. In the adversarial label noise or agnostic model, the labels may be completely arbitrary. In both cases, the learner's goal is to produce a hypothesis with error competitive with $\opt$.

We now formally define testable learning. The following definition is an equivalent reframing of the original definition \cite[Def 4]{rubinfeld2022testing}, folding the (label-aware) tester and learner into a single tester-learner. 

\begin{definition}[Testable learning, \cite{rubinfeld2022testing}]\label{definition:testable_learning}
Let $\C$ be a concept class mapping $\R^d$ to $\cube{}$. Let $\Dtgt$ be a certain target marginal on $\R^d$. Let $\eps, \delta > 0$ be parameters, and let $\psi : [0,1] \to [0,1]$ be some function. We say $\C$ can be testably learned w.r.t.\ $\Dtgt$ up to error $\psi(\opt) + \eps$ with failure probability $\delta$ if there exists a tester-learner $A$ meeting the following specification. For any distribution $\Djoint$ on $\R^d \times \cube{}$, $A$ takes in a large sample $S$ drawn from $\Djoint$, and either rejects $S$ or accepts and produces a hypothesis $h : \R^d \to \cube{}$. Further, the following conditions must be met: \begin{enumerate}[label=(\alph*)]
    \item (Soundness.) Whenever $A$ accepts and produces a hypothesis $h$, with probability at least $1 - \delta$ (over the randomness of $S$ and $A$), $h$ must satisfy $\pr_{(\x, y) \sim \Djoint}[h(\x) \neq y] \leq \psi(\opt(\C, \Djoint)) + \eps$.
    \item (Completeness.) Whenever $\Djoint$ truly has marginal $\Dtgt$, $A$ must accept with probability at least $1 - \delta$ (over the randomness of $S$ and $A$).
\end{enumerate}
\end{definition}

We also formally define the class of \emph{strongly log-concave} distributions, which is the class that our target marginal $\Dtgt$ is allowed to belong to, and collect some useful properties of such distributions. We will state the definition for isotropic $\Dtgt$ (i.e.\ with mean $0$ and covariance $I$) for simplicity.

\begin{definition}[{Strongly log-concave distribution, see e.g.\ \cite[Def 2.8]{saumard2014log}}]\label{definition:strongly_log_concave}
We say an isotropic distribution $\Dtgt$ on $\R^d$ is strongly log-concave if the logarithm of its density $q$ is a strongly concave function. Equivalently, $q$ can be written as \begin{equation}\label{eq:slc-def}
q(\x) = r(\x) \gamma_{\kappa^2 I}(\x) \end{equation} for some log-concave function $r$ and some constant $\kappa > 0$, where $\gamma_{\kappa^2 I}$ denotes the density of the spherical Gaussian $\mathcal{N}(0, \kappa^2 I)$.
\end{definition}

\begin{proposition}[{see e.g.\ \cite{saumard2014log}}]\label{prop:slc}
Let $\Dtgt$ be an isotropic strongly log-concave distribution on $\R^d$ with density $q$. \begin{enumerate}[label=(\alph*)]
    \item Any orthogonal projection of $\Dtgt$ onto a subspace is also strongly log-concave.
    \item There exist constants $U, R$ such that $q(\x) \leq U$ for all $\x$, and $q(x) \geq 1/U$ for all $\|\x\| \leq R$.
    \item There exist constants $U'$ and $\kappa$ such that $q(\x) \leq U' \gamma_{\kappa^2 I}(\x)$ for all $\x$.
    \item There exist constants $K_1, K_2$ such that for any $\sigma \in [0, 1]$ and any $\vv \in \S^{d-1}$, $\pr[|\inn{\vv, \x}| \leq \sigma] \in (K_1\sigma, K_2\sigma)$.
    \item There exists a constant $K_3$ such that for any $k \in \N$, $\ex[|\inn{\vv, \x}|^k] \leq (K_3 k)^{k/2}$.
    \item Let $\alpha = (\alpha_1, \dots, \alpha_d) \in \Z^d_{\geq 0}$ be a multi-index with total degree $|\alpha| = \sum_i \alpha_i = k$, and let $\x^\alpha = \prod_i x_i^{\alpha_i}$. There exists a constant $K_4$ such that for any such $\alpha$, $\ex[|\x^{\alpha}|]  \leq (K_4 k)^{k/2}$.
\end{enumerate}
\end{proposition}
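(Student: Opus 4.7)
The plan is to deduce each of the six parts from standard facts about log-concave distributions, principally the Pr\'ekopa--Leindler inequality and the decomposition $q = r\, \gamma_{\kappa^2 I}$ from \eqref{eq:slc-def}. All items are classical, so the outline mainly specifies which standard tool to invoke for each and how the constants line up.

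For part (a), I would apply Pr\'ekopa--Leindler separately to $r$ and to $\gamma_{\kappa^2 I}$. Marginalizing a log-concave function onto a subspace yields another log-concave function, and the marginal of a spherical Gaussian of covariance $\kappa^2 I$ is again a spherical Gaussian of covariance $\kappa^2 I$ on that subspace. Hence the projected density factors as (log-concave)$\times$(Gaussian of the same scale $\kappa$), making it strongly log-concave; isotropy is preserved because projection commutes with taking the covariance. For parts (b) and (c), I would invoke the classical fact that any isotropic log-concave density on $\R^d$ is uniformly bounded by a universal constant (Lov\'asz--Vempala; see Saumard--Wellner). Applied to $r$, this gives $r(\x) \le U'$, which immediately yields (c). Taking $U := U'\, \gamma_{\kappa^2 I}(0)$ gives the uniform upper bound in (b). The pointwise lower bound on a small ball around the origin follows from log-concavity plus isotropy: if $q$ were too small on a small ball around the mean, log-concavity would force $q$ to remain small on a much larger set, contradicting unit total mass and finite variance; this quantitative statement is also recorded in Saumard--Wellner.

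Part (d) follows by combining (a) and (b): the one-dimensional projection $\inn{\vv, \x}$ is isotropic and strongly log-concave, so its density $p_\vv$ satisfies $p_\vv(t) \le U$ for all $t$ and $p_\vv(t) \ge 1/U$ on some interval $[-R, R]$. The upper bound $\pr[|\inn{\vv, \x}| \le \sigma] \le 2U\sigma$ is immediate; for the lower bound I would split into the regimes $\sigma \le R$ (yielding $2\sigma/U$) and $\sigma \in [R, 1]$ (where the probability is at least $2R/U$, absorbed into $K_1$). For part (e), I would use the sub-Gaussian concentration inequality for one-dimensional strongly log-concave distributions, $\pr[|\inn{\vv, \x}| > t] \le C e^{-c t^2}$ for universal $C, c > 0$ (which follows from (c) applied in one dimension combined with standard Gaussian concentration, exploiting that the log-concave factor is bounded), and integrate tails via $\ex[|X|^k] = \int_0^\infty k t^{k-1} \pr[|X| > t]\, dt$ to obtain $(K_3 k)^{k/2}$. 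Part (f) then follows from H\"older's inequality with exponents $k/\alpha_i$: $\ex[|\x^\alpha|] \le \prod_i \ex[|x_i|^k]^{\alpha_i / k}$, and applying (e) to each coordinate (taking $\vv = e_i$) bounds each factor, yielding the stated estimate.

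No real obstacles arise since every ingredient is classical; the only subtlety is careful bookkeeping of constants in (b)--(c), ensuring that the $\kappa$ appearing in (c) can be taken to match the one in the decomposition \eqref{eq:slc-def} rather than a different scale inadvertently introduced along the way.
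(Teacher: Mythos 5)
The overall plan is sensible, and parts (d), (e), (f) are handled correctly (the Markov/localization split for (d), the tail-integration for (e), and the H\"older step with exponents $k/\alpha_i$ for (f) are all standard and line up). The paper itself offers essentially no proof --- it cites Saumard--Wellner for (a) and remarks that the rest ``follow readily'' from the decomposition \eqref{eq:slc-def} --- so the depth you aim for is appropriate. However, there are two places where your argument as written does not actually go through.

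For (a), you cannot ``apply Pr\'ekopa--Leindler separately to $r$ and to $\gamma_{\kappa^2 I}$ and multiply the marginals'': marginalizing a product is not the product of the marginals. The correct move is to split $\x = (\x_V, \x_{V^\perp})$, pull $\gamma_{\kappa^2 I}(\x_V)$ outside the integral (using that the spherical Gaussian factors across orthogonal subspaces), and apply Pr\'ekopa--Leindler \emph{once} to the log-concave function $r(\x_V,\x_{V^\perp})\,\gamma_{\kappa^2 I}(\x_{V^\perp})$ to conclude that its $\x_{V^\perp}$-marginal is log-concave. This yields the desired factorization $q_V = \tilde r \cdot \gamma_{\kappa^2 I_V}$; your conclusion is right but the step as stated is not.

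For (b)--(c), the claim ``Applied to $r$, this gives $r(\x) \le U'$'' is a genuine gap: the Lov\'asz--Vempala/Saumard--Wellner boundedness result is a statement about isotropic log-concave \emph{densities}, whereas $r$ is merely a log-concave function --- it need not integrate to $1$ and is certainly not isotropic, so the theorem does not apply to it. In fact $r$ need not be bounded at all: writing $q = e^{-V}$ with $V$ strongly convex of parameter $1/\kappa^2$ and minimizer $\x_0$, one only gets the one-sided bound $q(\x) \le q(\x_0)\exp(-\|\x-\x_0\|^2/(2\kappa^2))$, and dividing by $\gamma_{\kappa^2 I}(\x)$ produces a factor $\exp(\langle \x, \x_0\rangle/\kappa^2)$ that can grow as $\|\x\|\to\infty$ unless $\x_0 = 0$ exactly. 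The clean argument for (b) and (c) is to use precisely this bound on $q$ directly, together with control on $\|\x_0\|$ and $q(\x_0)$ coming from isotropy, and then absorb the cross term $\exp(\langle\x,\x_0\rangle/\kappa^2)$ by taking a strictly larger Gaussian scale $\kappa' > \kappa$ in (c). This also means your closing remark is backwards: the $\kappa$ in (c) in general \emph{cannot} be taken to match the one in \eqref{eq:slc-def}, and there is no need for it to --- nothing in the paper uses that they coincide, and the statement of (c) already permits a fresh $\kappa$.
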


For (a), see e.g.\ \cite[Thm 3.7]{saumard2014log}. The other properties follow readily from \cref{eq:slc-def}, which allows us to treat the density as subgaussian.

A key structural fact that we will need about strongly log-concave distributions is that approximately matching moments of degree at most $\wt{O}(1/\tau^2)$ with such a $\Dtgt$ is sufficient to fool any function of a constant number of halfspaces up to an additive $\tau$.
\begin{proposition}[{Variant of \cite[Thm 5.6]{gollakota2022moment}}]\label{prop:fool-hs}
Let $p$ be a fixed constant, and let $\F$ be the class of all functions of $p$ halfspaces mapping $\R^d$ to $\cube{}$ of the form
\begin{equation}\label{eq:fn-of-hs}
    f(\x) = g\left( \sign(\inn{\vv^1, \x} + \theta_1), \dots, \sign(\inn{\vv^p, \x} + \theta_p) \right)\,
\end{equation}
for some $g: \cube{p} \to \cube{}$ and weights $\vv^i \in \S^{d-1}$. Let $\Dtgt$ be any target marginal such that for every $i$, the projection $\inn{\vv^i, \x}$ has subgaussian tails and is anticoncentrated: (a) $\pr[|\inn{\vv^i, \x}| > t] \leq \exp(-\Theta(t^2))$, and (b) for any interval $[a, b]$, $\pr[\inn{\vv^i, \x} \in [a, b]] \leq \Theta(|b-a|)$. Let $D$ be any distribution such that for all monomials $\x^{\alpha} = \prod_i x^{\alpha_i}$ of total degree $|\alpha| = \sum_i \alpha_i \leq k$, \[ \left| \ex_{\Dtgt}[\x^\alpha] - \ex_{D}[\x^{\alpha}] \right| \leq \left(\frac{c|\alpha|}{d\sqrt{k}} \right)^{|\alpha|} \] for some sufficiently small constant $c$ (in particular, it suffices to have $d^{-\wt{O}(k)}$ moment closeness for every $\alpha$). Then \[ \max_{f \in \F} \left| \ex_{\Dtgt}[f] - \ex_{D}[f] \right| \leq \wt{O}\left(\frac{1}{\sqrt{k}}\right). \]
\end{proposition}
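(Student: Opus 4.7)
The plan is to fool $f$ against $\Dtgt$ by constructing a pointwise sandwich of $f$ by low-degree polynomials, and then to use the moment-closeness hypothesis to transfer the resulting bounds from $\Dtgt$ to the unknown $D$. Concretely, I would produce polynomials $P^{u}, P^{l}$ of total degree at most $pk = O(k)$ with $P^{l}(\x) \leq f(\x) \leq P^{u}(\x)$ pointwise and $\ex_{\Dtgt}[P^{u} - P^{l}] \leq \wt O(1/\sqrt{k})$. The pointwise sandwich immediately gives
\[
\ex_{D}[f] - \ex_{\Dtgt}[f] \;\leq\; \ex_{D}[P^{u}] - \ex_{\Dtgt}[P^{l}] \;=\; \bigl(\ex_{D}[P^{u}] - \ex_{\Dtgt}[P^{u}]\bigr) + \ex_{\Dtgt}[P^{u} - P^{l}],
\]
and the analogous lower bound via $P^{l}$. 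The problem thus reduces to (i) building the pointwise sandwich with small gap against $\Dtgt$, and (ii) bounding the polynomial expectation difference $|\ex_{D}[P^{u/l}] - \ex_{\Dtgt}[P^{u/l}]|$ via the moment hypothesis.

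For (i) I would handle one halfspace at a time. For each $i$, the one-dimensional projection $t = \inn{\vv^{i}, \x}$ under $\Dtgt$ has subgaussian tails and is anticoncentrated by assumption, so by standard univariate approximation theory (e.g.\ truncating the Hermite expansion of $\sign(t+\theta_i)$ and adding a one-sided corrective polynomial) there exist degree-$k$ polynomials $q_i^{u}, q_i^{l}$ with $q_i^{l}(t) \leq \sign(t + \theta_i) \leq q_i^{u}(t)$ for all $t$, and univariate sandwich gap $\ex_{\Dtgt}[q_i^{u}(t) - q_i^{l}(t)] = \wt O(1/\sqrt{k})$. To lift to a sandwich of $f$, I would expand the Boolean function $g$ in its multilinear $\{\pm 1\}$-Fourier representation $g(y_1,\dots,y_p) = \sum_{S \subseteq [p]} \hat{g}(S) \prod_{i \in S} y_i$ and, within each monomial $\prod_{i \in S} \sign(\inn{\vv^{i}, \x}+\theta_i)$, substitute each factor by either its $q_i^{u}$ or $q_i^{l}$ version, with the choice of sides dictated by the sign of $\hat{g}(S)$ and the parity of $|S|$ so that each substitute dominates (resp.\ is dominated by) the original product termwise. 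Since $p$ is a fixed constant, summing the $O(2^{p})$ contributions yields $P^{u}, P^{l}$ of degree at most $pk = O(k)$, and the combined sandwich gap remains $\wt O(1/\sqrt{k})$.

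For (ii), expanding $P^{u} = \sum_\alpha c_\alpha \x^\alpha$ in the monomial basis and invoking the hypothesis gives
\[
\bigl|\ex_{D}[P^{u}] - \ex_{\Dtgt}[P^{u}]\bigr| \;\leq\; \sum_\alpha |c_\alpha| \cdot \Bigl(\frac{c|\alpha|}{d\sqrt{k}}\Bigr)^{|\alpha|}.
\]
The number of $d$-variate monomials of degree $j$ is at most $(ed/j)^{j}$, so the $d^{-|\alpha|}$ factor in the hypothesis is engineered to cancel precisely this combinatorial growth, leaving a factor of $(c|\alpha|/\sqrt{k})^{|\alpha|}$ that must be paired against the monomial coefficients of $P^{u}$. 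These coefficients descend from the univariate Hermite coefficients of $q_i^{u/l}$, which are well-controlled, via the change of basis to $d$ dimensions induced by the unit vectors $\vv^{i}$ and the multilinear combination through $g$. The main obstacle, and essentially the full content of \cite[Thm~5.6]{gollakota2022moment} that must be reprised here, is exactly this last coefficient bookkeeping: passing from the bounded Hermite coefficients of the one-dimensional $q_i^{u/l}$ to the $d$-dimensional monomial basis in a way that matches the $(c|\alpha|/(d\sqrt{k}))^{|\alpha|}$ decay in the hypothesis, so that the final sum is $\wt O(1/\sqrt{k})$. The present statement is a variant only in that the moment-closeness assumption is phrased per monomial with the prescribed decay rather than as a single scalar closeness notion, and the proof of \cite{gollakota2022moment} applies verbatim once this bookkeeping is carried out.
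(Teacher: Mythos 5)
Your proposal takes a genuinely different route from the paper, which does not reprove this result at all: the paper simply cites \cite[Thm 5.6]{gollakota2022moment} and observes that the stated variant — requiring concentration and anticoncentration only along the fixed directions $\{\vv^i\}$ rather than all directions — follows because the cited proof only ever analyzes the $p$-dimensional projected distribution $(\inn{\vv^1, \x}, \ldots, \inn{\vv^p, \x})$. You instead attempt to reconstruct the sandwiching-polynomial argument from scratch. That is a legitimate plan, and your reduction in step (ii) from monomial closeness to a coefficient sum is the right shape, but your step (i) contains a concrete error in the lift from one halfspace to functions of $p$ halfspaces.

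The problem is the termwise substitution in the multilinear Fourier expansion. A Fourier monomial $\prod_{i \in S} y_i$ with $y_i \in \{\pm 1\}$ is \emph{not} coordinatewise monotone (the sign of $\partial/\partial y_i$ depends on the other coordinates), so there is no fixed choice of $q_i^{u}$ versus $q_i^{l}$ per factor that dominates the product pointwise. Concretely, take $|S| = 2$ and a point where $y_1 = y_2 = -1$, so $y_1 y_2 = 1$; for upper-sandwich polynomials with $q_i^u(t_i) \approx -1 + \eps_i$ near that point, the product $q_1^u q_2^u \approx 1 - \eps_1 - \eps_2 < 1$ fails to upper-bound $y_1 y_2$, and the other three sign choices fail at other points. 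The standard fix is to first decompose $g$ as a (disjoint) sum over orthants, $g(y) = -1 + 2\sum_{z : g(z) = 1} \prod_{i} \Ind[y_i = z_i]$, where each factor $\Ind[y_i = z_i] \in \{0, 1\}$ is nonnegative, so a product of nonnegative univariate upper-approximators \emph{is} a valid pointwise upper bound; for the lower sandwich one can use the Bonferroni bound $\prod_i \Ind_i \geq 1 - \sum_i (1 - \Ind_i)$ and then sandwich each $1 - \Ind_i$ from above, avoiding any nonnegativity requirement. With that repair — and with the coefficient bookkeeping you correctly identify as the remaining content of the cited theorem — your outline would match the intended argument.
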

Note that this is a variant of the original statement of \cite[Thm 5.6]{gollakota2022moment}, which requires that the 1D projection of $\Dtgt$ along \emph{any} direction satisfy suitable concentration and anticoncentration. Indeed, an inspection of their proof reveals that it suffices to verify these properties for projections only along the directions $\{\vv^i\}_{i \in [p]}$ as opposed to all directions. This is because to fool a function $f$ of the form above, their proof only analyzes the projected distribution $(\inn{\vv^1, \x}, \dots, \inn{\vv^p, \x})$ on $\R^p$, and requires only concentration and anticoncentration for each individual projection $\inn{\vv^i, \x}$.

\section{Testing properties of strongly log-concave distributions}\label{sec:testing}

In this section we define the testers that we will need for our algorithm. We begin with a structural lemma that strengthens the key structural result of \cite{gollakota2022moment}, stated here as Proposition~\ref{prop:fool-hs}. It states that even when we restrict an isotropic strongly log-concave $\Dtgt$ to a band around the origin, moment matching suffices to fool functions of halfspaces whose weights are orthogonal to the normal of the band.

\begin{proposition}\label{prop: moments in band close means we good}
Let $\Dtgt$ be an isotropic strongly log-concave distribution. Let $\w \in \S^{d-1}$ be any fixed direction. Let $p$ be a constant. Let $f : \R^d \to \R$ be a function of $p$ halfspaces of the form in \cref{eq:fn-of-hs}, with the additional restriction that its weights $\vv^i \in \S^{d-1}$ satisfy $\inn{\vv^i, \w} = 0$ for all $i$. For some $\sigma \in [0, 1]$, let $T$ denote the band $\{ \x : |\inn{\w, \x}| \leq \sigma \}$. Let $D$ be any distribution such that $D_{|T}$ matches moments of degree at most $k = \wt{O}(1/\tau^2)$ with $\Dtgt_{|T}$ up to an additive slack of $d^{-\wt{O}(k)}$. Then $ \left| \ex_{\Dtgt}[f \mid T] - \ex_{D}[f \mid T] \right| \leq \tau. $
\end{proposition}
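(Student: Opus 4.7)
The plan is to apply Proposition~\ref{prop:fool-hs} directly to the pair of conditioned distributions $(\Dtgt_{|T}, D_{|T})$ at moment degree $k = \wt{O}(1/\tau^2)$, treating $\Dtgt_{|T}$ as the target. The moment-closeness hypothesis of Proposition~\ref{prop:fool-hs} is handed to us by assumption, so the only real task is to verify that, along each of the fixed directions $\vv^i$, the distribution $\Dtgt_{|T}$ enjoys subgaussian tails and $O(\cdot)$-linear anticoncentration, with constants that do \emph{not} degrade as the band width $\sigma$ shrinks.

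The key step exploits the orthogonality $\vv^i \perp \w$ together with parts (c) and (d) of Proposition~\ref{prop:slc}. Under the spherical Gaussian $\gamma_{\kappa^2 I}$, the one-dimensional projections $\inn{\vv^i, \x}$ and $\inn{\w, \x}$ are independent because $\vv^i \perp \w$. Combining this with the pointwise domination $q(\x) \le U'\,\gamma_{\kappa^2 I}(\x)$ from Proposition~\ref{prop:slc}(c), for any interval $[a,b]$ and any $t > 0$ we get
\[
\pr_{\Dtgt}\!\left[\inn{\vv^i, \x} \in [a,b],\ \x \in T\right] \;\le\; U' \cdot \pr_{\gamma_{\kappa^2}}[z \in [a,b]] \cdot \pr_{\gamma_{\kappa^2}}[|z| \le \sigma] \;=\; O(|b-a|\cdot \sigma),
\]
and similarly $\pr_{\Dtgt}[|\inn{\vv^i, \x}| > t,\ \x \in T] = O(\sigma \cdot e^{-\Theta(t^2)})$. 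Meanwhile Proposition~\ref{prop:slc}(d) guarantees $\pr_{\Dtgt}[T] \ge K_1 \sigma$, so dividing the two bounds yields
\[
\pr_{\Dtgt_{|T}}\!\left[\inn{\vv^i, \x} \in [a,b]\right] = O(|b-a|), \qquad \pr_{\Dtgt_{|T}}\!\left[|\inn{\vv^i, \x}| > t\right] = e^{-\Theta(t^2)},
\]
with the $\sigma$ factors cancelling cleanly. These are exactly the concentration and anticoncentration conditions along the $\vv^i$'s that Proposition~\ref{prop:fool-hs} requires of its target distribution.

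With those conditions verified and moment closeness assumed, invoking Proposition~\ref{prop:fool-hs} on $(\Dtgt_{|T}, D_{|T})$ at degree $k = \wt{O}(1/\tau^2)$ yields $|\ex_{\Dtgt}[f \mid T] - \ex_{D}[f \mid T]| \le \wt{O}(1/\sqrt{k}) \le \tau$, as claimed. The one genuine difficulty is controlling the $\sigma$-dependence of the constants in the tail and anticoncentration bounds for $\Dtgt_{|T}$: an arbitrarily thin conditioning could in principle blow up these constants, and it is precisely the orthogonality hypothesis $\vv^i \perp \w$, channeled through the independence of the two projections under the spherical Gaussian upper bound, that makes the $\sigma$ in the numerator and denominator cancel and keeps the constants uniform over all $\sigma \in [0,1]$.
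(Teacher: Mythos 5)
Your proof is correct and follows essentially the same route as the paper: apply Proposition~\ref{prop:fool-hs} to $(\Dtgt_{|T}, D_{|T})$ after verifying subgaussian tails and anticoncentration along the $\vv^i$'s, with the crucial $\sigma$-cancellation coming from bounding the joint probability over the band by $O(\sigma\cdot(\cdot))$ and dividing by $\pr_{\Dtgt}[T]=\Omega(\sigma)$ via Proposition~\ref{prop:slc}(d). The only cosmetic difference is that for anticoncentration you invoke Proposition~\ref{prop:slc}(c) together with independence of $\inn{\vv^i,\x}$ and $\inn{\w,\x}$ under the dominating Gaussian, whereas the paper instead projects onto $\spn(\vv,\w)$ and uses the bounded two-dimensional density from Proposition~\ref{prop:slc}(b); both are valid and equally localized to the orthogonality hypothesis.
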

\begin{proof}
Our plan is to apply Proposition~\ref{prop:fool-hs}. To do so, we must verify that $\Dtgt_{|T}$ satisfies the assumptions required. In particular, it suffices to verify that the 1D projection along any direction orthogonal to $\w$ has subgaussian tails and is anticoncentrated. Let $\vv \in \S^{d-1}$ be any direction that is orthogonal to $\w$. By Proposition~\ref{prop:slc}(d), we may assume that $\pr_{\Dtgt}[T] \geq \Omega(\sigma)$.

To verify subgaussian tails, we must show that for any $t$, $\pr_{\Dtgt_{|T}}[|\inn{\vv, \x}| > t] \leq \exp(-Ct^2)$ for some constant $C$. The main fact we use is  Proposition~\ref{prop:slc}(c), i.e.\ that any strongly log-concave density is pointwise upper bounded by a Gaussian density times a constant. Write \[ \pr_{\Dtgt_{|T}}[|\inn{\vv, \x}| > t] = \frac{\pr_{\Dtgt}[\inn{\vv, \x} > t \text{ and } \inn{\w, \x} \in [-\sigma, \sigma]]}{\pr_{\Dtgt}[\inn{\w, \x} \in [-\sigma, \sigma]]}. \] The claim now follows from the fact that the numerator is upper bounded by a constant times the corresponding probability under a Gaussian density, which is at most $O(\exp(-C't^2)\sigma)$ for some constant $C'$, and that the denominator is $\Omega(\sigma)$.

To check anticoncentration, for any interval $[a, b]$, write \[ \pr_{\Dtgt_{|T}}[\inn{\vv, \x} \in [a, b]] = \frac{\pr_{\Dtgt}[\inn{\vv, \x} \in [a, b] \text{ and } \inn{\w, \x} \in [-\sigma, \sigma]]}{\pr_{\Dtgt}[\inn{\w, \x} \in [-\sigma, \sigma]]}. \] After projecting onto $\spn(\vv, \w)$ (an operation that preserves logconcavity), the numerator is the probability mass under a rectangle with side lengths $|b-a|$ and $2\sigma$, which is at most $O(\sigma|b-a|)$ as by Proposition~\ref{prop:slc}(b) the density is pointwise upper bounded by a constant. The claim follows since the denominator is $\Omega(\sigma)$.

Now we are ready to apply Proposition~\ref{prop:fool-hs}. We see that if $D_{|T}$ matches moments of degree at  most $k$ with $\Dtgt_{|T}$ up to an additive slack of $d^{-O(k)}$, then $|\ex_{\Dtgt}[f \mid T] - \ex_{D}[f \mid T]| \leq \wt{O}(1/\sqrt{k})$. Rewriting in terms of $\tau$ gives the theorem.
\end{proof}


We now describe the testers that we use. The first simply checks moments of the unconditioned distribution, and the second checks the probability within a band. The third checks moments of the conditioned distribution and uses Proposition~\ref{prop: moments in band close means we good}. Proofs are deferred to \cref{appendix:testing-proofs}.

\begin{proposition}\label{tester1_polynomials}
For any isotropic strongly log-concave $\Dtgt$, there exists some constants $C_1$ and a tester $\testerone$ that takes a set $S\subseteq\R^d\times \cube{}$, an even $k \in \N$, a parameter $\delta\in (0,1)$ and runs and in time $\poly\left(d^k, |S|, \log\frac{1}{\delta}\right)$. 
Let $\Dgeneric$ denote the uniform distribution over $S$.  If ${\testerone}$ accepts, then for any $\vv \in \S^{d-1}$ 
\begin{equation}\label{equation:property1_fool_polynomials} \ex_{(\x,y) \sim \Dgeneric}[(\inn{\vv, \x})^k] \leq (C_1k)^{k/2}. \end{equation}
Moreover, if $S$ is obtained by taking at least $\left( d^k, \left(\log \frac{1}{\delta}\right)^k \right)^{C_1}$ i.i.d. samples from a distribution whose $\R^d$-marginal is $D^*$, the test $\testerone$ passes with probability at least $1-\delta$. 
\end{proposition}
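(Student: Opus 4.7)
The plan is to have $\testerone$ be a purely label-oblivious moment test: it computes all empirical moments $\hat{m}_\alpha = \frac{1}{|S|} \sum_{(\x,y)\in S} \x^\alpha$ for every multi-index $\alpha \in \Z^d_{\geq 0}$ with $|\alpha| \leq k$, and accepts iff each $\hat{m}_\alpha$ is within an additive tolerance $\tau$ of the target moment $m_\alpha = \ex_{\Dtgt}[\x^\alpha]$, where I take $\tau = (c k/d)^{k/2}$ for a suitably small constant $c$. (We know $|m_\alpha|$ is bounded by Proposition~\ref{prop:slc}(f), so the tester can even be stated in terms of checking $|\hat{m}_\alpha - m_\alpha| \leq \tau$ without needing exact access to $m_\alpha$ — the constants $K_3, K_4$ etc.\ from Proposition~\ref{prop:slc} suffice.) The number of moments checked is at most $\binom{d+k}{k} \leq (2d)^k$, so computing them all runs in $\poly(d^k, |S|)$ time.

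For soundness, suppose the tester accepts. For any $\vv \in \S^{d-1}$, the multinomial theorem gives
\[ \ex_{\Dgeneric}[(\inn{\vv, \x})^k] - \ex_{\Dtgt}[(\inn{\vv, \x})^k] = \sum_{|\alpha|=k} \binom{k}{\alpha} \vv^\alpha (\hat{m}_\alpha - m_\alpha), \]
whose absolute value is at most $\tau \sum_{|\alpha|=k} \binom{k}{\alpha} |\vv^\alpha| = \tau \|\vv\|_1^k \leq \tau\, d^{k/2}$ (using $\|\vv\|_1 \leq \sqrt{d}\,\|\vv\|_2$). Combined with Proposition~\ref{prop:slc}(e), which gives $\ex_{\Dtgt}[(\inn{\vv, \x})^k] \leq (K_3 k)^{k/2}$ for even $k$, the choice $\tau = (K_3 k/d)^{k/2}$ yields $\ex_{\Dgeneric}[(\inn{\vv, \x})^k] \leq 2(K_3 k)^{k/2} = (C_1 k)^{k/2}$ for a suitable constant $C_1$, giving \eqref{equation:property1_fool_polynomials}.

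For completeness, when $S$ is drawn i.i.d.\ from a distribution with $\R^d$-marginal $\Dtgt$, I need each of the at most $(2d)^k$ empirical moments to land within $\tau$ of its expectation with total failure probability at most $\delta$. This is the main technical step. The random variable $\x^\alpha$ is heavy-tailed in general, but Proposition~\ref{prop:slc}(c) says the density is pointwise dominated by a Gaussian, so each coordinate $x_i$ is subgaussian; a union bound across coordinates and samples gives $\|\x\|_\infty \leq O(\sqrt{\log(d|S|/\delta)})$, and hence $|\x^\alpha| \leq O(\log(d|S|/\delta))^{k/2}$, uniformly over $S$ with probability at least $1 - \delta/2$. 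Conditioned on this, I apply Hoeffding's inequality to each $\hat{m}_\alpha$ and union bound over all $(2d)^k$ multi-indices. The resulting sample complexity scales like $\tau^{-2} \log(d/\delta)^{k+1} (2d)^k$, which with our choice of $\tau$ simplifies to $\poly(d^k, \log(1/\delta)^k)$, as required.

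The main obstacle is balancing the tolerance $\tau$ (which must be as small as $(k/d)^{k/2}$ to absorb the factor $\|\vv\|_1^k$ in soundness) against the sample complexity needed to certify moment closeness at that tolerance. The Gaussian-type tails of strongly log-concave distributions are exactly what makes this tradeoff feasible in the desired $\poly(d^k, \log(1/\delta)^k)$ regime; without them, heavy tails of $\x^\alpha$ would force a weaker $1/\delta$ dependence via Chebyshev rather than the $\log(1/\delta)$ dependence claimed.
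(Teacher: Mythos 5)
Your tester and soundness argument follow the paper's, except you use the sharper bound $\sum_{|\alpha|=k}\binom{k}{\alpha}|\vv^\alpha| = \|\vv\|_1^k \le d^{k/2}$ in place of the paper's cruder $d^k$, which lets you use a much more relaxed tolerance $\tau = (ck/d)^{k/2}$ rather than the paper's $1/d^k$. Where you diverge is the completeness argument: the paper bounds the $2\log(1/\delta)$-th central moment of $\x^\alpha$ via Proposition~\ref{prop:slc}(f) and then applies a high-moment Markov inequality to the empirical average, whereas you truncate to the slab $\{\|\x\|_\infty \le B\}$ and apply Hoeffding. Both routes are viable.

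There is, however, an unaddressed step in your truncation argument. After conditioning on the event that every sample in $S$ satisfies $\|\x\|_\infty \le B$, the samples are i.i.d.\ from $\Dtgt$ restricted to the slab, so Hoeffding controls the deviation of $\hat m_\alpha$ from the \emph{conditional} mean $\mu_B = \ex_{\Dtgt}[\x^\alpha \mid \|\x\|_\infty \le B]$, not from $m_\alpha = \ex_{\Dtgt}[\x^\alpha]$. You also need to bound $|\mu_B - m_\alpha|$ --- by Cauchy--Schwarz and Proposition~\ref{prop:slc}(f) one gets $|\mu_B - m_\alpha| \lesssim \sqrt{\ex_{\Dtgt}[(\x^\alpha)^2]\cdot\pr_{\Dtgt}[\|\x\|_\infty > B]} \le (2K_4 k)^{k/2}\sqrt{d}\,e^{-\Omega(B^2)}$ --- and then verify this is below $\tau$. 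Your choice $B = O(\sqrt{\log(d|S|/\delta)})$ does turn out to be large enough once $|S|$ is as large as required (since then $\log|S| = \Omega(k\log d)$), so the gap is repairable, but the accounting is missing from the write-up. Separately, your claimed sample complexity $\tau^{-2}\log(d/\delta)^{k+1}(2d)^k$ over-counts: the union bound over the at most $(2d)^k$ multi-indices contributes only a factor $k\log(2d) + \log(1/\delta)$ inside the Hoeffding exponent, not a multiplicative $(2d)^k$, though the bound remains $\poly(d^k,\log(1/\delta))$ either way.
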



\begin{proposition}\label{tester2_band}
For any isotropic strongly log-concave $\Dtgt$, there exist some constants $C_2,C_3$ and a tester ${\testertwo}$ that takes a set $S\subseteq\R^d\times \cube{}$ a vector $\w\in \S^{d-1}$, parameters $\sigma,\delta\in (0,1)$ and runs in time $\poly\left(d,|S|, \log\frac{1}{\delta}\right)$.
Let $\Dgeneric$ denote the uniform distribution over $S$.  If ${\testertwo}$ accepts, then  
\begin{equation}\label{equation:property2_fool_band_indicators} \pr_{(\x,y) \sim \Dgeneric}[|\inn{\w, \x}| \leq \sigma] \in (C_2\sigma, C_3\sigma). \end{equation}
Moreover, if $S$ is obtained by taking at least $\frac{100}{K_1 \sigma^2} \log \left(\frac{1}{\delta}\right)$ i.i.d. samples from a distribution whose $\R^d$-marginal is $D^*$, 
 the test ${\testertwo}$ passes with probability at least $1-\delta$. 
\end{proposition}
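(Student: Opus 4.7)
}

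The plan is to define $\testertwo$ as the most obvious empirical test: on input $S$, $\w$, $\sigma$, compute
$
\hat p := \frac{1}{|S|}\sum_{(\x,y)\in S} \ind\bigl[|\inn{\w,\x}| \leq \sigma\bigr]
$
and accept if and only if $\hat p \in (C_2\sigma, C_3\sigma)$ for constants $C_2, C_3$ to be fixed (concretely, something like $C_2 = K_1/2$ and $C_3 = 2K_2$, where $K_1, K_2$ come from Proposition~\ref{prop:slc}(d)). The runtime claim is immediate: the tester only computes $|S|$ inner products in $\R^d$ and a single threshold comparison, giving $\poly(d, |S|)$ time, with the $\log(1/\delta)$ factor absorbed into the sample size bound.

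Soundness is tautological. Since $\Dgeneric$ is by definition the uniform distribution over $S$, we have
\[
\pr_{(\x,y)\sim \Dgeneric}\bigl[|\inn{\w,\x}| \leq \sigma\bigr] = \hat p,
\]
so whenever $\testertwo$ accepts, this probability lies in $(C_2\sigma, C_3\sigma)$, matching \eqref{equation:property2_fool_band_indicators}.

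For completeness, suppose $S$ consists of $n \geq \frac{100}{K_1\sigma^2}\log(1/\delta)$ i.i.d.\ samples from a distribution with $\R^d$-marginal $\Dtgt$. Let $p := \pr_{\Dtgt}[|\inn{\w,\x}|\leq\sigma]$. Since $\sigma \in [0,1]$ and $\w \in \S^{d-1}$, Proposition~\ref{prop:slc}(d) gives $p \in (K_1\sigma, K_2\sigma)$. The empirical quantity $\hat p$ is the average of $n$ i.i.d.\ Bernoulli indicators with mean $p$, so by Hoeffding's inequality, with probability at least $1-\delta$ we have $|\hat p - p| \leq \sqrt{\log(2/\delta)/(2n)}$. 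The sample-size lower bound makes the right-hand side smaller than $K_1\sigma/2$ (up to an absorbed constant factor), which in turn places $\hat p$ strictly inside $(K_1\sigma/2, 2K_2\sigma) = (C_2\sigma, C_3\sigma)$, so $\testertwo$ accepts.

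There is no real obstacle here; the only thing to be careful about is bookkeeping of the constants so that the acceptance window $(C_2\sigma, C_3\sigma)$ strictly contains the population interval $(K_1\sigma, K_2\sigma)$ with slack $\Theta(\sigma)$ on both sides, since this is what allows Hoeffding with additive error $O(\sigma)$ (rather than a sharper multiplicative Chernoff) to suffice. Using a multiplicative Chernoff bound would in fact yield a tighter sample size of $\tilde O(1/(K_1\sigma))$, but the stated $1/(K_1\sigma^2)$ bound already follows from the weaker Hoeffding argument above.
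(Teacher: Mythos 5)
Your proposal is correct and matches the paper's proof in all essentials: both reduce to Hoeffding's inequality for the Bernoulli indicator of the band, combined with Proposition~\ref{prop:slc}(d) to pin the population probability to $(K_1\sigma, K_2\sigma)$. The only cosmetic difference is the acceptance criterion: you accept iff $\hat{p}$ falls directly in $(C_2\sigma, C_3\sigma)$, whereas the paper accepts iff $\hat{p}$ is within additive $K_1\sigma/2$ of the (computable) target value $\pr_{\Dtgt}[|\inn{\w,\x}|\le\sigma]$ — both yield the same soundness and completeness conclusions with the same Hoeffding bound.
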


\begin{proposition}\label{tester3_truncated}
For any isotropic strongly log-concave $\Dtgt$ and a constant $C_4$, there exists a constant $C_5$ and a tester ${\testerthree}$ that takes a set $S\subseteq\R^d\times \cube{}$ a vector $\w\in \S^{d-1}$, parameters $\sigma,\tau\,\delta \in (0,1)$ and runs in time $\poly\left(d^{\tilde{O}\left(\frac{1}{\tau^2}\right)},\frac{1}{\sigma},|S|, \log\frac{1}{\delta}\right)$.
Let $\Dgeneric$ denote the uniform distribution over $S$, let $T$ denote the band  $\{ \x : |\inn{\w, \x}| \leq \sigma \}$ and let $\mathcal{F}_\w$ denote the set $\{\pm 1\}$-valued functions of $C_4$ halfspaces whose weight vectors are orthogonal to $\w$.  If ${\testerthree}$ accepts, then  
\begin{equation}\label{equation:property3_fool_orthogonal_halfspaces_truncated} 
\max_{f \in \mathcal{F}_\w}
\left| \ex_{\x \sim \Dtgt}[f(\x) \mid \x \in T] - \ex_{(\x,y) \sim \Dgeneric}[f(\x) \mid \x\in T] \right| \leq \tau, \end{equation}
\begin{equation}\label{equation:property4_fool_variance_truncated}  
\max_{\vv \in \S^{d-1}:~ \inn{\vv, \w} =0} 
\left| \ex_{\x \sim \Dtgt}[(\inn{\vv, \x})^2\mid \x \in T] - \ex_{(\x,y) \sim \Dgeneric}[(\inn{\vv, \x})^2 \mid \x\in T] \right| \leq \tau. \end{equation}
Moreover, if $S$ is obtained by taking at least $\left( \frac{1}{\tau} \cdot \frac{1}{\sigma } \cdot d^{\frac{1}{\tau^2} \log^{C_5} \left( \frac{1}{\tau} \right)} \cdot 
\left(\log \frac{1}{\delta}\right)^{\frac{1}{\tau^2} \log^{C_5} \left( \frac{1}{\tau} \right)}  \right)^{C_5}$
i.i.d. samples from a distribution whose $\R^d$-marginal is $D^*$, 
 the test ${\testerthree}$ passes with probability at least $1-\delta$. 

\end{proposition}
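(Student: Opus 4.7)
The plan is to design $\testerthree$ as a moment-matching test applied to the empirical distribution \emph{restricted to the band} $T = \{\x : |\inn{\w,\x}| \leq \sigma\}$, so that Proposition~\ref{prop: moments in band close means we good} can be invoked directly. Concretely, $\testerthree$ will first extract $S_T = \{(\x,y) \in S : |\inn{\w,\x}| \leq \sigma\}$ and reject unless $|S_T|$ is at least a constant fraction of its expected size $K_1\sigma|S|$. Then for every multi-index $\alpha$ with $|\alpha| \leq k := \wt{\Theta}(1/\tau^2)$, it will compute the empirical conditional moment $\hat m_\alpha := \frac{1}{|S_T|}\sum_{(\x,y) \in S_T}\x^\alpha$, compare it with the target moment $m_\alpha^* := \ex_{\x \sim \Dtgt}[\x^\alpha \mid \x \in T]$ (which can be computed from $\Dtgt$ to arbitrary precision), and accept iff $|\hat m_\alpha - m_\alpha^*| \leq d^{-\wt{O}(k)}$ for every such $\alpha$, with the slack calibrated to the hypothesis of Proposition~\ref{prop: moments in band close means we good}.

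For soundness, if the test accepts, then $\Dgeneric_{|T}$ matches all moments of degree at most $k$ with $\Dtgt_{|T}$ up to slack $d^{-\wt{O}(k)}$. Property~\eqref{equation:property3_fool_orthogonal_halfspaces_truncated} is then immediate by applying Proposition~\ref{prop: moments in band close means we good} with $p = C_4$. For property~\eqref{equation:property4_fool_variance_truncated}, I expand $(\inn{\vv,\x})^2 = \sum_{i,j} v_i v_j\, x_i x_j$ and bound the discrepancy by $|\vv^\top \Delta \vv| \leq \|\Delta\|_{\text{op}} \leq \|\Delta\|_F \leq d \cdot d^{-\wt{O}(k)} \ll \tau$, where $\Delta_{ij}$ is the matrix of moment discrepancies. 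Notably the orthogonality $\inn{\vv, \w} = 0$ is not even required for this bound, though of course the statement restricts to that case.

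For completeness, assume $S$ is drawn i.i.d.\ from $\Dtgt$. By Proposition~\ref{prop:slc}(d), $\pr_{\Dtgt}[T] \in (K_1\sigma, K_2\sigma)$, so a Chernoff bound on the indicator $\ind\{\x \in T\}$ gives $|S_T| \geq \Omega(\sigma|S|)$ once $|S| \geq \Omega((1/\sigma)\log(1/\delta))$. Conditional on $S_T$, its elements are i.i.d.\ from $\Dtgt_{|T}$, and for each multi-index $\alpha$ the empirical average $\hat m_\alpha$ must concentrate around $m_\alpha^*$. The requisite moment bound $\ex_{\Dtgt_{|T}}[(\x^\alpha)^{2q}] \leq O((C kq)^{kq}/\sigma)$ follows from Proposition~\ref{prop:slc}(f) combined with the band-mass lower bound. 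Applying Markov's inequality to $(\hat m_\alpha - m_\alpha^*)^{2q}$ with $q = \Theta(\log(d^k/\delta))$, and union-bounding over the $d^k$ monomials, yields the stated sample complexity; the factor $(\log(1/\delta))^{(1/\tau^2)\polylog(1/\tau)}$ arises because a degree-$k$ polynomial of subgaussian variables has tails of the form $\exp(-c t^{2/k})$ rather than subgaussian tails.

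The main technical point is the quantitative concentration of high-degree empirical moments inside the band: the required slack $d^{-\wt{O}(k)}$ is super-polynomially small in $d$, which is precisely what forces the sample complexity to scale as $d^{\wt{O}(1/\tau^2)}$. Strong log-concavity plays a dual role here, guaranteeing both enough band mass (Proposition~\ref{prop:slc}(d)) and the polynomial-moment bounds (Proposition~\ref{prop:slc}(f)) needed for concentration; the rest of the argument, including computing $m^*_\alpha$ and enumerating monomials of degree at most $k$, is routine bookkeeping whose running time is dominated by the $d^{\wt{O}(1/\tau^2)}$ monomials to check.
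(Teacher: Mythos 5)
Your proposal is essentially identical to the paper's own proof: restrict the sample to the band $T$, verify the band has the expected mass (the paper invokes $\testertwo$ directly), compare conditional empirical moments against $\Dtgt_{|T}$ with slack $d^{-\wt{O}(k)}$ for $k=\wt{O}(1/\tau^2)$, deduce \eqref{equation:property3_fool_orthogonal_halfspaces_truncated} from Proposition~\ref{prop: moments in band close means we good}, deduce \eqref{equation:property4_fool_variance_truncated} by expanding $(\inn{\vv,\x})^2$ in monomials, and prove completeness via the conditional polynomial moment bounds from Proposition~\ref{prop:slc}(d),(f) together with Markov's inequality and a union bound over the $d^{O(k)}$ monomials. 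The only cosmetic differences are that the paper literally reuses $\testertwo$ for the band-mass check and writes the monomial degree condition as $|\alpha|=k$ where you (more carefully, matching the hypothesis of Proposition~\ref{prop: moments in band close means we good}) write $|\alpha|\le k$; neither changes the substance of the argument.
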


\section{Testably learning halfspaces with Massart noise}\label{section:massart}

In this section we prove that we can testably learn halfspaces with Massart noise with respect to isotropic strongly log-concave distributions (see Definition \ref{definition:strongly_log_concave}).

\begin{theorem}[Tester-Learner for Halfspaces with Massart Noise]\label{theorem:massart}
    Let $\Djoint$ be a distribution over $\R^d\times \{\pm 1\}$ and let $\Dtgt$ be a strongly log-concave distribution over $\R^d$. Let $\C$ be the class of origin centered halfspaces in $\R^d$. Then, for any $\eta<1/2$, $\eps>0$ and $\delta\in(0,1)$, there exists an algorithm (Algorithm \ref{alg:main}) that testably learns $\C$ w.r.t. $\Dtgt$ up to excess error $\epsilon$ and error probability at most $\delta$ in the Massart noise model with rate at most $\eta$, using time and a number of samples from $\Djoint$ that are polynomial in $d,1/\eps,\frac{1}{1-2\eta}$ and $\log(1/\delta)$.
\end{theorem}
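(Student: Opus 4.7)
The plan is to combine the nonconvex SGD framework of Diakonikolas, Kontonis, Tzamos, and Zarifis for Massart halfspaces under structured distributions with the three testers of Section~\ref{sec:testing}. Concretely, the algorithm runs projected SGD on the smoothed ramp-loss surrogate $\L_\sigma$ over the unit ball --- with slope $\sigma$ chosen polynomially in $\eps$ and $1-2\eta$ --- to produce a polynomial-size list of $\eps'$-approximate stationary points $\w$. For each candidate it runs $\testerone$ (which controls $(\inn{\vv,\x})^k$-concentration in every direction and hence concentration of the empirical loss and gradient) and then $\testertwo, \testerthree$ with parameter $\w$; any failure rejects. Finally, the algorithm outputs the surviving candidate with smallest empirical $0$-$1$ error. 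Completeness follows directly from Propositions~\ref{tester1_polynomials}--\ref{tester3_truncated} together with a standard SGD convergence guarantee on a bounded smooth loss.

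For soundness, I would adapt the stationary-point argument of the prior work to the empirical distribution $D$. Assume $\measuredangle(\w, \wopt) = \theta$ and that all tests accept. The prior argument lower-bounds $\|\nabla \L_\sigma(\w)\|$ by $\Omega(\theta\,(1-2\eta)\,\pr_{\Dtgt}[T])$, where $T = \{\x : |\inn{\w,\x}|\leq\sigma\}$, by integrating the contribution of critical rectangular sub-regions of $\spn(\w,\wopt)$ inside $T$. To reproduce this bound on $D$, I would first use $\testertwo$ to certify $\pr_D[T] = \Theta(\sigma)$, then observe that every such region has the form $R = \{\inn{\w,\x}\in[-\sigma,\sigma],\ \inn{\vv,\x}\in[\alpha,\beta]\}$ with $\vv\perp\w$, and decompose $\pr_D[R] = \pr_D[T]\cdot\pr_D[\inn{\vv,\x}\in[\alpha,\beta]\mid T]$. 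Under $\Dtgt$ the conditional factor is $\Theta(1)$, so $\testerthree$ --- through Proposition~\ref{prop: moments in band close means we good} --- with sufficiently small constant $\tau$ yields constant-factor multiplicative control on $\pr_D[R]$. The resulting empirical gradient lower bound contradicts $\eps'$-stationarity once $\eps' \ll \theta(1-2\eta)\sigma$, giving $\theta \leq O(\eps'/(\sigma(1-2\eta)))$.

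It remains to convert small angular distance into small disagreement on $D$. Under $\Dtgt$, anticoncentration (Proposition~\ref{prop:slc}(d)) applied in $\spn(\w,\wopt)$ gives $\pr_{\Dtgt}[\sign(\inn{\w,\x}) \neq \sign(\inn{\wopt,\x})] = O(\theta)$. The disagreement event is an intersection of two halfspaces and lies inside a band of width $O(\theta)$ around $\w$, so applying $\testerthree$ with band slope $\theta$ and constant $\tau$ transfers the bound to $D$. Since Massart noise satisfies $\pr_{\Djoint}[\sign(\inn{\w,\x}) \neq y] \leq \opt + \pr_D[\sign(\inn{\w,\x}) \neq \sign(\inn{\wopt,\x})]$, choosing $\theta = \Theta(\eps)$ and setting $\sigma, \eps'$ polynomially in $\eps$ and $1-2\eta$ completes the proof.

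The main obstacle I anticipate is converting the additive fooling guarantee of Proposition~\ref{prop:fool-hs} into multiplicative control on regions whose $\Dtgt$-mass is $O(\sigma) \ll 1$: additive constants are useless at this scale. This is precisely what forces the above three-step decomposition --- the one-dimensional band mass is certified multiplicatively by the explicit band-probability test $\testertwo$, while the $\Theta(1)$ conditional probability inside the band is handled by the additive fooling of $\testerthree$ via Proposition~\ref{prop: moments in band close means we good}. A secondary subtlety is that SGD only guarantees an approximate stationary point with constant probability, so the algorithm performs $O(\log(1/\delta))$ independent restarts and applies the testers to each candidate in the aggregated list.
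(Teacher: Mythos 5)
Your reduction of the stationary-point argument to the two testers $\testertwo$ and $\testerthree$ --- certifying $\pr_D[T]=\Theta(\sigma)$ multiplicatively and the conditional rectangle mass $\pr_D[\inn{\vv,\x}\in[\alpha,\beta]\mid T]=\Theta(1)$ additively via Proposition~\ref{prop: moments in band close means we good} --- is exactly the paper's Lemma~\ref{lemma:stationary_points_suffice_massart}, so that part of the proposal is sound and the overall algorithmic skeleton matches Algorithm~\ref{alg:main}.

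The gap is in the step converting $\measuredangle(\w,\wopt)=\theta$ into disagreement on $D$. You claim the disagreement event ``lies inside a band of width $O(\theta)$ around $\w$'' and can therefore be transferred from $\Dtgt$ to $D$ by running $\testerthree$ with band slope $\theta$. This fails for two reasons. First, the disagreement region $\{\sign(\inn{\w,\x})\neq\sign(\inn{\wopt,\x})\}$ is a double wedge through the origin and is \emph{not} contained in any band $\{|\inn{\w,\x}|\le c\theta\}$: for any $t$, there are disagreement points with $|\inn{\w,\x}|=t$ provided $|\inn{\vv,\x}|\gtrsim t/\tan\theta$. Second, $\testerthree$ only fools functions of halfspaces with weights orthogonal to $\w$, whereas the two halfspaces defining the disagreement have normals $\w$ and $\wopt$, neither orthogonal to $\w$. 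The paper's Proposition~\ref{proposition:angle_to_01} instead splits the disagreement into the band part (controlled by $\testertwo$, giving $C_3\sigma'$) and the tail outside the band (controlled by Markov's inequality on $D$ using the degree-$k$ moment bound certified by $\testerone$, giving $(C_1k)^{k/2}(\tan\theta)^k/\sigma'^k$), and balances $\sigma'$ to obtain $\pr_D[\text{disagreement}]\le O(k^{1/2}\theta^{1-1/(k+1)})$ --- not $O(\theta)$. With $k=2$ this is $\theta^{2/3}$; an $O(\theta)$ transfer only holds for the Gaussian target, and even there requires the substantially more involved stripwise-covariance tester of Proposition~\ref{proposition:angle_to_01_improved}, not $\testerthree$. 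The theorem nevertheless survives because in the Massart setting $\theta$ can be driven to $O(\eps^{3/2})$ by choosing $\sigma=\Theta(\eps^{3/2}(1-2\eta))$ in the surrogate loss; your parameter choice $\theta=\Theta(\eps)$, $\sigma=\Theta(\eps(1-2\eta))$ relies on the unjustified $O(\theta)$ bound and must be adjusted accordingly.
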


\begin{algorithm}
\caption{Tester-learner for halfspaces}\label{alg:main}
\KwIn{Training set $S$, parameters $\sigma$, $\delta$}
\KwOut{A near-optimal weight vector $\w$, or rejection}
Run $\testerone(S, k=2, \delta)$ to verify that the empirical marginal is approximately isotropic. Reject if $\testerone$ rejects. \\
Run PSGD on $\L_\sigma$ over $S$ to get a list $L$ of candidate vectors. \\
\For{each candidate $\w$ in $L$}{
Let $B_\w(\sigma)$ denote the band $\{ \x : |\inn{\w, \x}| \leq \sigma \}$. Let $\F_\w$ denote the class of functions of at most two halfspaces with weights orthogonal to $\w$. \\
Let $\delta' = \Theta(\delta/|L|)$. \\
Run $\testertwo(S, \w, \sigma, \delta')$ to verify that $\pr_{S}[B_\w] = \Theta(\sigma)$. Reject if $\testertwo$ rejects. \\
Run $\testerthree(S, \w, \sigma = \sigma/6, \tau, \delta')$ and $\testerthree(S, \w, \sigma = \sigma/2, \tau, \delta')$ for a suitable constant $\tau$ to verify that the empirical distribution conditioned on $B_\w(\sigma/6)$ and $\B_\w(\sigma/2)$ fools $\F_\w$ up to $\tau$. Reject if $\testerthree$ rejects. \\
Estimate the empirical error of $\w$ on $S$.
}
Output the $\w \in L$ with the best empirical error.
\end{algorithm}

To show our result, we revisit the approach of \cite{diakonikolas2020learning} for learning halfspaces with Massart noise under well-behaved distributions. Their result is based on the idea of minimizing a surrogate loss that is non convex, but whose stationary points correspond to halfspaces with low error. They also require that their surrogate loss is sufficiently smooth, so that one can find a stationary point efficiently. While the distributional assumptions that are used to demonstrate that stationary points of the surrogate loss can be discovered efficiently are mild, the main technical lemma, which demostrates that any stationary point suffices, requires assumptions that are not necessarily testable. We establish a label-dependent approach for testing, making use of tests that are applied during the course of our algorithm.

We consider a slightly different surrogate loss than the one used in \cite{diakonikolas2020learning}. In particular, for $\sigma>0$, we let
\begin{equation}\label{equation:surrogate_loss}
    \L_\sigma(\w) = \ex_{(\x,y)\sim \Djointemp} \biggr[ \loss_\sigma\biggr(-y \frac{\inn{\w, \x}}{\norm{\w}_2} \biggr) \biggr],
\end{equation}
where $\loss_\sigma:\R \to [0,1]$ is a smooth approximation to the ramp function with the properties described in Proposition \ref{proposition:activation}, obtained using a piecewise polynomial of degree $3$. Unlike the standard logistic function, our loss function has derivative exactly $0$ away from the origin (for $|t| > \sigma/2$).  This makes the analysis of the gradient of $\L_\sigma$ easier, since the contribution from points lying outside a certain band is exactly $0$.

\begin{proposition}\label{proposition:activation}
    There are constants $c,c'>0$, such that for any $\sigma>0$, there exists a continuously differentiable function $\ell_\sigma:\R \to [0,1]$ with the following properties.
    \begin{enumerate}
        \item For any $t\in[-\sigma/6,\sigma/6]$, $\ell_\sigma(t) = \frac{1}{2}+\frac{t}{\sigma}$.
        \item For any $t>\sigma/2$, $\ell_\sigma(t)=1$ and for any $t<-\sigma/2$, $\ell_\sigma(t) = 0$.
        \item For any $t\in\R$, $\ell_\sigma'(t) \in [0,c/\sigma]$, $\ell_\sigma'(t)=\ell_\sigma'(-t)$ and $|\ell_\sigma''(t)| \le c' / \sigma^2$.
    \end{enumerate}
\end{proposition}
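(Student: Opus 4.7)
The plan is to construct $\ell_\sigma$ explicitly as a piecewise function: a linear piece on $[-\sigma/6,\sigma/6]$ that is forced by property (1), a cubic polynomial on $[\sigma/6,\sigma/2]$ that interpolates smoothly between the linear piece and the constant $1$, the constant $1$ on $[\sigma/2,\infty)$, and an antisymmetric extension $\ell_\sigma(-t) = 1 - \ell_\sigma(t)$ to the negative side (which automatically yields $\ell_\sigma'(-t) = \ell_\sigma'(t)$ and the values $0$ and $1/2$ at the appropriate points). Only the middle cubic piece requires design. Rescaling to a unit interval via $u = 3t/\sigma - 1/2$ maps $[\sigma/6,\sigma/2]$ to $[0,1]$, and the four Hermite conditions $q(0)=2/3,\ q(1)=1,\ q'(0)=1/3,\ q'(1)=0$ (obtained from matching the value and first derivative of the linear piece at $\sigma/6$ and of the constant piece at $\sigma/2$, accounting for the chain rule factor $3/\sigma$) uniquely determine a cubic $q(u)$.

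Solving this $4\times 4$ system is a quick computation and yields $q(u) = -\tfrac{1}{3}u^3 + \tfrac{1}{3}u^2 + \tfrac{1}{3}u + \tfrac{2}{3}$. Next I would verify the three properties. Properties (1) and (2) hold by construction. For (3), continuous differentiability reduces to the interface equalities $\ell_\sigma'(\sigma/6) = 1/\sigma$ and $\ell_\sigma'(\sigma/2) = 0$, which are exactly the two Hermite derivative conditions above. The bounds follow by differentiating $q$: $q'(u) = -u^2 + 2u/3 + 1/3$, whose roots are at $u=1$ and $u=-1/3$, so $q'\ge 0$ on $[0,1]$ with maximum $q'(1/3) = 4/9$. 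Consequently $\ell_\sigma'(t) = (3/\sigma)\, q'(u) \in [0, 4/(3\sigma)]$ on the cubic piece; on the linear piece $\ell_\sigma' = 1/\sigma$, and on the constant tail $\ell_\sigma'=0$. Taking $c = 4/3$ works. For the second derivative, $q''(u) = -2u + 2/3$ is bounded by $4/3$ in absolute value on $[0,1]$, so $|\ell_\sigma''(t)| \le (4/3)(3/\sigma)^2 = 12/\sigma^2$ on the cubic piece, and $0$ elsewhere; hence $c' = 12$ suffices.

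One technical point worth flagging: the construction gives only a $C^1$ function, not $C^2$, since $\ell_\sigma''$ jumps at $t = \pm\sigma/6$ and $t = \pm\sigma/2$. The statement only demands continuous differentiability and a pointwise bound on the second derivative wherever it exists, so this is fine; if a genuinely $C^2$ function were required (which it is not here), one would replace the cubic by a degree-$5$ Hermite interpolant satisfying the additional conditions $p''(\sigma/6) = p''(\sigma/2) = 0$. Finally, the antisymmetric extension preserves all three properties automatically: the value conditions in (2) hold since $\ell_\sigma(-\sigma/2) = 1 - \ell_\sigma(\sigma/2) = 0$ and similarly at $\pm\sigma/6$, and the derivative bounds and evenness of $\ell_\sigma'$ are immediate from differentiating $\ell_\sigma(-t) = 1 - \ell_\sigma(t)$. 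I do not anticipate a real obstacle here; the only slightly delicate step is keeping the chain-rule factors $3/\sigma$ straight when translating the Hermite conditions for $q$ back to derivative conditions for $\ell_\sigma$.
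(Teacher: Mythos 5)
Your construction is essentially identical to the paper's: both pin down a cubic Hermite interpolant on $[\sigma/6,\sigma/2]$ matching value and first derivative of the linear piece at $\sigma/6$ and of the constant piece at $\sigma/2$, and extend by the antisymmetry $\ell_\sigma(-t)=1-\ell_\sigma(t)$. The only cosmetic difference is that you rescale to $u\in[0,1]$ before solving the $4\times 4$ system while the paper works directly in $t$ (obtaining $\ell^+(t)=-\tfrac{9}{\sigma^3}t^3+\tfrac{15}{2\sigma^2}t^2-\tfrac{3}{4\sigma}t+\tfrac58$), and you make the constants explicit ($c=4/3$, $c'=12$) and flag the $C^1$-but-not-$C^2$ subtlety, which the paper leaves implicit.
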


\begin{proof}
    We define $\ell_\sigma$ as follows.
\[
    \ell_\sigma(t) = 
    \begin{cases}
        \frac{t}{\sigma} + \frac{1}{2}, \text{ if }|t| \le \frac{\sigma}{6} \\
        1, \text{ if } t>\frac{\sigma}{2} \\
        0, \text{ if } t<\frac{-\sigma}{2} \\
        \ell^+(t), t\in(\frac{\sigma}{6}, \frac{\sigma}{2}] \\
        \ell^-(t), t\in[-\frac{\sigma}{2}, -\frac{\sigma}{6})
    \end{cases}
\]
for some appropriate functions $\ell^+,\ell^-$. It is sufficient that we pick $\ell^+$ satisfying the following conditions (then $\ell^-$ would be defined symmetrically, i.e., $\ell^-(t)=1-\ell^+(-t)$).
\begin{itemize}
    \item $\ell^+(\sigma/2) = 1$ and $\ell^{+\prime}(\sigma/2) = 0$.
    \item $\ell^+(\sigma/6) = 2/3$ and $\ell^{+\prime}(\sigma/6) = 1/\sigma$.
    \item $\ell^{+\prime \prime}$ is defined and bounded, except, possibly on $\sigma/6$ and/or $\sigma/2$.
\end{itemize}

We therefore need to satisfy four equations for $\ell^+$. So we set $\ell^+$ to be a degree $3$ polynomial: $\ell^+(t) = a_1t^3+a_2t^2+a_3t+a_4$. Whenever $\sigma>0$, the system has a unique solution that satisfies the desired inequalities. In particular, we may solve the equation to get $a_1=-9/\sigma^3, a_2 = 15/(2\sigma^2), a_3=-3/(4\sigma)$ and $a_4=5/8$. For the resulting function (see Figure \ref{fig:activation} below and Figure \ref{fig:activation-derivatives} in the appendix) we have that there are constants $c,c'>0$ such that $\ell^{+\prime}(t) \in [0,c/\sigma]$ and $|\ell^{+\prime\prime}(t)| \le c'/\sigma^2$ for any $t\in[\sigma/6,\sigma/2]$.
\end{proof}

\begin{figure}[t]
  \centering
  \includegraphics[width=0.5\textwidth]{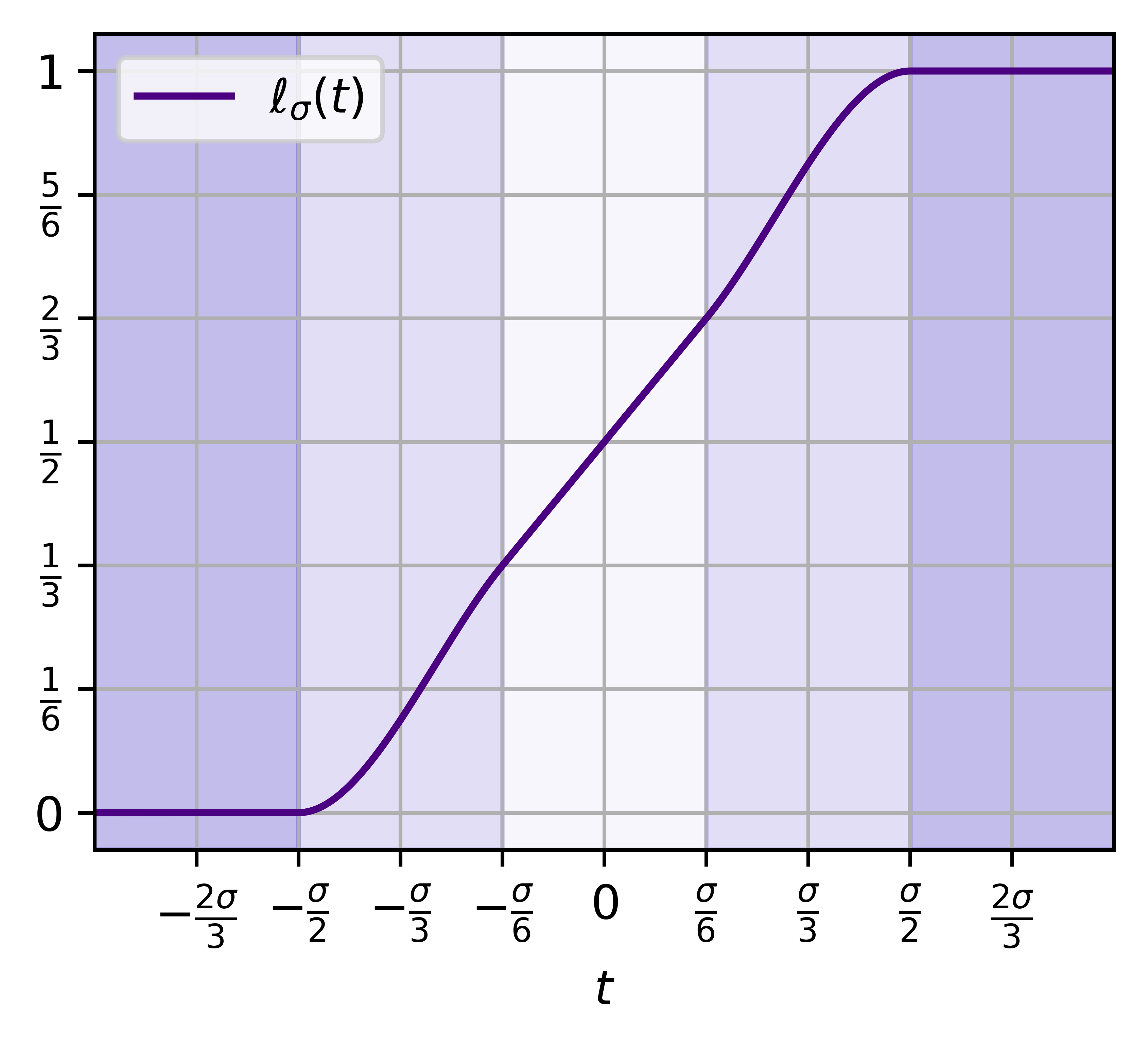}
\caption{The function $\ell_\sigma$ used to smoothly approximate the ramp.}\label{fig:activation}
\end{figure}


The smoothness allows us to run PSGD to obtain stationary points efficiently, and we now state the convergence lemma we need.

\begin{proposition}[PSGD Convergence, Lemmas 4.2 and B.2 in \cite{diakonikolas2020learning}]\label{corollary:sgd}
    Let $\L_\sigma$ be as in Equation \eqref{equation:surrogate_loss} with $\sigma\in(0,1]$, $\ell_\sigma$ as described in Proposition \ref{proposition:activation} and $\Djointemp$ such that the marginal $\Dtrueemp$ on $\R^d$ satisfies Property \eqref{equation:property1_fool_polynomials} for $k=2$. Then, for any $\epsilon>0$ and $\delta\in (0,1)$, there is an algorithm whose time and sample complexity is $O(\frac{d}{\sigma^4} +\frac{\log(1/\delta)}{\eps^4\sigma^4})$, which, having access to samples from $\Djointemp$, outputs a list $L$ of vectors $\w\in\S^{d-1}$ with $|L| = O(\frac{d}{\sigma^4} +\frac{\log(1/\delta)}{\eps^4\sigma^4})$ so that there exists $\w\in L$ with 
    \[
        \norm{\nabla_{\w}\L_\sigma(\w)}_2 \le \epsilon\,, \text{ with probability at least }1-\delta\,.
    \]
    In particular, the algorithm performs Stochastic Gradient Descent on $\L_\sigma$ Projected on $\S^{d-1}$ (PSGD).
\end{proposition}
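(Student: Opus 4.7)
The plan is to carry out a standard analysis of projected stochastic gradient descent for a nonconvex but smooth objective on the sphere. The three ingredients I would need are: (i) a bound on the smoothness (Lipschitz constant of the gradient) of $\L_\sigma$ restricted to $\S^{d-1}$, (ii) a bound on the second moment of an unbiased stochastic gradient, and (iii) the standard nonconvex SGD guarantee that the minimum gradient norm across the trajectory decays like $1/\sqrt{T}$.

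First I would write the gradient explicitly. Since $\w \mapsto \w/\|\w\|$ has Jacobian $(I - \w\w^\top/\|\w\|^2)/\|\w\|$, by the chain rule
\[
\nabla_\w \L_\sigma(\w) = \ex_{(\x,y) \sim \Djointemp}\biggl[ -y\, \ell_\sigma'\!\biggl(-y\frac{\inn{\w,\x}}{\|\w\|}\biggr) \cdot \frac{1}{\|\w\|}\Bigl(I - \tfrac{\w\w^\top}{\|\w\|^2}\Bigr)\x \biggr].
\]
On $\S^{d-1}$ this is automatically the Riemannian (tangent-space) gradient. Using $|\ell_\sigma'| \le c/\sigma$ and $|\ell_\sigma''| \le c'/\sigma^2$ from Proposition~\ref{proposition:activation}, together with the isotropy-type bound $\ex[(\inn{\vv,\x})^2] \le C_1$ guaranteed by property \eqref{equation:property1_fool_polynomials} with $k=2$, a direct Hessian computation shows that $\nabla_\w \L_\sigma$ is $L$-Lipschitz on $\S^{d-1}$ for some $L = O(1/\sigma^2)$. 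The key point is that every derivative of the objective either pays a factor of $1/\sigma$ from the $\ell_\sigma$ derivatives or gets absorbed into a second moment of a directional projection of $\x$, which is uniformly bounded under the tested marginal.

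Next I would verify the stochastic gradient bound. A single-sample gradient $g(\w;\x,y)$ has squared Euclidean norm at most $(c/\sigma)^2 \cdot \|(I - \w\w^\top)\x\|^2 \le (c/\sigma)^2 \|\x\|^2$, and by property \eqref{equation:property1_fool_polynomials} with $k=2$ applied to the coordinate directions (and summed), $\ex[\|\x\|^2] \le C_1 d$. Hence $\ex[\|g(\w;\x,y)\|^2] \le B^2$ with $B^2 = O(d/\sigma^2)$.

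Finally I would invoke the standard projected SGD convergence theorem for $L$-smooth nonconvex functions on the sphere (which follows the same descent-lemma analysis as in Euclidean space, with the exponential retraction on $\S^{d-1}$ contributing only constant factors since the manifold has bounded curvature): running $T$ steps from an arbitrary initializer with constant step size $\eta = \Theta(\epsilon^2/(L B^2))$ gives
\[
\min_{t \le T} \ex\bigl[\|\nabla_\w \L_\sigma(\w_t)\|_2^2\bigr] \le \frac{2(\L_\sigma(\w_0) - \inf \L_\sigma)}{\eta T} + \eta L B^2.
\]
Since $\L_\sigma$ takes values in $[0,1]$, choosing $T = \Theta(L B^2 / \epsilon^4) = \Theta\bigl((d/\sigma^4)/\epsilon^4\bigr)$ and taking $L$ to be the entire list of iterates ensures that with probability at least $1-\delta$ (after standard Markov/amplification arguments via the $\log(1/\delta)$ factor in the sample count) at least one $\w \in L$ satisfies $\|\nabla_\w \L_\sigma(\w)\|_2 \le \epsilon$. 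Matching the claimed $|L| = O(d/\sigma^4 + \log(1/\delta)/(\epsilon^4 \sigma^4))$ is then a matter of bookkeeping the step count and the confidence boosting.

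The main obstacle is really just a careful derivation of the smoothness constant $L = O(1/\sigma^2)$ on the sphere: the normalization $\w/\|\w\|$ produces a Hessian with several terms and one has to check that each is controlled by the bounds from Proposition~\ref{proposition:activation} and the second-moment property \eqref{equation:property1_fool_polynomials}. Once that is done, the convergence statement is standard nonconvex SGD, which is why the proposition can be quoted directly from \cite{diakonikolas2020learning}; only the smoothness calculation is sensitive to the exact form of our $\ell_\sigma$, and the bounds in Proposition~\ref{proposition:activation} are precisely what make that calculation go through unchanged.
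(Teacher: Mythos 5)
The paper does not prove this proposition at all: it is stated as a direct citation of Lemmas 4.2 and B.2 of \cite{diakonikolas2020learning}, and the only adaptation the paper makes is to observe that the distributional hypothesis those lemmas require is exactly Property~\eqref{equation:property1_fool_polynomials} with $k=2$, which is what the tester $\testerone$ certifies. So there is no in-paper proof to compare against; what you have written is a reconstruction of the external argument.

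As such a reconstruction your framework is right (smoothness of $\L_\sigma$ on $\S^{d-1}$, second-moment bound on a one-sample stochastic gradient, standard nonconvex SGD rate), but the bookkeeping does not actually land on the claimed bound. You derive $T = \Theta(LB^2/\epsilon^4) = \Theta(d/(\sigma^4\epsilon^4))$, which is the \emph{product} of $d/\sigma^4$ and $1/\epsilon^4$, whereas the proposition asserts the additive form $O(d/\sigma^4 + \log(1/\delta)/(\epsilon^4\sigma^4))$, which is strictly smaller whenever $\epsilon$ is small. Your closing sentence, that matching the claimed $|L|$ is "a matter of bookkeeping the step count and the confidence boosting," is where the gap is: amplifying a constant success probability to $1-\delta$ costs a $\log(1/\delta)$ factor, but no amplification trick removes a $1/\epsilon^4$ from the $d$-dependent term. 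The cited lemma achieves the decoupling through a specific choice of algorithm (a mini-batch PSGD with a carefully scheduled batch size, not the single-sample SGD your sketch assumes), and to reproduce the statement faithfully you would need to carry out that analysis rather than the generic one.

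A smaller point on the smoothness bound: the Hessian of $\L_\sigma$ on the sphere has a dominant $\ell_\sigma''$ term that does reduce to a directional second moment and gives $O(1/\sigma^2)$, but it also has an $\ell_\sigma'$ term coming from the curvature of $\w \mapsto \w/\norm{\w}$, which produces a first-moment quantity of the form $\ex[\ell_\sigma'(\cdot)\inn{\vv,\x}]$. Property~\eqref{equation:property1_fool_polynomials} with $k=2$ supplies a second-moment bound but no anticoncentration for $\inn{\w,\x}$, so you cannot simply bound $\pr[|\inn{\w,\x}|\le\sigma/2]$ by $O(\sigma)$; you need Cauchy--Schwarz, which yields $O(1/\sigma)$ for this term. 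That is dominated by $O(1/\sigma^2)$ since $\sigma\le 1$, so the final $L = O(1/\sigma^2)$ is correct, but the justification "every derivative gets absorbed into a second moment" glosses over this.
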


It now suffices to show that, upon performing PSGD on $\L_\sigma$, for some appropriate choice of $\sigma$, we acquire a list of vectors that testably contain a vector which is approximately optimal. We first prove the following lemma, whose distributional assumptions are relaxed compared to the corresponding structural Lemma 3.2 of \cite{diakonikolas2020learning}. In particular, instead of requiring the marginal distribution to be ``well-behaved", we assume that the quantities of interest (for the purposes of our proof) have expected values under the true marginal distribution that are close, up to multiplicative factors, to their expected values under some ``well-behaved" (in fact, strongly log-concave) distribution. While some of the quantities of interest have values that are miniscule and estimating them up to multiplicative factors could be too costly, it turns out that the source of their vanishing scaling can be completely attributed to factors of the form $\pr[|\inn{\w, \x}|\le \sigma]$ (where $\sigma$ is small), which, due to standard concentration arguments, can be approximated up to multiplicative factors, given $\w\in\S^{d-1}$ and $\sigma>0$ (see Proposition \ref{tester2_band}). As a result, we may estimate the remaining factors up to sufficiently small additive constants (see Proposition \ref{tester3_truncated}) to get multiplicative overall closeness to the ``well behaved" baseline.

\begin{lemma}\label{lemma:stationary_points_suffice_massart}
    Let $\L_\sigma$ be as in Equation \eqref{equation:surrogate_loss} with $\sigma\in(0,1]$, $\ell_\sigma$ as described in Proposition \ref{proposition:activation}, let $\w\in\S^{d-1}$ and consider $\Djointemp$ such that the marginal $\Dtrueemp$ on $\R^d$ satisfies Properties \eqref{equation:property2_fool_band_indicators} and \eqref{equation:property3_fool_orthogonal_halfspaces_truncated} for $C_4 = 2$ and accuracy $\tau$. 
    Let $\woptemp\in\S^{d-1}$ define an optimum halfspace and let $\eta<1/2$ be an upper bound on the rate of the Massart noise. Then, there are constants $c_1,c_2,c_3>0$ such that if $\norm{\nabla_{\w}\L_\sigma(\w)}_2 < c_1(1-2\eta)$ and $\tau\le c_2$, then
    \[
        \measuredangle(\w,\woptemp) \le \frac{c_3}{{1-2\eta}}\cdot  \sigma \;\;\text{ or }\;\; \measuredangle(-\w,\woptemp) \le \frac{c_3}{{1-2\eta}}\cdot  \sigma
    \]
\end{lemma}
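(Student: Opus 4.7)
The approach is to adapt the stationary-point argument of Lemma 3.2 in \cite{diakonikolas2020learning} to the testable setting, substituting their explicit distributional hypotheses with the guarantees \eqref{equation:property2_fool_band_indicators}--\eqref{equation:property3_fool_orthogonal_halfspaces_truncated} (understood to hold at the needed sub-bandwidths $\sigma/6$, $\sigma/2$, in addition to $\sigma$). Setting $\theta=\measuredangle(\w,\woptemp)$ and replacing $\w$ by $-\w$ if necessary (both hypothesis and conclusion are invariant, since $\ell'_\sigma$ being even implies $\nabla\L_\sigma(-\w)=\nabla\L_\sigma(\w)$), I may assume $\theta\in[0,\pi/2]$, and try to derive a contradiction from $\theta>c_3\sigma/(1-2\eta)$ combined with $\|\nabla\L_\sigma(\w)\|_2<c_1(1-2\eta)$. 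Define $\vv=(\woptemp-\cos\theta\,\w)/\sin\theta$, a unit vector orthogonal to $\w$; in the coordinates $w_x=\inn{\w,\x}$, $v_x=\inn{\vv,\x}$ we have $\inn{\woptemp,\x}=\cos\theta\,w_x+\sin\theta\,v_x$. A Riemannian-gradient computation on the sphere combined with $\ex[y\mid\x]=(1-2\eta(\x))\sign(\inn{\woptemp,\x})$ and the evenness of $\ell'_\sigma$ gives
\[
-\inn{\vv,\nabla\L_\sigma(\w)}=\ex_{\Dtrueemp}\!\left[(1-2\eta(\x))\sign(\cos\theta\,w_x+\sin\theta\,v_x)\,\ell'_\sigma(w_x)\,v_x\right];
\]
since $\|\nabla\L_\sigma(\w)\|_2\geq|\inn{\vv,\nabla\L_\sigma(\w)}|$, it suffices to lower bound the magnitude of this projection by $c_1(1-2\eta)$.

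I would then localize to the band $T=\{|w_x|\leq\sigma/2\}$ (the support of $\ell'_\sigma$) and its sub-band $B_0=\{|w_x|\leq\sigma/6\}$ on which $\ell'_\sigma\equiv 1/\sigma$. By \eqref{equation:property2_fool_band_indicators} applied at these bandwidths, $\pr_{\Dtrueemp}[B_0]=\Theta(\sigma)$ and $\pr_{\Dtrueemp}[T]=\Theta(\sigma)$. Within $B_0$, I would split the integrand by whether $|v_x|\geq\sigma/(6\tan\theta)$ (``outer strip'') or $|v_x|<\sigma/(6\tan\theta)$ (``inner strip''). In the outer strip the sign $\sign(\cos\theta w_x+\sin\theta v_x)$ necessarily equals $\sign(v_x)$, so the integrand is pointwise $|v_x|/\sigma\geq 0$; in the inner strip the sign may flip, but the integrand is bounded in magnitude by $|v_x|/\sigma\leq 1/(6\tan\theta)\leq(1-2\eta)/(6c_3)$. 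Both strips are indicators of at most two halfspaces orthogonal to $\w$, so by \eqref{equation:property3_fool_orthogonal_halfspaces_truncated} their $\Dtrueemp|_{B_0}$-probabilities lie within $\tau$ of the corresponding $\Dtgt|_{B_0}$-probabilities.

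Under $\Dtgt|_{B_0}$, the projection onto $\vv$ is isotropic strongly log-concave (Prop.~\ref{prop:slc}(a), with subgaussian behavior under conditioning on the band as established in the proof of Prop.~\ref{prop: moments in band close means we good}), so by Prop.~\ref{prop:slc}(d) there is an absolute constant $s_0>0$ (e.g.\ $s_0=1$) with $\pr_{\Dtgt|_{B_0}}[|v_x|\geq s_0]\geq\alpha$ for some absolute $\alpha>0$, hence $\pr_{\Dtrueemp|_{B_0}}[|v_x|\geq s_0]\geq\alpha-\tau$. Combined with $\pr_{\Dtrueemp}[B_0]=\Theta(\sigma)$ and the pointwise Massart bound $(1-2\eta(\x))g\geq(1-2\eta)\max(g,0)-\max(-g,0)$, the outer-strip contribution to $-\inn{\vv,\nabla\L_\sigma(\w)}$ is a positive signal of order $(1-2\eta)\,\Omega(\alpha-\tau)$. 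The inner-strip magnitude is bounded by $(1/(6\tan\theta))\cdot(K_2\sigma/(6\tan\theta)+\tau)\cdot\pr[B_0]/\sigma=O((1-2\eta)^2/c_3^2)+O(\tau(1-2\eta)/c_3)$, using Prop.~\ref{prop:slc}(d) on the $\Dtgt$ side and \eqref{equation:property3_fool_orthogonal_halfspaces_truncated} for the tester error. The transition annulus $B_1=T\setminus B_0$ is handled analogously via $\testerthree$ at bandwidth $\sigma/2$ (with $\ell'_\sigma\leq c/\sigma$ absorbed into constants), yielding a contribution of the same sign as the $B_0$ signal.

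The main obstacle is the bookkeeping that balances the positive signal $(1-2\eta)\,\Omega(\alpha)$ against the Massart correction from the inner strip (as negative as $O((1-2\eta)^2/c_3^2)$) plus the additive tester slack $O(\tau(1-2\eta)/c_3)$. Choosing $c_3$ large and $c_2$ small as absolute constants (depending on $K_1,K_2$ from Prop.~\ref{prop:slc}) makes the signal dominate, yielding $|\inn{\vv,\nabla\L_\sigma(\w)}|\geq c_1(1-2\eta)$ for a suitable absolute $c_1>0$, contradicting the stationarity hypothesis. A secondary technical subtlety is that the genuine sign-disagreement region within $B_0$ is a triangle (defined by $w_x<-\tan\theta\,v_x$, an inequality mixing $w_x$ and $v_x$) rather than a function of halfspaces orthogonal to $\w$; I bypass this by enclosing it in the larger rectangular inner strip $\{|v_x|<\sigma/(6\tan\theta)\}\cap B_0$, which \emph{is} covered by \eqref{equation:property3_fool_orthogonal_halfspaces_truncated} via Prop.~\ref{prop: moments in band close means we good}, at the cost of a harmlessly weaker bound.
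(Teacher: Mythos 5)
Your argument is correct and takes essentially the same route as the paper's proof: localize to the band where $\ell_\sigma'$ is active, split the directional derivative along $\vv$ into a dominant signal from a rectangular region far from the origin in the $\vv$-direction (yielding $\Omega(1-2\eta)$) and a small noise term from a rectangular region near the origin (yielding $O(\sigma/\tan\theta)$), transfer the relevant probabilities from $\Dtgt$ to $\Dtrueemp$ via $\testertwo$ and $\testerthree$ applied to functions of at most two halfspaces orthogonal to $\w$, and conclude by taking $c_3$ large and $\tau$ small. Your decomposition by the rectangular threshold $|v_x|\gtrless\sigma/(6\tan\theta)$ is an equivalent repackaging of the paper's split into $\G$, $\G^c$ (which the paper then itself encloses in rectangles $\{|\x_\vv|\in[c'R,2c'R]\}\cap\U_1$ and $\{|\x_\vv|\le c''\sigma/\tan\theta\}\cap\U_2$ to apply the testers); the estimates match term by term. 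Two minor bookkeeping slips worth fixing: the displayed inner-strip bound should read $(1/(6\tan\theta))\bigl(K_2\sigma/(6\tan\theta)+\tau\bigr)\pr[B_0]$ without the stray $/\sigma$ (your stated order $O((1-2\eta)^2/c_3^2)+O(\tau(1-2\eta)/c_3)$ is nonetheless what the correct arithmetic yields), and the remark that $B_1=T\setminus B_0$ ``yields a contribution of the same sign'' glosses over the fact that $B_1$ has its own inner strip at the larger threshold $\sigma/(2\tan\theta)$ (since $\ell_\sigma'$ is nonzero out to $|w_x|=\sigma/2$) whose possibly negative contribution must be bounded by the same argument, using $\ell_\sigma'\le c/\sigma$ and $\testerthree$ at bandwidth $\sigma/2$.
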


\begin{proof}
    We will prove the contrapositive of the claim, namely, that there are constants $c_1,c_2,c_3>0$ such that if $\measuredangle(\w,\woptemp), \measuredangle(-\w,\woptemp) > \frac{c_3}{\sqrt{1-2\eta}}\cdot \sigma$, and $\tau\le c_2$, then $\norm{\nabla_{\w}\L_\sigma(\w)}_2 \ge c_1(1-2\eta)\sigma$.

    Consider the case where $\measuredangle(\w,\woptemp) < \pi/2$ (otherwise, perform the same argument for $-\w$). 
    Let $\vv$ be a unit vector orthogonal to $\w$ that can be expressed as a linear combination of $\w$ and $\woptemp$ and for which $\inn{\vv,\woptemp} = 0$. Then $\{\vv,\w\}$ is an orthonormal basis for $V=\spn(\w,\woptemp)$. For any vector $\x\in\R^d$, we will use the following notation: $\x_{\w} = \inn{\w,\x}$, $\x_{\vv} = \inn{\vv,\x}$. It follows that $\proj_V(\x) = \x_{\w}\w+\x_{\vv}\vv$, where $\proj_V$ is the operator that orthogonally projects vectors on $V$.

    Using the fact that $\nabla_{\w}({\inn{\w,\x}}/{\norm{\w}_2}) = \x-\inn{\w,\x}\w = \x-\x_{\w}\w$ for any $\w\in\S^{d-1}$, the interchangeability of the gradient and expectation operators and the fact that $\ell_\sigma'$ is an even function we get that
    \begin{align*}
        \nabla_{\w}\L_\sigma(\w) = \ex \Bigr[ - \ell_\sigma'( |{\inn{\w,\x}}| ) \cdot y\cdot (\x-\x_{\w} \w) \Bigr]
    \end{align*}
    Since the projection operator $\proj_V$ is a contraction, we have $\norm{\nabla_{\w}\L_\sigma(\w)}_2 \ge \norm{\proj_V\nabla_{\w}\L_\sigma(\w)}_2$,
    and we can therefore restrict our attention to a simpler, two dimensional problem. In particular, since $\proj_V(\x) = \x_{\w}\w+\x_{\vv}\vv$, we get
    \begin{align*}
        \norm{\proj_V\nabla_{\w}\L_\sigma(\w)}_2
        &= \Bigr\| \ex \Bigr[ - \ell_\sigma'( |\x_{\w}| ) \cdot y\cdot \x_{\vv} \vv \Bigr] \Bigr\|_2 \\
        &= \Bigr| \ex \Bigr[ - \ell_\sigma'( |\x_{\w}| ) \cdot y\cdot \x_{\vv} \Bigr] \Bigr| \\
        &= \Bigr| \ex \Bigr[ - \ell_\sigma'( |\x_{\w}| ) \cdot \sign(\inn{\woptemp,\x}) \cdot (1-2\ind\{y\neq \sign(\inn{\woptemp,\x})\})\cdot \x_{\vv} \Bigr] \Bigr|
    \end{align*}
    Let $F(y,\x)$ denote $ 1-2\ind\{y\neq \sign(\inn{\woptemp,\x})\}$. We may write $\x_{\vv}$ as $|\x_{\vv}|\cdot \sign(\x_{\vv})$ and let $\G\subseteq\R^2$ such that $\sign(\x_{\vv})\cdot \sign(\inn{\woptemp, \x}) = -1$ iff $\x\in\G$. Then, $\sign(\x_{\vv})\cdot \sign(\inn{\woptemp, \x}) = \ind\{\x\not\in \G\} - \ind\{\x\in \G \}$. We get
    \begin{align*}
        \norm{&\proj_V\nabla_{\w}\L_\sigma(\w)}_2 = \\
        & = \Bigr| \ex \Bigr[ \ell_\sigma'( |\x_{\w}| ) \cdot (\ind\{\x\in \G\} - \ind\{\x\not\in \G \}) \cdot F(y,\x) \cdot |\x_{\vv}| \cdot  \Bigr] \Bigr| \ge \\
        & \ge \ex \Bigr[ \ell_\sigma'( |\x_{\w}| ) \cdot \ind\{\x\in \G\} \cdot F(y,\x) \cdot |\x_{\vv}|  \Bigr]  -  \ex \Bigr[ \ell_\sigma'( |\x_{\w}| ) \cdot \ind\{\x\not\in \G \} \cdot F(y,\x) \cdot |\x_{\vv}|  \Bigr] 
    \end{align*}
    Let $A_1 = \ex [ \ell_\sigma'( |\x_{\w}| ) \cdot \ind\{\x\in \G\} \cdot F(y,\x) \cdot |\x_{\vv}|  ]$ and $A_2 = \ex [ \ell_\sigma'( |\x_{\w}| ) \cdot \ind\{\x\not\in \G\} \cdot F(y,\x) \cdot |\x_{\vv}|  ]$. (See Figure \ref{fig:regions}.) Note that $\ex_{y|\x}[F(y,\x)] = 1-2\eta(\x) \in [1-2\eta,1]$, where $1-2\eta>0$. Therefore, we have that $A_1 \ge (1-2\eta)\cdot \ex [ \ell_\sigma'( |\x_{\w}| ) \cdot \ind\{\x\in \G\} \cdot |\x_{\vv}| ]$ and $A_2 \le \ex [ \ell_\sigma'( |\x_{\w}| ) \cdot \ind\{\x\not\in \G\} \cdot |\x_{\vv}|  ]$.

    \begin{figure}[t]
  \centering
    \includegraphics[width=0.6\textwidth]{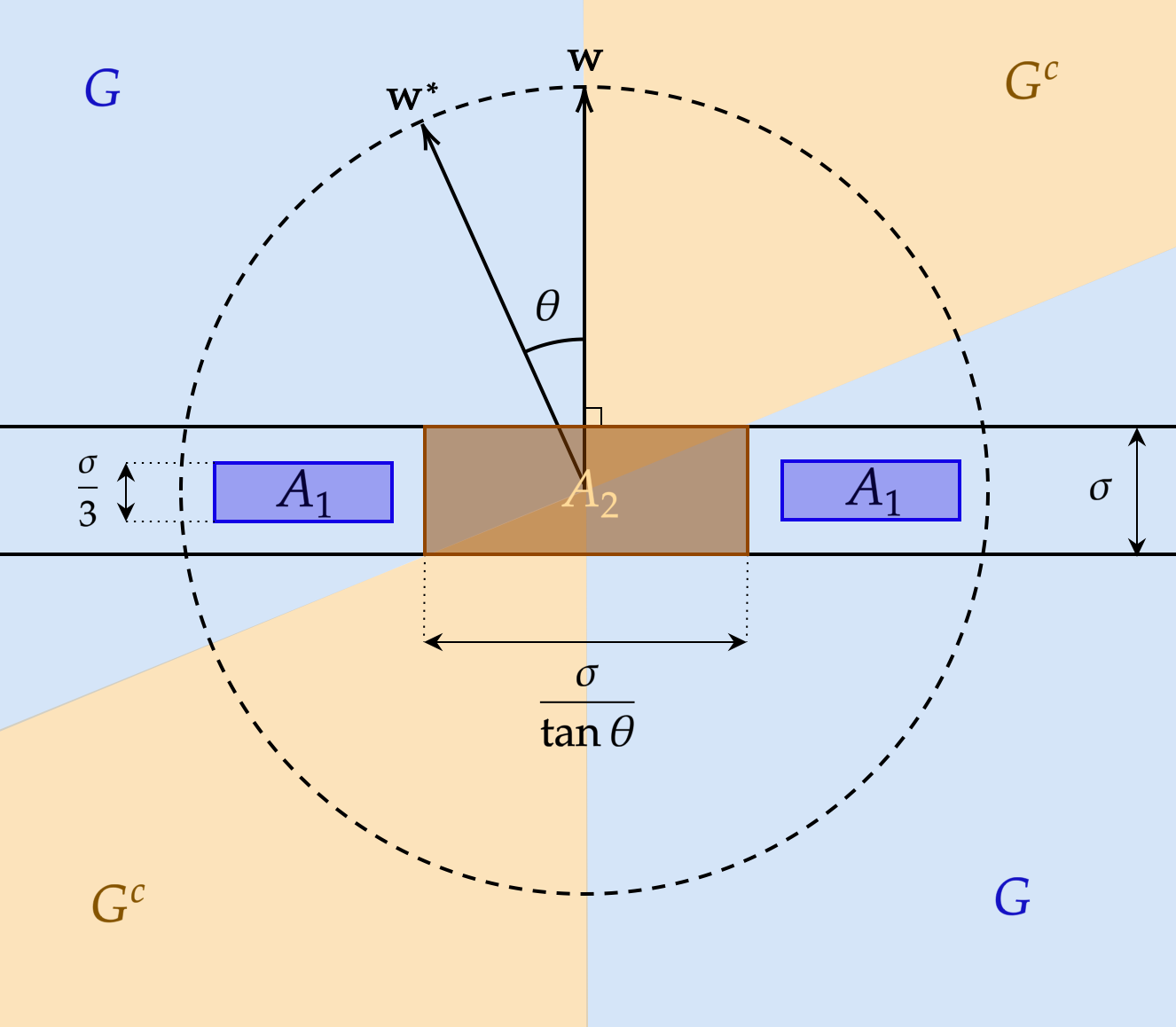}
\caption{Critical regions in the proofs of main structural lemmas (Lemmas \ref{lemma:stationary_points_suffice_massart}, \ref{lemma:stationary_points_suffice_agnostic}).
We analyze the contributions of the regions labeled $A_1, A_2$ to the quantities $A_1, A_2$ in the proofs. Specifically, the regions $A_1$ (which have height $\sigma/3$ so that the value of $\ell_\sigma'(\x_\w)$ for any $\x$ in these regions is exactly $1/\sigma$, by Proposition \ref{proposition:activation}) form a subset of the region $\G$, and their probability mass under $\Dtrueemp$ is (up to a multiplicative factor) a lower bound on the quantity $A_1$ (see Eq \eqref{equation:A1-bound}). Similarly, the region $A_2$ is a subset of the intersection of $\G^c$ with the band of height $\sigma$, and has probability mass that is (up to a multiplicative factor) an upper bound on the quantity $A_2$ (see Eq \eqref{equation:A2-bound}).
}
\label{fig:regions}
\end{figure}
    
    Note that due to Proposition \ref{proposition:activation}, $\ell_\sigma'(|\x_{\w}|) \le c/\sigma$ for some constant $c$ and $\ell_\sigma'(|\x_{\w}|) = 0$ whenever $|\x_{\w}|> \sigma/2$. Therefore, if $\U_2$ is the band $B_{\w}(\sigma/2) = \{\x:|\x_{\w}| \le \sigma/2\}$ we have
    \begin{equation}\label{equation:A1}
        A_2 \le \frac{c}{\sigma}\cdot \ex [ \ind\{\x\not\in \G\}\cdot \ind\{\x\in \U_2\} \cdot |\x_{\vv}|  ]
    \end{equation}
    
    Moreover, for each individual $\x$, we have $\ell_\sigma'( |\x_{\w}| ) \cdot \ind\{\x\in \G\} \cdot |\x_{\vv}| \ge 0$, due to the properties of $\ell_\sigma'$ (Proposition \ref{proposition:activation}). Hence, for any set $\U_1\subseteq\R^d$ we have that 
    \[
        A_1 \ge (1-2\eta) \cdot \ex [\ell_\sigma'( |\x_{\w}| ) \cdot \ind\{\x\in \G\} \cdot \ind\{\x\in\U_1\} \cdot |\x_{\vv}|]
    \]
    Setting $\U_1 = B_{\w}(\sigma/6) = \{\x:|\x_{\w}| \le \sigma/6\}$, by Proposition \ref{proposition:activation}, we get $\ell_\sigma'(|\x_{\w}|) \cdot \ind\{\x\in\U_1\} = \frac{1}{\sigma} \cdot \ind\{\x\in\U_1\}$.
    \begin{equation}\label{equation:A2}
        A_1 \ge \frac{1-2\eta}{\sigma}\cdot \ex [ \ind\{\x\in \G\} \cdot \ind\{\x\in\U_1\} \cdot |\x_{\vv}|]
    \end{equation}

    We now observe that by the definitions of $\G,\U_1,\U_2$,
    for any constant $R>0$, there exist some constants $c',c''>0$ such that if $\sigma/\tan\theta < c'R$ (the points in $\R^2$ where $\partial\overline{\G}$ intersects either $\partial \U_1$ or $\partial\U_2$ have projections on $\vv$ that are $\Theta(\sigma/\tan\theta)$) we have that
    \begin{align*}
        &\ind\{\x\in \G\} \cdot \ind\{\x\in\U_1\} \ge \ind\{|\x_{\vv}|\in [c'R, 2c'R]\} \cdot \ind\{\x\in \U_1\} \;\;\;\text{ and }\\
        &\ind\{\x\in \G\} \cdot \ind\{\x\in\U_2\} \le \ind\{|\x_{\vv}| \le c''\sigma/\tan\theta\} \cdot \ind\{\x\in \U_2\}
    \end{align*}
    By equations \eqref{equation:A1} and \eqref{equation:A2}, we get the following bounds whose graphical representations can be found in Figure \ref{fig:regions}.
    \begin{align}
        A_1& \ge \frac{c'R(1-2\eta)}{\sigma} \cdot \ex [ \ind\{|\x_{\vv}|\in [c'R, 2c'R]\} \cdot \ind\{\x\in \U_1\} ] \label{equation:A1-bound} \\
        A_2& \le \frac{c\cdot c''}{\tan\theta} \cdot \ex [ \ind\{|\x_{\vv}| \le c''\sigma/\tan\theta\} \cdot \ind\{\x\in \U_2\} \label{equation:A2-bound} ]
    \end{align}
    So far, we have used no distributional assumptions. Now, consider the corresponding expectations under the target marginal $\Dtgt$ (which we assumed to be strongly log-concave).
    \begin{align*}
        I_1& = \ex_{\Dtgt} [ \ind\{|\x_{\vv}|\in [c'R, 2c'R]\} \cdot \ind\{\x\in \U_1\} ] \\
        I_2& = \ex_{\Dtgt} [ \ind\{|\x_{\vv}| \le c''\sigma/\tan\theta\} \cdot \ind\{\x\in \U_2\} ]
    \end{align*}
    Any strongly log-concave distribution enjoys the ``well-behaved" properties defined by \cite{diakonikolas2020learning}, and therefore, if $R$ is picked to be small enough, then $I_1$ and $I_2$ are of order $\Theta(\sigma)$ (due to upper and lower bounds on the two dimensional marginal density over $V$ within constant radius balls -- aka anti-anticoncentration and anticoncentration). Moreover, by Proposition \ref{prop:slc}, we have $\pr[\x\in\U_1]$ and $\pr[\x\in\U_2]$ are both of order $\Theta(\sigma)$. Hence we have that there exist constants $c_1',c_2'>0$ such that for the conditional expectations we have
    \begin{align*}
        \ex_{\Dtgt} \bigr[ \ind\{|\x_{\vv}|\in [c'R, 2c'R]\} \;\bigr|\; \ind\{\x\in \U_1\} \bigr] \ge c_1' \\
        \ex_{\Dtgt} \bigr[ \ind\{|\x_{\vv}| \le c''\sigma/\tan\theta\} \;\bigr|\; \ind\{\x\in \U_2\} \bigr] \le c_2'
    \end{align*}
    By assumption, Property \eqref{equation:property3_fool_orthogonal_halfspaces_truncated} holds and, therefore, if $\tau\le c_1'/2,c_2'/2=:c_2$, we get that
    \begin{align*}
        \ex_{\Dtrueemp} \bigr[ \ind\{|\x_{\vv}|\in [c'R, 2c'R]\} \;\bigr|\; \ind\{\x\in \U_1\} \bigr] \ge c_1'/2 \\
        \ex_{\Dtrueemp} \bigr[ \ind\{|\x_{\vv}| \le c''\sigma/\tan\theta\} \;\bigr|\; \ind\{\x\in \U_2\} \bigr] \le c_2'/2
    \end{align*}
    Moreover, by Property \eqref{equation:property2_fool_band_indicators}, we have that (under the true marginal) $\pr[\x\in\U_1]$ and $\pr[\x\in\U_2]$ are both $\Theta(\sigma)$. Hence, in total, we get that for some constants $\tilde{c}_1, \tilde{c}_2$, we have 
    \begin{align*}
        A_1&\ge \tilde{c}_1\cdot ({1-2\eta}) \\
        A_2&\le \tilde{c}_2\cdot \frac{\sigma}{\tan \theta}
    \end{align*}
    Hence, if we pick $\sigma = \Theta((1-2\eta) \tan\theta)$, we get the desired result.    
\end{proof}

Combining Proposition \ref{corollary:sgd} and Lemma \ref{lemma:stationary_points_suffice_massart}, we get that for any choice of the parameter $\sigma\in(0,1]$, by running PSGD on $\L_\sigma$, we can construct a list of vectors of polynomial size (in all relevant parameters) that testably contains a vector that is close to the optimum weight vector. In order to link the zero-one loss to the angular similarity between a weight vector and the optimum vector, we use the following Proposition. 

\begin{proposition}\label{proposition:angle_to_01}
    Let $\Djointemp$ be a distribution over $\R^d\times \{\pm1\}$, $\woptemp\in\arg\min_{\w\in\S^{d-1}}\pr_{\Djointemp}[y\neq \sign(\inn{\w,\x})]$ and $\w\in\S^{d-1}$. 
    Then, for any $\theta \ge \measuredangle(\w,\woptemp)$, $\theta\in[0,\pi/4]$, if the marginal $\Dtrueemp$ on $\R^d$ satisfies Property \eqref{equation:property1_fool_polynomials} for $C_1>0$ and some even $k\in\mathbb{N}$ and Property \eqref{equation:property2_fool_band_indicators} with $\sigma$ set to $(C_1k)^{\frac{k}{2(k+1)}} \cdot (\tan \theta)^{\frac{k}{k+1}}$, then, there exists a constant $c>0$ such that the following is true.
    \[
        \pr_{\Djointemp}[y\neq \sign(\inn{\w,\x})]
        \le \optemp + c\cdot k^{1/2}\cdot \theta^{1-\frac{1}{k+1}}\,.
    \]
\end{proposition}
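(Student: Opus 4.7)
\textbf{Proof plan for Proposition \ref{proposition:angle_to_01}.}

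The plan is to reduce the bound on the $0$-$1$ error to a geometric bound on the disagreement region between $\w$ and $\woptemp$. Since $\opt = \pr_{\Djointemp}[y \neq \sign(\inn{\woptemp,\x})]$, the triangle inequality gives
\[
\pr_{\Djointemp}[y \neq \sign(\inn{\w,\x})] \;\leq\; \opt \;+\; \pr_{\Dtrueemp}\bigl[\sign(\inn{\w,\x}) \neq \sign(\inn{\woptemp,\x})\bigr],
\]
so it suffices to bound the second term by $c\, k^{1/2}\, \theta^{1 - 1/(k+1)}$.

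Next I would capture the disagreement region using a two-dimensional picture. Let $\theta' = \measuredangle(\w,\woptemp) \in [0,\theta]$ and write $\woptemp = \cos\theta'\cdot \w + \sin\theta'\cdot \vv$ for some unit vector $\vv$ orthogonal to $\w$. If $\sign(\inn{\w,\x}) \neq \sign(\inn{\woptemp,\x})$, a short case analysis on the signs of $\inn{\w,\x}$ and $\inn{\woptemp,\x}$ gives
\[
|\inn{\w,\x}| \;\leq\; (\tan\theta')\,|\inn{\vv,\x}| \;\leq\; (\tan\theta)\,|\inn{\vv,\x}|,
\]
using monotonicity of $\tan$ on $[0,\pi/4]$. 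Therefore, for any threshold $t > 0$,
\[
\pr_{\Dtrueemp}[\sign(\inn{\w,\x}) \neq \sign(\inn{\woptemp,\x})] \;\leq\; \pr_{\Dtrueemp}[|\inn{\w,\x}| \leq t\tan\theta] \;+\; \pr_{\Dtrueemp}[|\inn{\vv,\x}| > t].
\]

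Now I would apply the two assumed properties to each piece. For the first summand, Property \eqref{equation:property2_fool_band_indicators} with $\sigma := t \tan\theta$ gives an $O(t \tan\theta)$ bound, provided $\sigma \leq 1$. For the second summand, Markov's inequality combined with Property \eqref{equation:property1_fool_polynomials} yields $\pr[|\inn{\vv,\x}| > t] \leq (C_1 k)^{k/2}/t^k$. Optimizing the sum $C_3\, t\tan\theta + (C_1 k)^{k/2}/t^k$ in $t$ leads to the choice $t = \Theta\bigl((C_1 k)^{k/(2(k+1))}(\tan\theta)^{-1/(k+1)}\bigr)$, which is exactly engineered so that $\sigma = t\tan\theta$ equals the quantity $(C_1 k)^{k/(2(k+1))}(\tan\theta)^{k/(k+1)}$ appearing in the proposition. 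Both terms then evaluate to $O\bigl((C_1 k)^{k/(2(k+1))}(\tan\theta)^{k/(k+1)}\bigr)$.

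Finally I would simplify this expression into the stated form. Since $k/(2(k+1)) \leq 1/2$, we have $(C_1 k)^{k/(2(k+1))} = O(k^{1/2})$, and since $\tan\theta \leq (4/\pi)\theta$ on $[0,\pi/4]$, $(\tan\theta)^{k/(k+1)} = O(\theta^{1 - 1/(k+1)})$, giving the advertised bound $c\, k^{1/2}\, \theta^{1 - 1/(k+1)}$. The main obstacle I anticipate is purely bookkeeping: keeping track of which constants are absorbed where and verifying that the optimizing $t$ reproduces exactly the $\sigma$ required by the hypothesis so that Property \eqref{equation:property2_fool_band_indicators} may be invoked with the prescribed parameter; the regime $\sigma > 1$ corresponds to a trivial bound and can be handled separately.
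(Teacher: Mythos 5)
Your proposal is correct and follows essentially the same route as the paper's proof: triangle inequality to reduce to the disagreement probability, split into a band around the origin (Property \eqref{equation:property2_fool_band_indicators}) and a tail bounded by Markov's inequality on $|\inn{\vv,\x}|^k$ (Property \eqref{equation:property1_fool_polynomials}), then balance the two terms. Your parameter $t$ is just $\sigma/\tan\theta$ in the paper's notation, so the optimization and final simplification coincide.
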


\begin{proof}
    For the following all the probabilities and expectations are over $\Djointemp$.
First we observe that
\begin{align*}
    \pr[y\neq \sign(\inn{\w,\x})] &\le \pr[y\neq \sign(\inn{\w,\x}) \cap y= \sign(\inn{\woptemp,\x})] + \pr[y\neq \sign(\inn{\woptemp,\x})] \le \\
    &\le \pr[\sign(\inn{\w,\x}) \neq \sign(\inn{\woptemp,\x})] + \optemp\,.
\end{align*}

Then, we observe that by assumption that $\Djoint$ satisfies Property~\eqref{equation:property2_fool_band_indicators}, we have
\[
    \pr[|\inn{\w,\x}|\le \sigma] \le C_3\sigma
\]
and that
\[
    \pr[\sign(\inn{\w,\x}) \neq \sign(\inn{\woptemp,\x}) \cap |\inn{\w,\x}|> \sigma] \le \pr\Bigr[|\inn{\vv,\x}|\ge \frac{\sigma}{\tan \theta}\Bigr]\,,
\]
where $\vv$ is some vector perpendicular to $\w$. Using Markov's inequality, we get
\[
    \pr\Bigr[|\inn{\vv,\x}|\ge \frac{\sigma}{\tan \theta}\Bigr] \le \frac{(\tan\theta)^k}{\sigma^k}\cdot \ex[ |\inn{\vv,\x}|^k ]\,.
\]
But, by assumption that $\Djoint$ satisfies Property~\eqref{equation:property1_fool_polynomials}, there is some constant $C_1>0$ such that $\ex[ |\inn{\vv,\x}|^k ] \le (C_1k)^{k/2}$. Thus
\begin{align*}
    \pr[\sign(\inn{\w,\x}) \neq \sign(\inn{\woptemp,\x})] &\le \pr[|\inn{\w,\x}|\le \sigma] \\
     &\qquad+\pr[\sign(\inn{\w,\x}) \neq \sign(\inn{\woptemp,\x}) \cap |\inn{\w,\x}|> \sigma] \\
     &\leq C_3 \sigma + \frac{(C_1k)^{k/2} (\tan \theta)^k}{\sigma^k}.
\end{align*}

By picking $\sigma$ appropriately in order to balance the two terms (note that this is a different $\sigma$ than the one in \cref{lemma:stationary_points_suffice_massart}), we get the desired result.
\end{proof}

We are now ready to prove Theorem \ref{theorem:massart}.

\begin{proof}[Proof of Theorem \ref{theorem:massart}]
    Throughout the proof we consider $\delta'$ to be a sufficiently small polynomial in all the relevant parameters. Each of the failure events will have probability at least $\delta'$ and their number will be polynomial in all the relevant parameters, so by the union bound, we may pick $\delta'$ so that the probability of failure is at most $\delta$.

    In Proposition \ref{corollary:sgd}, Lemma \ref{lemma:stationary_points_suffice_massart} and Proposition \ref{proposition:angle_to_01} we have identified certain properties of the marginal distribution that are sufficient for our purposes. Our testers $\testerone,\testertwo,\testerthree$ verify that these properties hold for the empirical marginal over our sample $S$, and it will be convenient to analyze the optimality of our algorithm purely over $S$. In particular, we will need to require that $|S|$ is sufficiently large, so that when the true marginal is indeed the target $\Dtgt$, our testers succeed with high probability (for the corresponding sample complexity, see Propositions \ref{tester1_polynomials}, \ref{tester2_band} and \ref{tester3_truncated}). Moreover, by standard generalization theory, since the VC dimension of halfspaces is only $O(d)$ and for us $|S|$ is a large $\poly(d, 1/\eps)$, both the error of our final output and the optimal error over $S$ will be close to that over $\Djoint$. So in what follows, we will abuse notation and refer to the uniform distribution over $S$ as $\Djointemp$ and the optimal error over $S$ simply as $\optemp$.

    We begin with some basic tests. Throughout the algorithm, whenever a tester fails, we reject, otherwise we proceed. First, we run $\testerone$ (Proposition \ref{tester1_polynomials}) with $k=2$ to verify that the marginals are approximately isotropic. By Proposition \ref{tester1_polynomials} we get that $\Djointemp$ satisfies the distributional requirement of Proposition \ref{corollary:sgd}.

    Let $\sigma \in(0,1]$ to be defined. We then run PSGD on $\L_\sigma$ as described in Proposition \ref{corollary:sgd} with $\epsilon = c_1(1-2\eta)\sigma/2$, where $c_1$ is given by Lemma \ref{lemma:stationary_points_suffice_massart}. By Proposition \ref{corollary:sgd}, we get a list $L$ of vectors $\w\in\S^{d-1}$ with $|L| = \poly(d,1/\sigma)$ such that there exists $\w\in L$ with $\norm{\nabla_{\w}\L_\sigma(\w)}_2 < c_1(1-2\eta)\sigma$. 
    
    Having acquired the list $L$, for each $\w \in L$, we run testers $\testertwo$ with inputs $(\w,\sigma/2,\delta')$ and $(\w,\sigma/6,\delta')$ (Proposition \ref{tester2_band}) and $\testerthree$ with inputs $(\w,\sigma/2,c_2,\delta')$ and with $(\w,\sigma/6,c_2,\delta')$ (Proposition \ref{tester3_truncated}, $c_2$ as defined in Lemma \ref{lemma:stationary_points_suffice_massart}). This ensures that for each $\w$, the probability within the band $B_\w(\sigma/2) = \{ \x : |\inn{\w, \x}| \leq \sigma/2 \}$ is $\Theta(\sigma)$ (and similarly for $B_\w(\sigma/6)$) and moreover that our marginal conditioned on each of the bands fools (up to an additive constant) functions of halfspaces with weights orthogonal to $\w$.  As a result, we may apply Lemma \ref{lemma:stationary_points_suffice_massart} to each of the elements of $L$ and form a list of $2|L|$ vectors $\w\in\S^{d-1}$ which contains some $\w$ with $\measuredangle(\w,\woptemp) \le c_2\sigma/({1-2\eta})$ (where $c_3$ is as defined in Lemma \ref{lemma:stationary_points_suffice_massart}). 

    Since we have already passed the tester $\testerone$ with $k=2$ and we may use the tester $\testertwo$ once again, with appropriate parameters for each the elements of $L$ and their negations, we may also apply Proposition \ref{proposition:angle_to_01} to get that our list contains a vector $\w$ with
    \[
        \pr_{\Djointemp}[y\neq \sign(\inn{\w,\x})] \le \optemp + c\cdot \theta^{2/3},\,
    \]
    where $\measuredangle(\w,\woptemp) \le \theta := c_2\sigma/\sqrt{1-2\eta}$. By picking $\sigma = \Theta(\epsilon^{3/2}({1-2\eta}))$, we get
    \[
        \pr_{\Djointemp}[y\neq \sign(\inn{\w,\x})] \le \optemp + \epsilon\,.
    \]
    However, we do not know which of the weight vectors in our list is the one guaranteed to achieve small error. In order to discover this vector, we estimate the probability of error of each of the corresponding halfspaces (which can be done efficiently, due to Hoeffding's bound) and pick the one with the smallest error. This final step does not require any distributional assumptions and we do not need to perform any further tests.
\end{proof}

\section{Testably learning halfspaces in the agnostic setting}\label{section:agnostic}

In this section, we prove our result on efficiently and testably learning halfspaces in the agnostic setting. 
\begin{theorem}[Efficient Tester-Learner for Halfspaces in the Agnostic Setting]\label{theorem:agnostic}
        Let $\Djoint$ be a distribution over $\R^d\times \{\pm 1\}$ and let $\Dtgt$ be a strongly log-concave distribution over $\R^d$ (Definition \ref{definition:strongly_log_concave}). Let $\C$ be the class of origin centered halfspaces in $\R^d$. Then, for any even $k\in\mathbb{N}$, any $\eps>0$ and $\delta\in(0,1)$, there exists an algorithm that agnostically testably learns $\C$ w.r.t. $\Dtgt$ up to error $O(k^{1/2}\cdot \opt^{1-\frac{1}{k+1}})+\epsilon$, where $\opt = \min_{\w\in\S^{d-1}}\pr_{\Djoint}[y\neq \sign(\inn{\w,\x})]$, and error probability at most $\delta$, using time and a number of samples from $\Djoint$ that are polynomial in $d^{\tilde{O}(k)},(1/\eps)^{\tilde{O}(k)}$ and $(\log(1/\delta))^{O(k)}$.

        In particular, by picking some appropriate $k\le \log^2 d$, we obtain error $\tilde{O}(\opt) + \eps$ in quasipolynomial time and sample complexity, i.e.\ $\poly(2^{\polylog d}, (\frac{1}{\epsilon})^{\polylog d})$.

        Moreover, if $\Dtgt$ is the standard Gaussian in $d$ dimensions, we obtain error $O(\opt)+\epsilon$ in polynomial time and sample complexity, i.e. $\poly(d,1/\epsilon,\log(1/\delta))$.
\end{theorem}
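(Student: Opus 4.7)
The plan is to reuse Algorithm~\ref{alg:main} essentially unchanged and to closely parallel the proof of Theorem~\ref{theorem:massart}, replacing the Massart-specific structural lemma (Lemma~\ref{lemma:stationary_points_suffice_massart}) by an agnostic analog. Concretely, I would (i) run $\testerone$ with an appropriate degree $k$ (a constant, $\polylog d$, or a user-chosen parameter depending on the regime), (ii) run PSGD on $\L_\sigma$ via Proposition~\ref{corollary:sgd} for a carefully chosen $\sigma$ to obtain a polynomial-size list $L$ of candidates, (iii) for each $\w\in L$ run $\testertwo$ and $\testerthree$ at scale $\sigma$ with a suitable constant tolerance $\tau$, (iv) use the new structural lemma to conclude that some $\w\in L$ satisfies $\measuredangle(\w,\wopt)=O(\sigma)$ (up to sign), (v) translate this angle bound into a $0$-$1$ error bound via Proposition~\ref{proposition:angle_to_01}, and (vi) estimate empirical errors by Hoeffding and output the best candidate.

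The main new ingredient is the agnostic analog of Lemma~\ref{lemma:stationary_points_suffice_massart}. Starting from $\|\proj_V\nabla_\w\L_\sigma(\w)\|_2 \geq |\ex[-\ell'_\sigma(|\x_\w|)\,y\,\x_\vv]|$, I would rewrite $y = \sign(\inn{\wopt,\x})\bigl(1 - 2\ind\{y\neq \sign(\inn{\wopt,\x})\}\bigr)$ to split the expectation into a ``clean'' part (corresponding to $F(y,\x)\equiv 1$, i.e.\ Massart with $\eta=0$) plus a ``noise correction.'' The clean part is bounded below exactly as in Lemma~\ref{lemma:stationary_points_suffice_massart}, using the same regions $A_1,A_2$ inside the bands $\U_1=B_\w(\sigma/6)$, $\U_2=B_\w(\sigma/2)$ together with Properties~\eqref{equation:property2_fool_band_indicators} and~\eqref{equation:property3_fool_orthogonal_halfspaces_truncated} certified by $\testertwo$ and $\testerthree$, yielding $\tilde c_1 - \tilde c_2\,\sigma/\tan\theta$. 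The noise correction is at most
\[
2\,\ex\bigl[\ell'_\sigma(|\x_\w|)\,\ind\{y\neq\sign(\inn{\wopt,\x})\}\,|\x_\vv|\bigr]\;\leq\;\tfrac{2c}{\sigma}\,\ex\bigl[\ind\{\x\in\U_2\}\,\ind\{\text{noisy}\}\,|\x_\vv|\bigr],
\]
and Cauchy--Schwarz plus Properties~\eqref{equation:property2_fool_band_indicators},~\eqref{equation:property4_fool_variance_truncated} give
\[
\tfrac{2c}{\sigma}\sqrt{\opt}\,\sqrt{\ex[\ind\{\x\in\U_2\}\,\x_\vv^2]}\;=\;\tfrac{2c}{\sigma}\sqrt{\opt}\cdot O(\sqrt{\sigma})\;=\;O\!\bigl(\sqrt{\opt/\sigma}\bigr).
\]

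The key balancing is now different from the Massart setting: for the lower bound $\tilde c_1 - \tilde c_2\sigma/\tan\theta - O(\sqrt{\opt/\sigma})$ on $\|\proj_V\nabla_\w\L_\sigma(\w)\|_2$ to exceed a constant, one needs both $\sigma/\tan\theta=O(1)$ \emph{and} $\sigma \geq \Omega(\opt)$. Taking the contrapositive, if the gradient norm is small enough then $\measuredangle(\w,\wopt)\leq O(\sigma)$ provided $\sigma \geq C\cdot\opt$ for a sufficiently large constant $C$. Choosing $\sigma=\Theta(\opt)$ therefore yields $\measuredangle(\w,\wopt)=O(\opt)$. Since $\opt$ is unknown a priori, I would run over a geometric grid of values $\sigma\in\{2^{-i}:0\leq i=O(\log(1/\eps))\}$, enlarging the candidate list only logarithmically, and pick the entry with the smallest empirical $0$-$1$ error (estimable without further distributional assumptions). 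Feeding $\theta=O(\opt)$ into Proposition~\ref{proposition:angle_to_01} then gives final error $\opt + O(k^{1/2}\opt^{1-1/(k+1)}) + \eps$ in time $d^{\tilde O(k)}$; taking $k=\polylog d$ yields $\tilde O(\opt)+\eps$ in quasipolynomial time.

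For the standard Gaussian target I would strengthen the last step using the explicit identity $\pr_{\Dgauss(0,I_d)}[\sign(\inn{\w,\x})\neq\sign(\inn{\wopt,\x})]=\measuredangle(\w,\wopt)/\pi$, replacing Proposition~\ref{proposition:angle_to_01} by a Gaussian-specific polynomial-time test: match moments of the empirical distribution conditioned on the band $B_\w(\sigma)$ (with $\sigma$ a constant) to those of the Gaussian in the spirit of Proposition~\ref{prop: moments in band close means we good}, then invoke Proposition~\ref{prop:fool-hs} on the two-halfspace function $\ind\{\sign(\inn{\w,\x})\neq\sign(\inn{\wopt,\x})\}$ (whose weights lie in $\spn(\w,\wopt)$ and are handled after conditioning) to certify disagreement at most $O(\theta)+\eps$, giving overall error $O(\opt)+\eps$ in $\poly(d,1/\eps)$ time. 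The main obstacle throughout is executing the structural lemma cleanly: the Cauchy--Schwarz bound on the noise correction is what forces the $\sigma=\Theta(\opt)$ scaling and the $O(\opt)$ angular guarantee, and it relies crucially on $\testerthree$ certifying a bounded conditional second moment of $\x_\vv$ for directions $\vv\perp\w$. A secondary subtlety is the unknown-$\opt$ issue, which I handle by gridding over $\sigma$.
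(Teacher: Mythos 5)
Your treatment of the strongly log-concave case is sound and matches the paper's approach almost exactly: the agnostic stationary-point lemma is proved by splitting the gradient into the ``clean'' contribution ($A_1 - A_2$, handled exactly as in Lemma~\ref{lemma:stationary_points_suffice_massart}) plus a noise-correction term $A_3$ bounded by Cauchy--Schwarz together with Properties~\eqref{equation:property2_fool_band_indicators} and~\eqref{equation:property4_fool_variance_truncated}, forcing the constraint $\sigma \geq \Omega(\opt)$; the unknown-$\opt$ issue is handled by gridding over $\sigma$ (the paper uses an arithmetic grid of resolution $\Theta((\eps/\sqrt{k})^{1+1/k})$, your geometric grid of size $O(\log(1/\eps))$ works equally well and is slightly leaner); and the $\wt{O}(\opt)$ quasipolynomial bound follows by running over $k \leq \log^2 d$ and choosing $k^* \approx \log(1/\opt)$, after noting that one may assume $\opt \geq 1/\poly(d)$ by falling back on linear programming otherwise.

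There is, however, a genuine gap in your Gaussian step. You propose to condition on a constant-width band $B_\w(\sigma)$, match conditional moments as in Proposition~\ref{prop: moments in band close means we good}, and then invoke Proposition~\ref{prop:fool-hs} on the two-halfspace function $\ind\{\sign(\inn{\w,\x}) \neq \sign(\inn{\woptemp,\x})\}$ to certify disagreement $O(\theta)$. This fails on two counts. First, one of the two defining weight vectors is $\w$ itself, which is parallel (not orthogonal) to the band normal, so Proposition~\ref{prop: moments in band close means we good} does not apply: its proof only verifies concentration and anticoncentration of $\Dtgt_{|T}$ along directions orthogonal to $\w$, and the projection $\inn{\w,\x}$ conditioned on $T$ is supported on $[-\sigma,\sigma]$, which is fine for concentration but the argument is not stated for that direction. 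Second, and more fundamentally, even if the fooling result did apply, it only gives an \emph{additive} error $\tau$, and achieving $\tau$ requires degree $\wt{O}(1/\tau^2)$ moment matching, hence $d^{\wt{O}(1/\tau^2)}$ time. To certify disagreement $O(\theta)$ when $\theta = \Theta(\opt)$ can be polynomially small, you would need $\tau = O(\theta)$, which is super-polynomial. The paper's Proposition~\ref{proposition:angle_to_01_improved} avoids this by a qualitatively different test: it partitions into $O(\sqrt{\log(1/\theta)}/\theta)$ strips of width $\theta$ in the $\w$-direction, checks only the strip masses and the conditional covariance of $\proj_{\perp\w}(\x)$ within each strip (a single spectral test, not high-degree moments), and then bounds the disagreement within strip $i$ by $O(1/i^2)$ via Chebyshev/Markov on $\inn{\vv,\x}$. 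Summing over strips gives $O(\theta)$ in time $\poly(d, 1/\theta, \log(1/\delta))$. You need some such multi-scale, second-moment argument; a single conditional moment-matching test at constant $\tau$ will not do.
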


To prove Theorem \ref{theorem:agnostic}, we may follow a similar approach as the one we used for the case of Massart noise. However, in this case, the main structural lemma regarding the quality of the stationary points involves an additional requirement about the parameter $\sigma$. In particular, $\sigma$ cannot be arbitrarily small with respect to the error of the optimum halfspace, because, in this case, there is no upper bound on the amount of noise that any specific point $\x$ might be associated with. As a result, picking $\sigma$ to be arbitrarily small would imply that our algorithm only considers points that lie within a region that has arbitrarily small probability and can hence be completely corrupted with the adversarial $\optemp$ budget. 
On the other hand, the polynomial slackness that the testability requirement introduces (through Proposition \ref{proposition:angle_to_01}) between the error we achieve and the angular distance guarantee we can get via finding a stationary point of $\L_\sigma$ (which is now coupled with $\optemp$), appears to the exponent of the guarantee we achieve in Theorem \ref{theorem:agnostic}.

\begin{lemma}\label{lemma:stationary_points_suffice_agnostic}
    Let $\L_\sigma$ be as in Equation \eqref{equation:surrogate_loss} with $\sigma\in(0,1]$, $\ell_\sigma$ as described in Proposition \ref{proposition:activation}, let $\w\in\S^{d-1}$ and consider $\Djointemp$ such that the marginal $\Dtrueemp$ on $\R^d$ satisfies Properties \eqref{equation:property2_fool_band_indicators}, \eqref{equation:property3_fool_orthogonal_halfspaces_truncated} and \eqref{equation:property4_fool_variance_truncated} for $\w$ with $C_4 = 2$ and accuracy parameter $\tau$. Let $\optemp$ be the minimum error achieved by some origin centered halfspace and let $\woptemp\in\S^{d-1}$ be a corresponding vector. Then, there are constants $c_1,c_2,c_3,c_4>0$ such that if $\optemp \le c_1 \sigma$, $\norm{\nabla_{\w}\L_\sigma(\w)}_2 < c_2$, and $\tau\le c_3$ then
    \[
        \measuredangle(\w,\woptemp) \le c_4 \sigma \;\;\text{ or }\;\; \measuredangle(-\w,\woptemp) \le c_4 \sigma.
    \]
\end{lemma}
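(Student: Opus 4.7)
The plan is to adapt the gradient lower-bound argument from the proof of \cref{lemma:stationary_points_suffice_massart}, handling the adversarial corruptions as an explicit additive ``noise'' term that we bound via Cauchy--Schwarz. As before, I would argue the contrapositive: assuming $\measuredangle(\w,\woptemp) > c_4\sigma$ (and handling the $-\w$ case symmetrically), I would show $\|\nabla_\w\L_\sigma(\w)\|_2 \ge c_2$. Project $\nabla_\w\L_\sigma(\w)$ onto $V=\spn(\w,\woptemp)$ as in the Massart proof and decompose
\[
y \;=\; \sign(\inn{\woptemp,\x}) \;-\; 2\,\sign(\inn{\woptemp,\x})\cdot \ind\{y\neq \sign(\inn{\woptemp,\x})\},
\]
which splits $\ex[-\ell_\sigma'(|\x_\w|)\,y\,\x_\vv]$ into a \emph{clean} term (treating the labels as if they were realizable by $\woptemp$) and a \emph{noise} term supported on the disagreement set of $\woptemp$.

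The clean term is identical to the quantity analyzed in \cref{lemma:stationary_points_suffice_massart} in the special case $\eta=0$. So, reusing that argument verbatim (which relies on Properties~\eqref{equation:property2_fool_band_indicators} and~\eqref{equation:property3_fool_orthogonal_halfspaces_truncated} via the two bands $B_\w(\sigma/6)$ and $B_\w(\sigma/2)$ and the orthogonal-weight rectangle indicators), we obtain a lower bound of the form $\tilde c_1 - \tilde c_2\,\sigma/\tan\theta$ on its absolute value, where $\theta=\measuredangle(\w,\woptemp)$. Choosing $c_4$ so that $\tan\theta \ge c_4\sigma \gg \sigma$ forces this clean contribution to be at least some constant $\alpha>0$.

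For the noise term, since $|\sign(\inn{\woptemp,\x})|=1$ and $\ell_\sigma'(|\x_\w|) \le (c/\sigma)\,\ind\{\x\in \U_2\}$ with $\U_2 = B_\w(\sigma/2)$, we bound
\[
\Bigl|\ex\bigl[\ell_\sigma'(|\x_\w|)\,\sign(\inn{\woptemp,\x})\,\ind\{y\neq\sign(\inn{\woptemp,\x})\}\,\x_\vv\bigr]\Bigr|
\;\le\; \frac{c}{\sigma}\,\ex\bigl[|\x_\vv|\,\ind\{\x\in\U_2\}\,\ind\{y\neq \sign(\inn{\woptemp,\x})\}\bigr],
\]
and then Cauchy--Schwarz upper bounds this by $\tfrac{c}{\sigma}\sqrt{\ex[\x_\vv^2\,\ind\{\x\in\U_2\}]}\cdot\sqrt{\optemp}$. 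By Property~\eqref{equation:property4_fool_variance_truncated} (for $\tau\le c_3$ a sufficiently small constant), the conditional second moment $\ex[\x_\vv^2 \mid \x\in \U_2]$ is within an additive constant of the corresponding quantity under $\Dtgt$, which is $\Theta(1)$ by \cref{prop:slc} applied to the (log-concave) projection onto $\spn(\w,\vv)$; combined with Property~\eqref{equation:property2_fool_band_indicators} giving $\pr[\x\in\U_2]=\Theta(\sigma)$, this yields $\ex[\x_\vv^2\,\ind\{\x\in\U_2\}] = \Theta(\sigma)$. Hence the noise term is $O(\sqrt{\optemp/\sigma})$, which is at most $\alpha/2$ whenever $\optemp \le c_1\sigma$ for a small enough constant $c_1$.

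Combining via the triangle inequality gives $\|\nabla_\w\L_\sigma(\w)\|_2 \ge \alpha - \alpha/2 = \alpha/2 =: c_2$, completing the contrapositive. The main obstacle is exactly the coupling between $\sigma$ and $\optemp$: the Cauchy--Schwarz bound $O(\sqrt{\optemp/\sigma})$ on the corruption contribution is only benign when $\optemp\lesssim \sigma$, and this is precisely the source of the hypothesis $\optemp \le c_1\sigma$ in the lemma statement (and ultimately of the $O(\opt)+\eps$ versus $\opt+\eps$ gap in \cref{theorem:agnostic} relative to the Massart result).
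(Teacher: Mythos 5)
Your proposal is correct and matches the paper's proof in all essentials: both decompose the label as $y = \sign(\inn{\woptemp,\x}) - 2\,\sign(\inn{\woptemp,\x})\ind\{y\neq\sign(\inn{\woptemp,\x})\}$, reuse the Massart analysis (with $\eta=0$) for the clean part, bound the corruption term via Cauchy--Schwarz together with Properties~\eqref{equation:property2_fool_band_indicators} and~\eqref{equation:property4_fool_variance_truncated} to get $O(\sqrt{\optemp/\sigma})$, and identify the resulting constraint $\optemp\lesssim\sigma$. The only cosmetic difference is that you take absolute values and apply the triangle inequality at the level of the full noise term rather than restricting the noise contribution to $\G$ as the paper's $A_3$ does, but this yields the same bound.
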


\begin{proof}
    In the agnostic case, the proof is analogous to the proof of Lemma \ref{lemma:stationary_points_suffice_massart}. However, in this case, the difference is that the random variable $F(y,\x) = 1-2\ind\{y\neq \sign(\inn{\woptemp,\x})\}$ does not have conditional expectation on $\x$ that is lower bounded by a constant. Instead, we need to consider an additional term $A_3$ correcponding to the part $2\ind\{y\neq \sign(\inn{\woptemp,\x})\}$ and the term $A_1$ will not be scaled by the factor $(1-2\eta)$ as in Lemma \ref{lemma:stationary_points_suffice_massart}. Hence, with similar arguments we have that
    \[
        \norm{\nabla_{\w} \L_\sigma(\w)}_2 \ge A_1 - A_2 - A_3\,,
    \]
    where $A_1 \ge \tilde{c}_1$, $A_2 \le \tilde{c}_2\cdot \frac{\sigma}{\tan\theta}$ and (using properties of $\ell_\sigma'$ as in Lemma \ref{lemma:stationary_points_suffice_massart} and the Cauchy-Schwarz inequality)
    \begin{align*}
        A_3 
        &= 2\ex [\ell_\sigma'( |\x_{\w}| ) \cdot \ind\{\x\in \G\} \cdot \ind\{y\neq\sign(\inn{\w,\x})\} \cdot |\x_{\vv}|] \le\\
        &\le \frac{2c}{\sigma}\cdot \ex [ \ind\{\x\in \U_2\} \cdot \ind\{y\neq\sign(\inn{\w,\x})\} \cdot |\x_{\vv}|] \le \\
        &\le \frac{2c}{\sigma}\cdot \sqrt{\ex [ \ind\{\x\in \U_2\} \cdot (\x_{\vv})^2]}\cdot \sqrt{\ex [\ind\{y\neq\sign(\inn{\w,\x})\}]} = \\
        &= \frac{2c\sqrt{\optemp}}{\sigma}\cdot \sqrt{\ex [ \inn{\w,\x}^2 \;|\; \x\in\U_2 ]\cdot \pr[\x\in\U_2]}\,.
    \end{align*}
    Similarly to our approach in the proof of Lemma \ref{lemma:stationary_points_suffice_massart}, we can use the assumed properties \eqref{equation:property2_fool_band_indicators} and \eqref{equation:property4_fool_variance_truncated} to get that
    \[
        A_3 \le \tilde{c}_3\frac{\sqrt{\optemp}}{\sqrt{\sigma}}\,,
    \]
    which gives that in order for the gradient loss to be small, we require $\optemp \le \Theta(\sigma)$.
\end{proof}

Before presenting the proof of Theorem \ref{theorem:agnostic}, we prove the following Proposition, which is, essentially, a stronger version of Proposition \ref{proposition:angle_to_01} for the specific case when the target marginal distribution $\Dtgt$ is the standard multivariate Gaussian distribution.
Proposition \ref{proposition:angle_to_01_improved} is important to get an $O(\opt)$ guarantee for the case where the target distribution is the standard Gaussian.

\begin{proposition}\label{proposition:angle_to_01_improved}
Let $\Djointemp$ be a distribution over $\R^d\times \{\pm1\}$, $\woptemp\in\arg\min_{\w\in\S^{d-1}}\pr_{\Djointemp}[y\neq \sign(\inn{\w,\x})]$ and $\w\in\S^{d-1}$. Let $\theta \ge \measuredangle(\w,\woptemp)$ and suppose that $\theta\in[0,\pi/4]$. Then, for a sufficiently large constant $C$, there is a tester that given $\delta \in (0,1)$, $\theta$, $\w$ and a set $S$ of samples from $\Dtrueemp$ with size at least $\left(\frac{d}{\theta} \log \frac{1}{\delta}\right)^C$, runs in time $\poly\left(\frac{1}{\theta}, d, \log \frac{1}{\delta} \right)$ and with probability $1-\delta$ satisfies the following specifications:
\begin{itemize}
    \item If the distribution $\Dtrueemp$ is $\Dgauss(0, I_{d})$, the tester accepts.
    \item If the tester accepts, then 
    we have:
    \[
    \Pr_{\x \sim S}[\sign(\langle \woptemp, \x \rangle) \neq \sign(\langle \w, \x \rangle)]\leq O(\theta)
    \]
\end{itemize}
\end{proposition}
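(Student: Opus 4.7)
The plan is to adapt the band-plus-tail decomposition from the proof of \cref{proposition:angle_to_01}, but to sharpen it using the subgaussian moment growth of the standard Gaussian target. Write $\woptemp = \cos\phi\,\w + \sin\phi\,\vv$ with unit $\vv \perp \w$ and $\phi = \measuredangle(\w, \woptemp) \leq \theta$; as in the proof of \cref{lemma:stationary_points_suffice_massart}, disagreement $\sign(\inn{\w, \x}) \neq \sign(\inn{\woptemp, \x})$ forces $|\inn{\vv, \x}| > \cot\phi \cdot |\inn{\w, \x}|$. For a band $T = \{|\inn{\w, \x}| \leq \sigma\}$ with $\sigma = \Theta(\theta)$, this gives
\[
\Pr_{\x \sim S}[\text{disagree}] \;\leq\; \Pr_S[T \cap \text{disagree}] \;+\; \Pr_S\!\left[|\inn{\w, \x}| > \sigma,\ |\inn{\vv, \x}| > \cot\theta \cdot |\inn{\w, \x}|\right].
\]

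The tester will run $\testerone$ (\cref{tester1_polynomials}) at a large constant degree $k$ to certify $\ex_{\Dtrueemp}[\inn{\vv, \x}^k] \leq (C_1 k)^{k/2}$ uniformly over $\vv \in \S^{d-1}$, yielding subgaussian-like tails by Markov; $\testertwo$ (\cref{tester2_band}) to verify $\Pr_S[T] = \Theta(\sigma)$; and $\testerthree$ (\cref{tester3_truncated}) on $T$ with a small constant additive accuracy $\tau$ to match conditional moments against those of the Gaussian. For the inside-band term, rather than using the trivial bound $\Pr_S[T] = O(\sigma)$, we exploit that inside $T$ we have $|\cos\phi \cdot \inn{\w, \x}| \leq \sigma$, so $\sign(\inn{\woptemp, \x}) = \sign(\inn{\vv, \x})$ whenever $|\inn{\vv, \x}| > \sigma/\sin\phi$. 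The resulting ``well-determined'' piece---where the event collapses to $\sign(\inn{\w, \x}) \neq \sign(\inn{\vv, \x})$, a function of halfspaces orthogonal to $\w$ combined with the known sign of $\inn{\w, \x}$ on each sample---and the ``ambiguous'' piece $\{|\inn{\vv, \x}| \leq \sigma/\sin\phi\} \cap T$ (also a function of halfspaces orthogonal to $\w$) are both within $\tau$ of their 2D Gaussian values conditional on $T$ by \cref{prop: moments in band close means we good}, yielding an inside-band contribution of $O(\theta)$. For the outside-band term, a parallel conditional analysis on dyadic annuli $\{2^j\sigma < |\inn{\w, \x}| \leq 2^{j+1}\sigma\}$ controls the joint tail $|\inn{\w, \x}| \cdot |\inn{\vv, \x}|$, and summing the resulting geometric series in $j$ also gives $O(\theta)$.

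The main obstacle is that a naive Markov bound on $\Pr_S[|\inn{\vv, \x}| > \sigma/\tan\theta]$ alone is only $\Theta(1)$ for $\sigma = \Theta(\theta)$; circumventing this looseness requires both the inside-band sharpening described above and the joint dyadic annular analysis for the outside-band term, each relying on conditional moment tests (extending \cref{tester3_truncated} from symmetric bands to the one-sided/annular regions obtained by further conditioning on the known sign and magnitude of $\inn{\w, \x}$). Choosing $k$ and $\tau$ as universal constants keeps the total running time $\poly(d, 1/\theta, \log(1/\delta))$, with sample complexity dictated by $\testertwo$'s $\Omega(1/\sigma^2) = \Omega(1/\theta^2)$ requirement and $\testerthree$'s polynomial dependence on $1/\sigma$ at constant $\tau$, matching the stated bound $(d/\theta \cdot \log(1/\delta))^C$.
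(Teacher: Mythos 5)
Your overall skeleton is the right one — bin on the $\w$-coordinate, bound the conditional tail of $\inn{\vv,\x}$ on each bin, and sum a geometrically decaying series — but the specific tester machinery you invoke does not actually close the argument, and the proof has a real gap precisely at the step you gloss over.

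The crux is the outside-band term, where you need, on each annulus $A_j=\{2^j\sigma<|\inn{\w,\x}|\le 2^{j+1}\sigma\}$, a bound on $\Pr[|\inn{\vv,\x}|>\cot\theta\cdot 2^j\sigma\mid A_j]$ that decays at least like $1/4^j$. You propose to get this from ``conditional moment tests (extending $\testerthree$ from symmetric bands to\ldots annular regions),'' but this is asserted, not established: \cref{tester3_truncated} and the structural fact it relies on (\cref{prop: moments in band close means we good}) are proved only for symmetric origin-centered bands, and the anticoncentration/subgaussianity verification in that proof uses $\pr_{\Dtgt}[T]=\Omega(\sigma)$ for a band $T$ near the origin, which does not transfer verbatim to a far-out annulus. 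Meanwhile, if instead you lean only on $\testerone$'s global moment bound $\ex[|\inn{\vv,\x}|^k]\le(C_1k)^{k/2}$ and apply Markov at scale $\Theta(1)$ (which is what $\sigma\cot\theta$ is when $\sigma=\Theta(\theta)$), you only recover $O(\theta^{1-1/(k+1)})$ — exactly \cref{proposition:angle_to_01}, not the improved $O(\theta)$. In short: the proposal never actually names a test that certifies the per-bin multiplicative tail decay you need, and the two candidates you gesture at either don't apply (annular $\testerthree$) or don't suffice (global $\testerone$ + Markov). Your inside-band ``sharpening'' is also superfluous: $\Pr_S[T\cap\text{disagree}]\le\Pr_S[T]=O(\sigma)=O(\theta)$ already, straight from $\testertwo$, so the well-determined/ambiguous split buys nothing.

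The paper's proof sidesteps all of this with a bespoke, elementary tester that does not touch $\testerone$, $\testertwo$, $\testerthree$, or \cref{prop: moments in band close means we good} at all. It partitions $\inn{\w,\x}$ into arithmetic bins of width $\theta$ up to $|\inn{\w,\x}|\le\sqrt{2\log(1/\theta)}$, checks that each bin has empirical mass $O(\theta)$ and that the empirical conditional covariance of $\proj_{\perp\w}(\x)$ on each bin is within $0.1$ of $I_{d-1}$ in operator norm, and checks the remaining tail has mass $O(\theta)$. The spectral check plus Chebyshev give $\Pr[|\inn{\vv,\x}|>(\theta/\tan\theta)i\mid\mathrm{bin}_i]\le O(1/i^2)$, so the total disagreement is $\sum_i O(\theta)\cdot O(1/i^2)+O(\theta)=O(\theta)$. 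Completeness for the Gaussian uses the independence of $\inn{\w,\x}$ and $\proj_{\perp\w}(\x)$ to show the conditional covariance is exactly $I_{d-1}$ and the bin masses are $\Theta(\theta)$ to $\Theta(\theta^2)$, so each check passes with high probability given $\left(\frac{d}{\theta}\log\frac{1}{\delta}\right)^C$ samples. The running time is transparently $\poly(d,1/\theta,\log(1/\delta))$, with no $d^{\wt O(1/\tau^2)}$ blow-up from $\testerthree$. Your plan could plausibly be repaired by replacing the appeal to $\testerthree$ on annuli with exactly this kind of per-bin spectral covariance check, but that is the key step, and as written it is missing.
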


\begin{proof}
The testing algorithm does the following:
\begin{enumerate}
    \item \textbf{Given:} Integer $d$, set $S \subset \R^d$, $\w\in \S^{d-1}$, $\theta\in(0,\pi/4]$ and $\delta \in (0,1)$.
    \item Let $\proj_{\perp \w}: \R^d \rightarrow \R^{d-1}$ denote the operator that projects a vector $\x \in \R^d$ to it's projection into the $(d-1)$-dimensional subspace of $\R^{d}$ that is orthogonal to $\w$.
    \item For $i$ in $\left\{0, \pm 1, \cdots, \pm \frac{\sqrt{2\log \frac{1}{\theta}}}{\theta}\right\}$
    \begin{enumerate}
        \item If $\frac{1}{|S|} \sum_{\x \in S} \mathbbm{1}_{\langle \w, \x \rangle \in  [i\theta, (i+1)\theta]} > 2\theta$, then reject.
        \item If $\left\lVert\frac{1}{|S|} \sum_{\x \in S} \mathbbm{1}_{\langle \w, \x \rangle \in  [i\theta, (i+1)\theta]} (\proj_{\perp \w}(\x))(\proj_{\perp \w}(\x))^T - I_{(d-1)} \right\rVert_{\text{op}} > 0.1$, reject.
    \end{enumerate}
    \item If $\frac{1}{|S|} \sum_{\x \in S} \mathbbm{1}_{\left|\langle \w, \x \rangle\right|>\sqrt{2\log \frac{1}{\theta}}} > 5\theta$, then reject.
    \item If reached this step, accept.
\end{enumerate}

If the tester accepts, then we have the following properties for some sufficiently large constant $C'>0$. For the following, consider the vector $\vv\in\R^d$ to be the vector that is perpendicular to $\w$, lies within the plane defined by $\w$ and $\woptemp$ and $\inn{\vv, \woptemp} \le 0$.

\begin{enumerate}
    \item $\pr_{\x\sim S}[|\inn{\w,\x}| \in [\theta i, \theta (i+1)]] \le C'\theta$, for any $i\in\Bigr\{0,\pm 1,\dots,\pm \frac{1}{\theta}\sqrt{2\log \frac{1}{\theta}}\Bigr\}$.
    \item $\pr_{\x\sim S}\Bigr[|\inn{\vv,\x}| > \frac{\theta}{\tan \theta}\cdot i \; \Bigr| \; |\inn{\w,\x}| \in [\theta i, \theta (i+1)] \Bigr] \le C'/i^2$, for any $i\in\Bigr\{0,\pm 1,\dots,\pm \frac{1}{\theta}\sqrt{2\log \frac{1}{\theta}}\Bigr\}$.
    \item $\pr_{\x\sim S}\Bigr[|\inn{\w,\x}| \ge \sqrt{2\log \frac{1}{\theta}}\Bigr] \le C'\theta$.
\end{enumerate}

\def\strip{\mathrm{Strip}}

Then, for $k = \frac{1}{\theta}\sqrt{2\log \frac{1}{\theta}}$ and $\strip_i = \{\x\in\R^d : |\inn{\w,\x}| \in [\theta i, \theta (i+1)]\}$, we have that

\begin{align*}
    \Pr_{\x \sim S}&[\sign(\langle \w, \x \rangle)\neq \sign(\langle \woptemp, \x \rangle)] \le \\
    &\sum_{i=-k}^k \pr_{\x\sim S}[\x \in \strip_i] \cdot \pr_{\x\sim S}\Bigr[|\inn{\vv,\x}| > \frac{\theta}{\tan \theta}\cdot i \; \Bigr| \; \x \in \strip_i \Bigr] + \pr_{\x\sim S}\Bigr[|\inn{\w,\x}| \ge \sqrt{2\log \frac{1}{\theta}}\Bigr] \le \\
    & (C')^2\theta \cdot \left( 1+\sum_{i\neq 0} \frac{2}{i^2}\right) + C'\theta = O(\theta)
\end{align*}

Now, suppose the distribution $D$ is indeed the standard Gaussian $\Dgauss(0, I_{d})$. We would like to show that our tester accepts with probability at least $1-\delta$. Since $D=\Dgauss(0, I_{d})$, we see that for $\x \sim D$ we have that $\x \cdot \w$ is distributed as $\Dgauss(0,1)$. This implies that  
\begin{itemize}
\item For all $i \in \left\{0, \pm 1, \cdots, \pm \frac{\sqrt{2\log \frac{1}{\theta}}}{\theta}\right\}$ we have
\begin{itemize}
     \item $\Pr_{\x \sim \Dgauss(0, I_{d})}\left[ 
    \langle \w, \x \rangle \in  [i\theta, (i+1)\theta] \right] 
    \leq \frac{1}{\sqrt{2\pi}}\theta$
    \item $\Pr_{\x \sim \Dgauss(0, I_{d})}\left[ 
    \langle \w, \x \rangle \in  [i\theta, (i+1)\theta] \right] 
    \geq \theta \cdot \min_{x \in \left[-\sqrt{2\log \frac{1}{\theta}}-\theta,\sqrt{2\log \frac{1}{\theta}}+\theta\right]}\frac{1}{\sqrt{2\pi}}e^{-\frac{x^2}{2}}
    \geq \frac{\theta^2}{10}$
\end{itemize}
    
    \item $\Pr_{\x \sim \Dgauss(0, I_{d})}\left[ 
    \langle \w, \x \rangle \in  [i\theta, (i+1)\theta] \right] 
    \leq \frac{1}{\sqrt{2\pi}}\theta$
    \item $\Pr_{\x \sim \Dgauss(0, I_{d})}\left[ 
    \langle \w, \x \rangle >  2\sqrt{\log \frac{1}{\theta}} \right] 
    =\int_{2\sqrt{\log \frac{1}{\theta}}}^{\infty} \frac{1}{\sqrt{2\pi}}e^{-\frac{x^2}{2}} \, dx \leq \theta \int_{0}^{\infty} \frac{1}{\sqrt{2\pi}}e^{-\frac{x^2}{2}} \, dx =\frac{\theta}{2}$
\end{itemize}
Therefore, via the standard Hoeffding bound, we see that for sufficiently large absolute constant $C$ we have with probability at least $1-\frac{\delta}{4}$ over the choice of $S$ that
\begin{itemize}
    \item For all $i\in \left\{0, \pm 1, \cdots, \pm \frac{\sqrt{2\log \frac{1}{\theta}}}{\theta}\right\}$ we have 
    \begin{itemize}
        \item $\Pr_{\x \sim S}\left[ 
    \langle \w, \x \rangle \in  [i\theta, (i+1)\theta] \right] 
    \leq \theta$
    \item $\Pr_{\x \sim S}\left[ 
    \langle \w, \x \rangle \in  [i\theta, (i+1)\theta] \right] 
    \geq \frac{\theta^2}{20}$
    \end{itemize}
    \item $\Pr_{\x \sim S}\left[ 
    \langle \w, \x \rangle >  2\sqrt{\log \frac{1}{\theta}} \right] \leq
    \theta$
    \item $\Pr_{\x \sim S}\left[ 
    \langle \w, \x \rangle <  -2\sqrt{\log \frac{1}{\theta}} \right] \leq
    \theta$
\end{itemize}
Finally, we would like to show that conditioned on the above, the probability of rejection in step (3b) is small. 
\begin{fact}
\label{fact: spectral concentration of Gaussian variance}
Given a set $S\subset \R^{d-1}$ of i.i.d. samples from $\Dgauss(0, I_{d})$, with probability at least $1-\poly\left(\frac{|S|}{d}\right)$ we have
\[\left\lVert\frac{1}{|S|} \sum_{\x \in S} \mathbbm{1}_{\langle \w, \x \rangle \in  [i\theta, (i+1)\theta]} \x\x^T - I_{(d-1)} \right\rVert_{\text{op}}\leq 0.1
\]
\end{fact}
Now, since each sample $\x_i$ is drawn i.i.d. from $\Dgauss(0, I_{d})$, we have that $\langle \w, \x_i \rangle$ and $\proj_{\perp \w}(\x_i)$ are all independent from each other for all $i$. Since all the events we conditioned on depend on $\{\langle \w, \x_i \rangle\}$ we see that $\{\proj_{\perp \w}(\x_i)\}$ are still distributed as i.i.d. samples from $\Dgauss(0, I_{(d-1)})$. 

Recall that one of the events we have already conditioned on is that $\Pr_{\x \sim S}\left[ 
    \langle \w, \x \rangle \in  [i\theta, (i+1)\theta] \right] 
    \geq \frac{\theta^2}{20}$ for all $i\in \left\{0, \pm 1, \cdots, \pm \frac{\sqrt{2\log \frac{1}{\theta}}}{\theta}\right\}$. This allows us to lower bound by $\theta^2/20$ the fraction of elements in $S$ for which $\langle \w, \x \rangle \in  [i\theta, (i+1)\theta]$. And since, as we described, for all these elements $\x_i$ the vectors $\proj_{\perp \w}(\x_i)$ are distributed as i.i.d. samples from $\Dgauss(0, I_{(d-1)})$, we can use Fact \ref{fact: spectral concentration of Gaussian variance} to conclude that for sufficiently large absolute constant $C$, when $|S|=\left(\frac{d}{\theta} \log \frac{1}{\delta}\right)^C$ we have with probability $1-\frac{\delta}{4}$ for all $i\in \left\{0, \pm 1, \cdots, \pm \frac{\sqrt{2\log \frac{1}{\theta}}}{\theta}\right\}$ that 
    \[
    \left\lVert\frac{1}{|S|} \sum_{\x \in S} \mathbbm{1}_{\langle \w, \x \rangle \in  [i\theta, (i+1)\theta]} (\proj_{\perp \w}(\x))(\proj_{\perp \w}(\x))^T - I_{(d-1)} \right\rVert_{\text{op}} \leq 0.1
    \]
    Overall, this allows us to conclude that with probability at least $1-\delta$ the tester accepts. 
\end{proof}

We can now prove \cref{theorem:agnostic}.

\begin{proof}[Proof of \cref{theorem:agnostic}]
    We will follow the same steps as for proving Theorem \ref{theorem:massart}. Once more, we draw a sufficiently large sample so that our testers are ensured to accept with high probability when the true marginal is indeed the target marginal $\Dtgt$ and so that we have generalization, i.e.\ the guarantee that any approximate minimizer of the empirical error (error on the uniform empirical distribution over the sample drawn) is also an approximate minimizer of the true error.

    The main difference between the Massart noise case and the agnostic case is that in the former we were able to pick $\sigma$ arbitrarily small, while in the latter we face a more delicate tradeoff. To balance competing contributions to the gradient norm, we must ensure that $\sigma$ is at least $\Theta(\optemp)$ while also ensuring that it is not too large. And since we do not know the value of $\opt$, we will need to search over a space of possible values for $\sigma$ that is only polynomially large in relevant parameters (similar to the approach of \cite{diakonikolas2020non}). In our case, we may sparsify the space $(0,1]$ of possible values for $\sigma$ up to accuracy $\Theta((\frac{\epsilon}{\sqrt{k}})^{1+1/k})$ and form a list of $\poly(k/\epsilon)$ possible values for $\sigma$, one of which will satisfy $c_1\sigma - \Theta((\frac{\epsilon}{\sqrt{k}})^{1+1/k}) \le \optemp \le c_1\sigma$. hence, we perform the same (testing-learning) process for each of the possible values of $\sigma$ and get a list of candidate vectors which is still of polynomial size.

    The final step is, again, to use Proposition \ref{proposition:angle_to_01}, after running tester $\testerone$ with parameter $k$ (Proposition \ref{tester1_polynomials}) and tester $\testertwo$ with appropriate parameters for each of the candidate weight vectors. We get that our list contains a vector $\w$ with 
    \[
        \pr_{\Djointemp}[y\neq \sign(\inn{\w,\x})] \le \optemp + c\cdot k^{1/2}\cdot  \theta^{1-1/(k+1)},\,
    \]
    where $\measuredangle(\w,\woptemp) \le \theta := c_2\sigma$ for $\sigma$ such that $c_1\sigma - \Theta((\frac{\epsilon}{\sqrt{k}})^{1+1/k}) \le \optemp \le c_1\sigma$. 
    \[
        \pr_{\Djointemp}[y\neq \sign(\inn{\w,\x})] \le \optemp + c \sqrt{k}\cdot \Bigr(\frac{c_2}{c_1}\opt + \Theta\Bigr(\Bigr(\frac{\epsilon}{\sqrt{k}}\Bigr)^{1+\frac{1}{k}}\Bigr)\Bigr)^{1-\frac{1}{k+1}} \le O(\sqrt{k}\cdot \opt^{1-\frac{1}{k+1}}) + \epsilon\,.
    \]
    However, we do not know which of the weight vectors in our list is the one guaranteed to achieve small error. In order to discover this vector, we estimate the probability of error of each of the corresponding halfspaces (which can be done efficiently, due to Hoeffding's bound) and pick the one with the smallest error. This final step does not require any distributional assumptions and we do not need to perform any further tests.

    In order to obtain our $\tilde{O}(\opt)$ quasipolynomial time guarantee, observe first that we may assume without loss of generality that $\opt \geq 1/d^C$ for some $C$; if instead $\opt = o(1/d^2)$, say, then a sample of $O(d)$ points will with high probability be noiseless, and so simple linear programming will recover a consistent halfspace that will generalize. Moreover, we may assume that $\opt \le 1/10$, since otherwise achieving $O(\opt)$ is trivial (we may output an arbitrary halfspace). Let us adapt our algorithm so that we run tester $\testerone$ (see Proposition~\ref{tester1_polynomials}) multiple times for all $k = 1,2,\dots, \lceil\log^2 d\rceil$ (this only changes our time and sample complexity by a $\polylog(d)$ factor). Then Proposition~\ref{proposition:angle_to_01} holds for some $k^*$ such that $k^*\in[\log(1/\opt), 2\log(1/\opt)]$, since the interval has length at least $1$ (and therefore it contains some integer) and $2\log(1/\opt) \le 2C \log d \leq \log^2 d$ (for large enough $d$).
    Therefore, by picking the best candidate we get a guarantee of order
    \begin{align*} 
        \sqrt{k^*}\cdot \opt^{1 - 1/{k^*}} &= \sqrt{k^*}\cdot \opt^{-1/k^*} \opt \\
        &= \sqrt{k^*} \cdot 2^{\frac{1}{k^*} \log \frac{1}{\opt}} \cdot \opt \\
        &\le \sqrt{2\log(1/\opt)} \cdot 2 \cdot \opt \tag*{(since $\log(1/\opt) \leq k^* \leq 2\log(1/\opt)$)} \\
        &= \wt{O}(\opt)\,.
    \end{align*}

    Finally, when the target distribution is the standard Gaussian in $d$ dimensions, we may apply Proposition \ref{proposition:angle_to_01_improved} (and run the corresponding tester), instead of Proposition \ref{proposition:angle_to_01}, in order to ensure that our list will contain a vector $\w$ with
    \[
        \pr_{\Djointemp}[y\neq \sign(\inn{\w,\x})] \le \pr_{\Djointemp}[y\neq \sign(\inn{\w^*,\x})] + \pr_{\Djointemp}[\sign(\inn{\w^*,\x})\neq \sign(\inn{\w,\x})] \le \optemp + O(\theta)\,,
    \]
    where $\measuredangle(\w,\w^*)\le \theta := c_2\sigma$ and $\sigma$ is such that $c_1\sigma -\Theta(\epsilon) \le \opt \le c_1\sigma$, which gives the desired $O(\opt)+\epsilon $ bound. To get the value of $\sigma$ with the desired property, we once again sparsified the space $(0,1]$ of possible values for $\sigma$, this time up to accuracy $\Theta(\epsilon)$.
\end{proof}


\bibliographystyle{alpha}
\bibliography{refs}

\appendix


\section{Proofs for Section \ref{sec:testing}}\label{appendix:testing-proofs}


\subsection{Proof of Proposition~\ref{tester1_polynomials}}

The tester $\testerone$ does the following:
\begin{enumerate}
    \item For all $\alpha \in \Z_{\geq 0}^d$ with $|\alpha| =k$:
    \begin{enumerate}
        \item Compute the corresponding moment $\ex_{(\x,y) \sim \Dgeneric} \x^{\alpha}:=\frac{1}{|S|}\sum_{\x \in S} \x^{\alpha}$.
        \item If $\left\lvert\ex_{(\x,y) \sim \Dgeneric} [\x^{\alpha}]
        -
        \ex_{\x \sim \Dtgt} \left[\x^{\alpha}\right]
       \right\rvert > \frac{1}{d^k}$ then reject.
    \end{enumerate}
    \item If all the checks above passed, accept.
\end{enumerate}
First, we claim that for some absolute constant $C_1$, if the tester above accepts, we have $\ex_{(\x,y) \sim \Dgeneric}[(\inn{\vv, \x})^k] \leq (C_1k)^{k/2}$ for any $\vv \in \S^{d-1}$. To show this, we first recall that by Proposition~\ref{prop:slc}(e) it is the case that $\ex_{(\x,y) \sim \Dtgt}[(\inn{\vv, \x})^k] \leq (K_3 k)^{k/2}$. But we have 
\begin{align*}
\left\lvert
\ex_{(\x,y) \sim \Dgeneric}[(\inn{\vv, \x})^k] -\ex_{(\x,y) \sim \Dtgt}[(\inn{\vv, \x})^k] 
\right\rvert 
&\leq
\sum_{\alpha : |\alpha| = k}
\left\lvert \ex_{(\x,y) \sim \Dgeneric} [\x^{\alpha}]
        -
        \ex_{\x \sim \Dtgt} [\x^{\alpha}]
       \right\rvert 
       \\
       &\leq
       d^k \cdot \max_{\alpha : |\alpha| = k}
\left\lvert\ex_{(\x,y) \sim \Dgeneric} [\x^{\alpha}]
        -
        \ex_{\x \sim \Dtgt} [\x^{\alpha}]
       \right\rvert 
       \leq 1
\end{align*}
Together with the bound $\ex_{(\x,y) \sim \Dtgt}[(\inn{\vv, \x})^k] \leq (K_3 k)^{k/2}$, the above implies that $\ex_{(\x,y) \sim \Dgeneric}[(\inn{\vv, \x})^k] \leq (C_1k)^{k/2}$ for some constant $C_1$.

Now, we need to show that if the elements of $S$ are chosen i.i.d. from $\Dtgt$, and $|S|\geq \left( d^k, \left(\log \frac{1}{\delta}\right)^k \right)^{C_1}$ then the tester above accepts with probability at least $1-\delta$. Consider any specific multi-index $\alpha \in \Z_{\geq 0}^d$ with $|\alpha|=k$. Now, by Proposition~\ref{prop:slc}(f) we have the following:
\begin{align*}
\ex_{\x \sim \Dtgt} \left[\left(\x^{\alpha}- \ex_{\z \sim \Dtgt} \left[\z^{\alpha}\right]\right)^{2\log(1/\delta)}\right]
&\leq
\sum_{\ell=0}^{2\log(1/\delta)}
\left(
\ex_{\x \sim \Dtgt} \left(\x^{\alpha}\right)^{\ell}
\right)
\cdot \left(\ex_{\z \sim \Dtgt} \left[\z^{\alpha} \right]\right)^{2 \log(1/\delta)-\ell}
\\
 &\leq \sum_{\ell=0}^{2\log(1/\delta)} (K_4 \ell k)^{\ell k/2} (K_4 k)^{k(2\log(1/\delta)-\ell)/2} \\
 &\leq 
 2\log(1/\delta) (2K_4\log(1/\delta)k)^{\log(1/\delta)k}
\end{align*}

This, together with Markov's inequality implies that 
\[
\pr
\left[
\left\lvert\frac{1}{|S|}\sum_{\x \in S} \x^{\alpha}
        -
        \ex_{\x \sim \Dtgt} \left[\x^{\alpha}\right]
       \right\rvert > \frac{1}{d^k}
\right]
\leq
\left( 
\frac{d^k (3K_4 k\log(1/\delta))^{k/2}}{|S|}
\right)^{2\log(1/\delta)}
\]
Since $S$ is obtained by taking at least $|S| \geq \left( d^k, \left(\log \frac{1}{\delta}\right)^k \right)^{C_1}$, for sufficiently large $C_1$ we see that the above is upper-bounded by $\frac{1}{d^k} \delta$. Taking a union bound over all $\alpha \in \Z_{\geq 0}^d$ with $| \alpha|=k$, we see that with probability at least $1-\delta$ the tester $\testerone$ accepts, finishing the proof.

\subsection{Proof of Proposition~\ref{tester2_band}}
Let $K_1$ be as in part (d) of Proposition \ref{prop:slc}.
    The tester ${\testertwo}$ computes the fraction of elements in $S$ that are in $T$. If this fraction is $K_1\sigma/2$-close to $ \pr_{\x \sim \Dtgt}[|\inn{\w, \x}| \leq \sigma] $, the algorithm accepts. The algorithm rejects otherwise. 

    Now, from (d) of Proposition \ref{prop:slc} we have that $\pr_{\x \sim \Dtgt}[|\inn{\w, \x}| \leq \sigma]\in [K_1 \sigma, K_2 \sigma]$. Therefore, if the fraction of elements in $S$ that belong in $T$ is $K_1\sigma/100$-close to $\pr_{\x \sim \Dtgt}[|\inn{\w, \x}| \leq \sigma]$, then this quantity is in $[K_1 \sigma/2, (K_2+K_1/2) \sigma]$ as required.

    Finally, if $|S|\geq \frac{100}{K_1 \sigma^2} \log \left(\frac{1}{\delta}\right)$ by standard Hoeffding bound, with probability at least $1-\delta$ we indeed have that the fraction of elements in $S$ that are in $T$ is $K_1\sigma/2$-close to $ \pr_{\x \sim \Dtgt}[|\inn{\w, \x}| \leq \sigma]$.

\subsection{Proof of Proposition~\ref{tester3_truncated}}

The tester ${\testerthree}$ does the following:
    \begin{enumerate}
     \item Runs the tester ${\testertwo}$ from Proposition \ref{tester2_band}. If ${\testertwo}$ rejects, ${\testerthree}$ rejects as well. 
     \item Let $S_{|T}$ be the set of elements in $S$ for which $\x \in T$.
     \item Let $k=\tilde{O}(1/\tau^2)$ be chosen as in Proposition \ref{prop: moments in band close means we good}.
     \item For all $\alpha \in \Z_{\geq 0}^d$ with $|\alpha|=k$:
    \begin{enumerate}
        \item Compute the corresponding moment $\ex_{(\x,y) \sim \Dgeneric}[\x^{\alpha} \mid \x\in T]:=\frac{1}{|S_{|T}|}\sum_{\x \in S_{|T}} \x^{\alpha}$.
        \item If $\left\lvert\ex_{(\x,y) \sim \Dgeneric}[\x^{\alpha} \mid \x\in T]
        -
        \ex_{\x \sim \Dtgt}[\x^{\alpha} \mid \x\in T]
       \right\rvert > \frac{\tau}{d^k} \cdot d^{-\tilde{O}(k)}$ then reject, where the polylogarithmic in $d^{-\tilde{O}(k)}$ is chosen to satisfy the additive slack condition in Proposition \ref{prop: moments in band close means we good}. 
    \end{enumerate}
    \item If all the checks above passed, accept.
    \end{enumerate}
First, we argue that if the checks above pass, then Equations \ref{equation:property3_fool_orthogonal_halfspaces_truncated} and \ref{equation:property4_fool_variance_truncated} will hold. If the tester passes, Equation \ref{equation:property3_fool_orthogonal_halfspaces_truncated} follows immediately from the guarantees in step (4b) of ${\testerthree}$ together with Proposition \ref{prop: moments in band close means we good}. 
Equation \ref{equation:property4_fool_variance_truncated}, in turn, is proven as follows:
\begin{align*}
\left\lvert
\ex_{(\x,y) \sim \Dgeneric}[(\inn{\vv, \x})^2] -\ex_{(\x,y) \sim \Dtgt}[(\inn{\vv, \x})^2] 
\right\rvert 
&\leq
\sum_{\alpha : |\alpha| = 2}
\left\lvert{\ex_{(\x,y) \sim \Dgeneric} [\x^{\alpha}]}
        -
        \ex_{\x \sim \Dtgt} \left[\x^{\alpha}\right]
       \right\rvert 
       \\
       &\leq
       d^2 \cdot \max_{\alpha : |\alpha| = 2}
\left\lvert{\ex_{(\x,y) \sim \Dgeneric} [\x^{\alpha}]}
        -
        \ex_{\x \sim \Dtgt} \left[\x^{\alpha}\right]
       \right\rvert 
       \leq \tau
\end{align*}

Now, we need to show that if the elements of $S$ are chosen i.i.d. from $\Dtgt$, and $|S|\geq ...$ then the tester above accepts with probability at least $1-\delta$. Consider any specific mult-index $\alpha\in \Z_{\geq 0}^d$ with $|\alpha|=k$. Now, by Proposition \ref{prop:slc}(f) we have for any positive integer $\ell$ the following:
\[
\ex_{\x \sim \Dtgt} \left[ \left\lvert
 \left(\x^{\alpha}\right)^{\ell}
\right\rvert\right]
\leq 
(K_4 \ell k)^{k/2}
\]
But by Proposition \ref{prop:slc}(d) we have that $\pr_{\x \sim \Dtgt}[\x \in T]=\pr_{\x \sim \Dtgt}[|\inn{\x, \w} | \leq \sigma] \geq K_1 \sigma$. Therefore, the density of the distribution $\Dtgt_{|T}$ (which is defined as the distribution one obtains by taking $\Dtgt$ and conditioning on $\x \in T$) is upper bounded by the product of the density of the distribution $\Dtgt$ and $\frac{1}{K_1 \sigma}$. This allows us to bound 
\[
\ex_{\x \sim \Dtgt} \left[ \left\lvert
 \left(\x^{\alpha}\right)^{\ell}
\right\rvert
\mid \x \in T
\right]
\leq
\frac{1}{K_1\sigma}
\ex_{\x \sim \Dtgt} \left[ \left\lvert
 \left(\x^{\alpha}\right)^{\ell}
\right\rvert\right]
\leq 
\frac{(K_4 \ell k)^{k/2}}
{K_1\sigma}
\]
This implies that
\begin{align*}
&\ex_{\x \sim \Dtgt} \left[\left(\x^{\alpha}- \ex_{\z \sim \Dtgt} \left[\z^{\alpha}
\mid \z \in T
\right]\right)^{2\log(1/\delta)}\mid \x \in T\right]
\\
&\leq
\sum_{\ell=0}^{2\log(1/\delta)}
\left(
\ex_{\x \sim \Dtgt} \left[\left(\x^{\alpha}\right)^{\ell} \mid \x \in T\right]
\right)
\cdot \left(\ex_{\x \sim \Dtgt} \left[\left(\x^{\alpha} \mid \x \in T\right] \right)\right)^{2 \log(1/\delta)-\ell}
\\
&\leq
 \frac{1}{(K_1 \sigma)^{2\log(1/\delta)}} \sum_{\ell=0}^{2\log(1/\delta)} (K_4 \ell k)^{\ell k/2} (K_4 k)^{k(2\log(1/\delta)-\ell)/2}
 \\  
 &\leq
 \frac{1}{(K_1 \sigma)^{2\log(1/\delta)}}
 2\log(1/\delta) (2K_4\log(1/\delta)k)^{\log(1/\delta)k}
\end{align*}

This, together with Markov's inequality implies that 
\[
\pr
\left[
\left\lvert\frac{1}{|S|}\sum_{\x \in S} \x^{\alpha}
        -
        \ex_{\x \sim \Dtgt} \left[\x^{\alpha}\right]
       \right\rvert > \frac{\tau}{d^k} \cdot d^{-\tilde{O}(k)}
\right]
\leq
\left( 
\frac{d^{\tilde{O}(k)} (3K_4 k\log(1/\delta))^{k/2}}{K_1 \sigma |S_{|T}| \tau }
\right)^{2\log(1/\delta)}
\]
Now, recall that the tester $\testertwo$ in step (1) accepted, we have $|S_{|T}| \geq \frac{1}{C_2 \sigma } |S|$.
Since $S$ is obtained by taking at least $|S| \geq \left( \frac{1}{\tau} \cdot \frac{1}{\sigma } \cdot d^{\frac{1}{\tau^2} \log^{C_5} \left( \frac{1}{\tau} \right)} \cdot 
\left(\log \frac{1}{\delta}\right)^{\frac{1}{\tau^2} \log^{C_5} \left( \frac{1}{\tau} \right)}  \right)^{C_5}$, for sufficiently large $C_5$ we see that the expression above is upper-bounded by $\frac{1}{d^k} \delta$. Taking a union bound over all $\alpha \in \Z_{\geq 0}^d$ with $| \alpha|=k$, we see that with probability at least $1-\delta$ the tester $\testerthree$ accepts, finishing the proof.

\begin{figure}[p]
  \centering
  \includegraphics[width=0.6\textwidth]{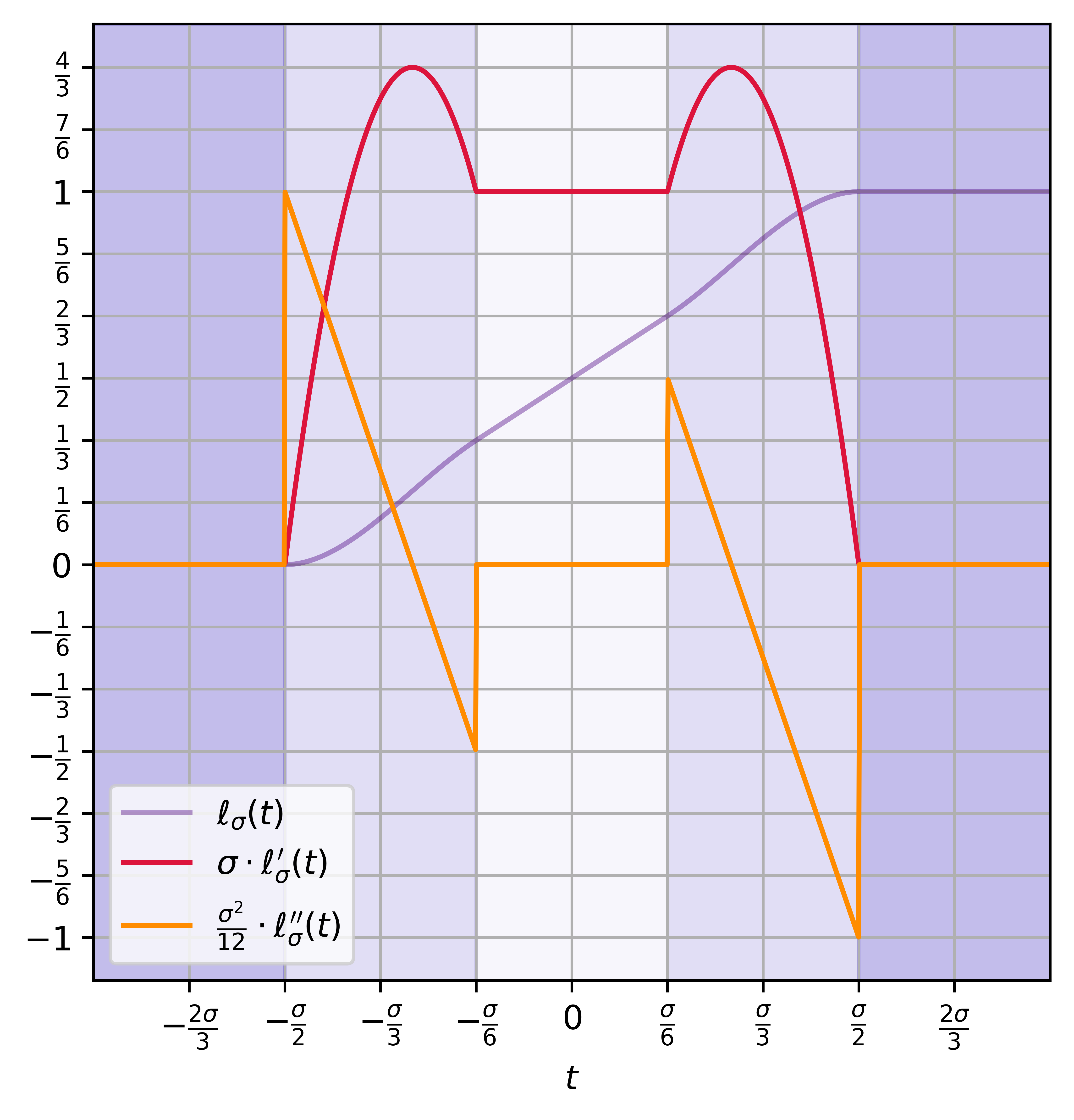}
\caption{Figure illustrating the (normalized) first two derivatives of the function $\ell_\sigma$ used to define the non convex surrogate loss $\L_\sigma$. The normalization is appropriate since $\ell_\sigma'$ and $\ell_\sigma''$ are homogeneous in $1/\sigma$ and $1/\sigma^2$ respectively. In particular, we see that $\ell_\sigma' \leq \Theta(1/\sigma)$ and $|\ell_\sigma''| \leq \Theta(1/\sigma^2)$ everywhere.}\label{fig:activation-derivatives}
\end{figure}

\end{document}